
\documentclass[numbers,webpdf,imaiai]{ima-authoring-template}%

\usepackage[utf8]{inputenc} 
\usepackage[T1]{fontenc}    
\usepackage{hyperref}       
\usepackage{url}            
\usepackage{booktabs}       
\usepackage{amsfonts}       
\usepackage{nicefrac}       
\usepackage{microtype}      
\usepackage{xcolor}         
\usepackage{subcaption}
\usepackage{ulem}
\usepackage{amsmath}
\usepackage{amssymb}
\usepackage{mathtools}
\usepackage{amsthm}
\usepackage{comment}
\usepackage{graphicx}

\usepackage{wrapfig}

\newcommand{\AdaBatchGrad}{{\texttt{AdaBatchGrad}}}
\newcommand{\AdaGrad}{AdaGrad}

\usepackage[textsize=tiny]{todonotes}

\newcommand{\subf}[2]{%
  {\small\begin{tabular}[t]{@{}c@{}}
  #1\\#2
  \end{tabular}}%
}

\newcommand{\lp}{\left[}
\newcommand{\rp}{\right]}
\newcommand{\lb}{\left\lbrace}
\newcommand{\rb}{\right\rbrace}
\newcommand{\la}{\left\langle}
\newcommand{\ra}{\right\rangle}

\newcommand{\R}{\mathbb{R}}
\newcommand{\e}{\varepsilon}

\newcommand{\E}{\mathbb{E}}

\newcommand{\w}{{\bf w}}
\newcommand{\x}{{\bf w}}
\newcommand{\n}{{d}}

\theoremstyle{thmstyletwo}%
\newtheorem{theorem}{Theorem}

\newtheorem{proposition}[theorem]{Proposition}%

\newtheorem{corollary}{Corollary}%
\newtheorem{lemma}{Lemma}%

\newtheorem{definition}{Definition}

\numberwithin{equation}{section}

\begin{document}

\firstpage{1}

\title[\AdaBatchGrad]{\AdaBatchGrad: Combining Adaptive Batch Size and Adaptive Step Size}

\author{Petr Ostroukhov \ORCID{0009-0005-5544-7461}*
\address{\orgname{Mohamed bin Zayed University of Artificial Intelligence}, \orgaddress{\state{Abu-Dhabi}, \country{United Arab Emirates}}}
\address{\orgname{Moscow Institute of Physics and Technology}, \orgaddress{\state{Dolgoprudny}, \country{Russia}}}
\address{\orgname{Institute for Information Transmission Problems RAS}, \orgaddress{\state{Moscow}, \country{Russia}}}}
\author{Aigerim Zhumabayeva \ORCID{0000-0002-5685-0739}
\address{\orgname{Mohamed bin Zayed University of Artificial Intelligence}, \orgaddress{\state{Abu-Dhabi}, \country{United Arab Emirates}}}}
\author{Chulu Xiang \ORCID{0009-0002-2962-9585}
\address{\orgname{Mohamed bin Zayed University of Artificial Intelligence}, \orgaddress{\state{Abu-Dhabi}, \country{United Arab Emirates}}}}
\author{Alexander Gasnikov \ORCID{0000-0002-7386-039X}
\address{\orgname{Innopolis University}, \orgaddress{\state{Innopolis}, \country{Russia}}}
\address{\orgname{Moscow Institute of Physics and Technology}, \orgaddress{\state{Dolgoprudny}, \country{Russia}}}
\address{\orgname{Skoltech}, \orgaddress{\state{Moscow}, \country{Russia}}}}
\author{Martin Tak\'a\v{c} \ORCID{0000-0001-7455-2025}
\address{\orgname{Mohamed bin Zayed University of Artificial Intelligence}, \orgaddress{\state{Abu-Dhabi}, \country{United Arab Emirates}}}}
\author{Dmitry Kamzolov \ORCID{0000-0001-8488-9692}
\address{\orgname{Mohamed bin Zayed University of Artificial Intelligence}, \orgaddress{\state{Abu-Dhabi}, \country{United Arab Emirates}}}}
\authormark{Petr Ostroukhov et al.}

\corresp[*]{Corresponding author: \href{email:email-id.com}{postroukhov12@gmail.com}}



\abstract{
This paper presents a novel adaptation of the Stochastic Gradient Descent (SGD), termed \AdaBatchGrad. 
This modification seamlessly integrates an adaptive step size with an adjustable batch size. 
An increase in batch size and a decrease in step size are well-known techniques to ``tighten'' the area of convergence of SGD and decrease its variance.
A range of studies by R. Byrd and J. Nocedal introduced various testing techniques to assess the quality of mini-batch gradient approximations and choose the appropriate batch sizes at every step.
Methods that utilized exact tests were observed to converge within $O(LR^2/\varepsilon)$ iterations. 
Conversely, inexact test implementations sometimes resulted in non-convergence and erratic performance. 
To address these challenges, \AdaBatchGrad\ incorporates both adaptive batch and step sizes, enhancing the method's robustness and stability. For exact tests, our approach converges in $O(LR^2/\varepsilon)$ iterations, analogous to standard gradient descent. For inexact tests, it achieves convergence in 
$O(\max\lbrace LR^2/\varepsilon, \sigma^2 R^2/\varepsilon^2 \rbrace )$ iterations. This makes \AdaBatchGrad\ markedly more robust and computationally efficient relative to prevailing methods.
To substantiate the efficacy of our method, we experimentally show, how the introduction of adaptive step size and adaptive batch size gradually improves the performance of regular SGD.
The results imply that \AdaBatchGrad\ surpasses alternative methods, especially when applied to inexact tests.
}

\keywords{adaptive batch size; adaptive step size; AdaGrad; stochastic gradient descent.}


\maketitle

\section{Introduction}
    In this work, we focus on the next stochastic optimization problem 
    \begin{equation}\label{eq:convex_problem}
        \min_{\w \in \R^d} \left\{ f(\w) \equiv \E_{\xi} [f(\w, \xi)] \right\},
    \end{equation}
    where $f: \R^d \to \R$ is a (non-)convex, $L$-smooth function:
    \begin{equation}\label{eq:smoothness}
        \|\nabla f(\w) - \nabla f(\w')\| \le L \|\w - \w'\|, \quad \forall \x, \w' \in \R^\n.
    \end{equation}
     Problem \eqref{eq:convex_problem} is one of the most popular problems in machine learning and statistics, also sometimes called \textit{risk minimization}. There are two popular approaches in optimization how to solve this problem: stochastic approximation (SA) and sample average approximation (SAA). 
     
     In SAA regime, we approximate the expectation $f(w)$ by the average of functions generated by a fixed number of samples $\xi_i$ for $i \in [1, \ldots, N]$: 
     \begin{equation}\label{eq:erm}
        \min_{\w \in \R^d} \left\{ \hat{f}(\w) \equiv \frac{1}{N}\sum\limits_{i=1}^{N} f(\w, \xi_i)] \right\}.
    \end{equation}
    Problem \eqref{eq:erm} is also referred to as \textit{empirical risk minimization}. Problem \eqref{eq:erm} is a deterministic sum-type optimization problem and could be solved by various optimization methods including variance-reduction techniques. However, even the exact solution of problem \eqref{eq:erm} would give only an approximate solution to the original risk minimization problem \eqref{eq:convex_problem}, where the quality of approximation depends on the number of samples $K$.
    
    In our paper, we focus on the stochastic approximation (SA) approach \cite{robbins1951stochastic,nemirovski1983problem, polyak1990new, polyak1992acceleration, nemirovski2009robust,lan2012optimal}. It is based on the access to stochastic oracle $\w \mapsto \{f(\w, \xi), \nabla f(\x, \xi) \}$ that returns a stochastic approximation of the function value $f(\x, \xi)$ and its gradient $\nabla f(\x, \xi)$ for the given point $\w$ . The $\lb \xi_i\rb_{i\geq 1}$ is a sequence of i.i.d. random variables, which are independent of the point $\w$. In machine learning, the randomness $\xi_i$ is generated from the data points $(x_i,y_i)$ leading to $f(\w; x_i, y_i)$. 
    
    We assume, that the gradient approximation in fixed point $\w$ is unbiased:
    \begin{equation}\label{eq:unbiasedness}
        \E [\nabla f(\w, \xi) ] = \nabla f(\w), \quad \forall \w \in \R^\n.
    \end{equation}
    Also, we use a rather standard assumption on bounded variance $\sigma$:
    \begin{equation}\label{eq:boundness of variance}
        \E \left[ \|\nabla f(\w, \xi) - \nabla f(\w) \|^2 \right] \le \sigma^2, \qquad \forall \w \in \R^\n.
    \end{equation}
    To solve the problem \eqref{eq:convex_problem} in SA regime, one can use Stochastic Gradient Descent (SGD)  
        \begin{equation}\label{eq:sgd_basic}
            \w_{t + 1} = \w_t - \eta_t \nabla f(\w_t, \xi_t),
        \end{equation} 
    where $\eta_t$ is a learning rate. This method with optimal step size has the next convergence rate for convex, $L$-smooth function \eqref{eq:smoothness}:
    \begin{equation}
        \E \lp f(\w_{T}) - f^{\ast}\rp \leq \mathcal{O}(1) \lp \frac{L}{T} + \frac{\sigma}{\sqrt{T}} \rp, 
    \end{equation}
where $\mathcal{O}(1)$ includes absolute and distance bounds. So, to reach $\e$-accuracy, such that $\E \lp f(\w_{T}) - f^{\ast}\rp \leq \e$, we need to calculate $K = \mathcal{O}(1)\max \lb \tfrac{L}{\e};\tfrac{\sigma^2}{\e^2}\rb$ stochastic gradients and the same amount of SGD steps $T = \mathcal{O}(1)\max \lb \tfrac{L}{\e};\tfrac{\sigma^2}{\e^2}\rb$.

To decrease the number of steps, we operate with mini-batch approximations of gradient, when we query the stochastic gradient multiple times and then use its mean as new gradient approximation:
\begin{equation}\label{eq:mini-batch approximation}
        \nabla f_{S_t}(\x_t) \equiv \frac{1}{|S_t|} \sum\limits_{i = 1}^{|S_t|} \nabla f_i(\x_t),
\end{equation}
where $S_t = \lb \xi_1, \ldots, \xi_{|S_t|} \rb$ denote a set of generated samples $\xi_i$ on step $t$. For simplicity, we denote $f_i(\w_t) \equiv f(\w_t,\xi_{i_t}) \equiv f(\w_t; x_{i_t}, y_{i_t})$.
Batched Stochastic Gradient Descent (SGD) algorithm
        \begin{equation}\label{eq:sgd equation}
            \x_{t + 1} = \x_t - \eta_t \nabla f_{S_t}(\x_t)
        \end{equation} 
with the fixed batch size $|S_t|=m$ and optimal step size converges with the next rate
    \begin{equation}
        \E \lp f(\w_{T}) - f^{\ast}\rp \leq \mathcal{O}(1) \lp \frac{L}{T} + \frac{\sigma}{\sqrt{m T}} \rp.
    \end{equation}
So, the total number of computed stochastic gradients equals to $K=mT=\max \lb \tfrac{m L}{\e}; \tfrac{\sigma^2}{\e^2} \rb$. However, the total number number of steps could be less and equals to $T=\max \lb \tfrac{L}{\e}; \tfrac{\sigma^2}{m \e^2} \rb$. In the case of parallel computations, it is better to compute multiple gradients for a small number of iterations than to compute $1$ gradient for a big number of iterations. Therefore, we are interested in finding a method that minimizes the number of steps $T$ required to reach $\varepsilon$-accuracy. The optimal batch size in that case is $$|S| = m = \frac{\sigma^2}{L \varepsilon}.$$
    
    From a practical perspective, SGD has two parameters, as shown in \eqref{eq:mini-batch approximation} and \eqref{eq:sgd equation}: the step size $\eta_t$ 
    and the batch size $|S_t|$. We do not know the exact $L$ and $\sigma$ values to find the optimal step size and optimal batch size. Hence, we have to tune these two parameters to decrease the variance of the stochastic gradient and guide us toward a better solution.
    
    To better motivate the need of such tuning from theoretical perspective, consider the convergence of batched SGD for some  not-optimal fixed step size $\eta$ and batch size $|S_t|=m$ for smooth  convex objective:
$$
    \E[f(\x_T)-f^{\ast}]
    \leq
    O(1) \lp \frac{1}{T\eta} + \frac{\sigma^2 \eta}{m}\rp,
    $$
    
    We can see that the second term in the right-hand side does not depend on $T$, and it represents some area of convergence, that we can not pass through with fixed parameters $\eta$ and $|S|$.
    If we decrease $\eta$, on the one hand, we decrease this convergence area, on the other hand, we also decrease the speed of convergence to this area, since $\tfrac{1}{T\eta}$ becomes bigger.
    On the other hand, if we increase $m$, the convergence area becomes smaller with the same convergence rate, but one iteration of the algorithm becomes more computationnaly expensive since we need to calculate more gradients.
    Thus, it makes sense to change these parameters as algorithm gets closer to the solution. A simple approach to decrease step size is to use a scheduler, that decreases the step size over time. However, it even more increase the number of parameters to tune. Thus, it hard in practice to find the optimal combination of $\eta_t$ and $|S_t|$. This lead us to the idea of adaptive step size and adaptive batch size. 

    \begin{center}
    \textit{Could one create a stochastic optimization method that is adaptive in both step size and batch size?}
    \end{center}
    
    One of the most popular adaptive techniques is \AdaGrad  \ \cite{streeter2010less,duchi2011adaptive}, where the step size choice is based on the history of oracle responses.
    In this paper, we focus on a modification of \AdaGrad \  from the paper \cite{li2019convergence} with the next step size policy
    \begin{equation}\label{eq:global step size}
        \eta_t = \frac{\alpha}{\left( \beta + \sum_{i = 1}^{t - 1} \|\nabla f_{S_i}(\x_i)\|^2 \right)^{\frac{1}{2} + \tau}},
    \end{equation}
    where $\alpha, \beta > 0$, $ 0 \le \tau < \frac{1}{2}$. This formula has two differences from the regular \AdaGrad\ step from \cite{duchi2011adaptive,rakhlin2013optimization, orabona2018scale}. 
    Two important elements of this modification are worth noting. First, an additional parameter $\tau$ gives us more flexibility in selecting the step size and provides us with almost sure convergence of the SGD.  
    Second, the current step's stochastic gradient isn't considered in the definition of the step size, because it introduces bias to the update direction. 
    For a more detailed explanation, see \cite[Section 4]{li2019convergence}.

    For the adaptive batch size strategy, we aim to have such batch size $|S_t|$ that the batched gradient $\nabla f_{S_t}(\w_t)$ would be close enough to the true gradient $\nabla f(\w_t)$ to make the efficient SGD step.
    In \cite{byrd2012sample,hashemi2014adaptive,cartis2018global,bollapragada2018adaptive} authors use specific tests, that check if the approximation obtained with the chosen batch size is close to the true gradient.
    ``Closeness to true gradient'' of the approximated gradient may be expressed either through its direction or through its length.
    In \cite{byrd2012sample} the authors propose so-called \textit{norm test} (Definition \ref{def:norm test}), that controls both of these parameters.
    In the paper \cite{bollapragada2018adaptive}, the authors propose \textit{inner product test} (Definition \ref{def:inner product test}) and \textit{orthogonality test} (Definition \ref{def:orthogonality test}), that only control the direction of the approximated gradient.
    Thus, these tests give more freedom to the procedure of batch size adjustment.
    For more details see \cite{bollapragada2018adaptive}.

    \begin{definition}\label{def:norm test}
        Let $\w \in \R^d$ be some fixed point, $S=\{\xi_1, ..., \xi_{|S|}\}$ be some batch of samples $\xi_i$, $\nabla f_{S}(\x)$ be defined in \eqref{eq:mini-batch approximation}.
        Then for some $\omega > 0$, the following inequality is called \textit{norm test}
        \begin{equation}\label{eq:norm test}
            \E \left[\|\nabla f_{S}(\x) - \nabla f(\x) \|^2 \right] \le \omega^2 \|\nabla f(\x)\|^2.
        \end{equation}
    \end{definition}
    
    \begin{definition}\label{def:inner product test}
        Let $\w \in \R^d$ be some fixed point, $S=\{\xi_1, ..., \xi_{|S|}\}$ be some batch of samples $\xi_i$, $\theta > 0$, then the following inequality is called \textit{inner product test}
        \begin{equation*}\label{eq:inner product test true}
            \E \left[ \left(\nabla f_{S}(\x)^{\top} \nabla f(\x) - \|\nabla f(\w) \|^2 \right)^2 \right] \le \theta^2 \|\nabla f(\x)\|^4, 
        \end{equation*}
        which is equivalent (see \cite{bollapragada2018adaptive}) to
        \begin{equation}\label{eq:inner product test}
            \frac{\E \left[ \nabla f_i(\w)^\top \nabla f(\w) - \| \nabla f(\w) \|^2 \right]}{|S|} \le \theta^2 \|\nabla f(\w)\|^4.
        \end{equation}
    \end{definition}

    \begin{definition}\label{def:orthogonality test}
        Let $\w \in \R^d$ be some fixed point, $S=\{\xi_1, ..., \xi_{|S|}\}$ be some batch of samples $\xi_i$, $\nu > 0$ then the following inequality is called \textit{orthogonality test}
        \begin{equation*}
            \E \left[ \nabla f_S(\x) - \frac{\nabla f_S(\x)^{\top} \nabla f(\x)}{\|\nabla f(\x)\|^2} \nabla f(\x) \right] 
            \le \nu^2 \|\nabla f(\x)\|^2,
        \end{equation*}
        which is equivalent (see \cite{bollapragada2018adaptive}) to
        \begin{equation}\label{eq:orthogonality test}
           \frac{\E \left[ \nabla f_i(\x) - \frac{\nabla f_i(\x)^{\top} \nabla f(\x)}{\|\nabla f(\x)\|^2} \nabla f(\x) \right]}{|S|} \le \nu^2 \|\nabla f(\w)\|^2.
        \end{equation}
    \end{definition}

    \subsection{Motivation}\label{sec:motivation}
    Let us suppose that on some step $t$ of procedure \eqref{eq:sgd equation}, we get the batch size $|S_t|$ from \eqref{eq:inner product test} and \eqref{eq:orthogonality test} for point $\w_t$, then
    \begin{equation}\label{eq:new batch size equation}
        |S_t| = \max \left(
            \frac{\E [\nabla f_{i}(\x_t)^{\top} \nabla f(\x_t)]}{\theta^2 \|\nabla f(\x_t)\|^4},
            \frac{\E\left[ \nabla f_i(\x_t) - \frac{\nabla f_i(\x_t)^{\top} \nabla f(\x_t)}{\|\nabla f(\x_t)\|^2} \nabla f(\x_t) \right]}{\nu^2 \|\nabla f(\x_t)\|^2}
        \right).
    \end{equation}
    The problem here is that to perform these tests you need to know the true gradient, which sometimes is impossible to compute, especially in setup \eqref{eq:convex_problem}.
    To tackle this issue, it was suggested to use approximated versions of these tests \eqref{eq:inner product test}, \eqref{eq:orthogonality test}, where the mini-batch approximation is used instead of the true gradient:
    \begin{align}
        \frac{1}{|S_t| - 1} \frac{\sum_{i \in S_t} 
            \left(
                \nabla f_i(\x_t)^{\top} \nabla f_{S_t}(\x_t) - \|\nabla f_{S_t}(\x_t)\|^2 
            \right)^2}
            {|S_t|}
        &\le \theta^2 \|\nabla f_{S_t}(\x_t)\|^4, \label{eq:approximated inner product test} \\
        \frac{1}{|S_t| - 1} 
            \frac{\sum_{i \in S_t}
                \left\|
                    \nabla f_i(\x_t) - \frac{\nabla f_i(\x_t)^{\top} \nabla f_{S_t}(\x_t)}{\|\nabla f_{S_t}(\x_t)\|^2} \nabla f_{S_t}(\x_t)
                \right\|^2}
            {|S_t|}
        &\le \nu^2 \|\nabla f_{S_t}(\x_t)\|^2. \label{eq:approximated orthogonality test}
    \end{align}      
    As a result, the approximation of \eqref{eq:new batch size equation} looks as follows:
    \begin{equation}\label{eq:approximated new batch size equation}
        |\hat S_t| = \max \left( 
            \frac{ \sum_{i \in S_t} 
                \left(
                    \nabla f_i(\x_t)^{\top} \nabla f_{S_t}(\x_t) - \|\nabla f_{S_t}(\x_t)\|^2 
                \right)^2}
            {(|S_t| - 1)\theta^2 \|\nabla f_{S_t}(\x_t)\|^4},
            \frac{\sum_{i \in S_t}
                \left\|
                    \nabla f_i(\x_t) - \frac{\nabla f_i(\x_t)^{\top} \nabla f_{S_t}(\x_t)}{\|\nabla f_{S_t}(\x_t)\|^2} \nabla f_{S_t}(\x_t)
                \right\|^2}
            {(|S_t| - 1)\nu^2 \|\nabla f_{S_t}(\x_t)\|^2}
        \right).
    \end{equation}
    However, the theoretical proofs only exist for the original exact versions of these tests \eqref{eq:inner product test}, \eqref{eq:orthogonality test} with true gradient.
    This introduces some \textit{test inconsistency}, which means that the method can get into situation, where the approximated test is passed (that we can detect), but fails the exact one (that we cannot detect).
    Then the batch size does not increase, because the method ``thinks'' that we have a good approximation of the true gradient. 
    Thus, the algorithm can get a bad approximation of the gradient, and, as a result, the algorithm performs a step in the direction, that is different from the direction of anti-gradient.

    The situation becomes even worse if the method has a big step size. 
    In \cite{bollapragada2018adaptive}, line-search was used to adjust the step size of their algorithm.
    Unfortunately, in stochastic case, this procedure may cause the issue of the big step size in the wrong direction.
    The main problem here is that line-search tries on each step to increase the step size towards the direction of the minimum of the function it gets as input. 
    But in the stochastic case, we operate with stochastic approximations of objective functions and their gradients. 
    Thus, in case line-search gets as input a bad stochastic approximation of function and its gradient, the method could perform a huge step in the wrong direction and diverge.

    \subsubsection{Test inconsistency example}\label{sec:example}

        \begin{figure}
            \centering
            \includegraphics[width=\textwidth]{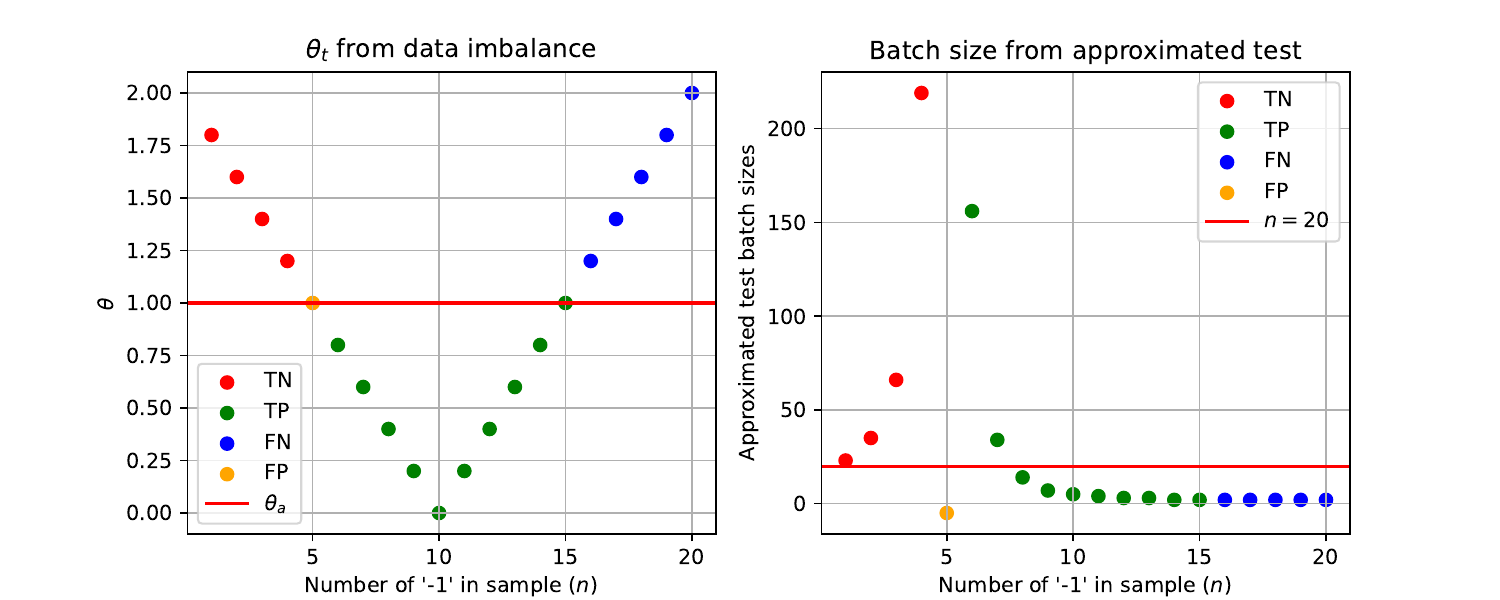}
            \caption{Inner product test inconsistency}
            \label{fig:inner product test test inconsistency}
        \end{figure}
        
        Let us consider the following example. We define  
        \begin{gather}
            f_i(\w) = \frac{1}{2} (\w - \xi_i)^2,\ \w \in \R,\ \xi \in \{-1, 1\}. \label{eq:test inconsistency problem}
        \end{gather}
        The number of samples $\xi_i = -1$ in our sample $S$ is denoted by $n$. Then, we can explicitly calculate
        \begin{gather*}
            \nabla f(\w) = \w,\\
            \nabla f_i(\w) = \w - \xi_i, \\
            \nabla f_S(\w) = \w - 1 + \frac{2n}{|S|}.
        \end{gather*}
    
        Let $\w = 0.5$, $|S| = 20$. 
        Denote $\theta_a$ -- parameter, that we use in approximated inner-product test and choose $\theta_a = 1$.
        In Figure \ref{fig:inner product test test inconsistency}, we show the mentioned test inconsistency for a sample of size $|S|$.
        The $x$ axis corresponds to $n$ on both plots.
    
        We start with the left plot.
        This plot is related only to the exact inner product test \eqref{eq:inner product test}.
        Here we show, which minimal $\theta$ we need to choose for the exact test to work for particular $n$.
        It means that for every $\theta$ higher than a particular point exact test will pass for a particular $n$.
        With the red line on this plot, we depict our $\theta_a$.
        As you can see, we pass the test for $n \in [5, 15]$.
        We use different colors for points to show the cases, when an exact test with parameter $\theta_a$ gives true-positive (TP), true-negative (TN), false-positive (FP), or false-negative (FN) results for particular $n$.
        The meaning of these results is as follows: for chosen $n$ and corresponding $S$
        \begin{itemize}
            \item \textbf{TN}: $\nabla f_S(\w)^{\top} \nabla f(\w) \le 0$ and $\theta_a$ with chosen batch fails exact test \eqref{eq:inner product test};
            \item \textbf{TP}: $\nabla f_S(\w)^{\top} \nabla f(\w) > 0$ and $\theta_a$ with chosen batch passes exact test \eqref{eq:inner product test};
            \item \textbf{FN}: $\nabla f_S(\w)^{\top} \nabla f(\w) > 0$ and $\theta_a$ with chosen batch fails exact test \eqref{eq:inner product test};
            \item \textbf{FP}: $\nabla f_S(\w)^{\top} \nabla f(\w) \le 0$ and $\theta_a$ with chosen batch passes exact test \eqref{eq:inner product test}.
        \end{itemize}
    
        Next, let us move to the second picture of Figure \ref{fig:inner product test test inconsistency}.
        This plot is related to the approximated inner-product test \eqref{eq:approximated inner product test}.
        Every point here shows the batch size, which is given to us by the approximated test with $\theta_a$ for corresponding $n$.
        The color of point for every $n$ is taken from the left picture.
        For the orange point approximated test gives batch size equal to infinity, so we plot it with a value $-5$.
    
        Eventually, this plot shows us, that behaviors of exact and approximated inner-product tests are very different.
        For example, if we look at the green points on the right picture of Figure \ref{fig:inner product test test inconsistency}, we see that the approximated test tells us that we need to increase batch size, although it is TP and this batch passes the exact test. To summarize, Figure \ref{fig:inner product test test inconsistency} shows that the approximated tests could return very different results from the exact in both false-positive and false-negative ways. Hence, the method should be robust to this FP and FN test failures.

    \subsection{Contribution}
        Our main contributions can be summarized in the following:
        \begin{itemize}
            \item We propose \AdaBatchGrad\ method, which is adaptive both in step size and in batch size.
            \item We claim, that such modification of SGD allows to escape \textit{test inconsistency} situation, that we mentioned earlier.
            \item We show, that if our method on each step passes the exact norm test \eqref{eq:norm test} (therefore, passes exact inner-product test \eqref{eq:inner product test} and orthogonality test \eqref{eq:orthogonality test}), we can get convergence rate $O(1/\e)$ regardless of noise intensity.
            \item We experimentally show, that by introducing adaptivity in step size and adaptivity in batch size we can gradually improve the overall performance of SGD.
            Moreover, we show, that adaptivity in batch size and adaptivity in step size balance each other.
            This means, that by slightly increasing batch size and slightly decreasing step size, we can achieve similar results as if we significantly increase batch size or significantly decrease step size.
        \end{itemize}

    \subsection{Structure}
        The structure of the paper is as follows.
        In Section \ref{sec:related work} we provide a literature review.
        In particular, in Section \ref{sec:adaptive step size} we mention papers, connected with adaptive step sizes in SGD, and in Section \ref{sec:adaptive batch size} -- with adaptive batch sizes.
        In Section \ref{sec:results} we present our main results: Section \ref{sec:background} provides some background information, that we need before introducing \AdaBatchGrad; Section \ref{sec:adabatchgrad} introduces \AdaBatchGrad along with its convergence rates; Section \ref{sec:adabatchgrad with norm test} presents our theoretical convergence analysis of \AdaBatchGrad\ in case when it satisfies norm test \eqref{eq:norm test} on each iteration.
        In Section \ref{sec:numericals} we provide our experimental results. In particular, we describe, how SGD benefits from changing step size, batch size, and their adaptivity.

\section{Related work}\label{sec:related work}
    Stochastic gradient descent \cite{robbins1951stochastic,nemirovski1983problem,polyak1990new,polyak1992acceleration, shalev2007pegasos,nemirovski2009robust,lan2012optimal,hardt2016train, gorbunov2020unified, gorbunov2020stochastic,ilandarideva2023accelerated} has become a standard method for lots of modern machine learning problems.
    In the paper \cite{gower2019sgd}, the authors provide a significant analysis of SGD, with different sampling strategies. 
    The authors provide explicit formulas for optimal batch sizes for different sampling strategies and show when the step size choice strategy should switch from constant to decrease.
    Additionally, the authors show the dependence between optimal step size and optimal batch size, and how it influences the overall convergence.
    \subsection{Adaptive step size}\label{sec:adaptive step size}
        In the smooth deterministic case, a standard way to choose step size is $\gamma=1/L$ (see \cite{nesterov2018lectures}).
        But in practice, we usually do not know smoothness constant $L$.
        Thus, one way to adjust step size is by Armijo-type line search, where on each iteration the algorithm checks, that its upper quadratic approximation is correct.
        The extension of this technique to stochastic cases was proposed in \cite{dvinskikh2019adaptive}.

        Another approach is based on the idea of taking $\gamma=1/\|\nabla f(x)\|$, which came from non-smooth optimization \cite{polyak1987introduction}.
        As a result, in \cite{streeter2010less,duchi2011adaptive} authors independently propose global and coordinate-wise step size strategies for stochastic optimization, where step size depends on the inverted sum of previous gradients.
        
        However, in these papers, the authors assume that the objective domain should be convex and bounded, which can be quite limiting, as claimed in \cite{li2019convergence}.
        Later, \cite{li2019convergence} proposed a modified version of \AdaGrad \  both for global and for coordinate-wise step size adjustment.
        In this article, the authors provide almost-surely convergence of SGD with such step size strategy both for convex and non-convex setups.
        However, the authors assume knowledge of smoothness constant to choose step size parameters correctly.
        Contrastingly, the work \cite{ward2020adagrad} does not have this limitation, although the authors assume that the objective's full gradient is uniformly bounded.
        In \cite{chen2019convergence}, the paper studied the sufficient conditions for the convergence of Adam-type \cite{kingma2014adam} algorithms in non-convex settings.
        There is a series of works on parameter-free algorithms \cite{carmon2022making, defazio2023learning}, 
        where authors propose adaptive step size adjustment strategies without any hyperparameters. 
        However, since these authors do not make any assumptions on noise variance, the complexity of their algorithms is worse than  $O(1/ \e^2)$.
        Additionally, there are some works on stochastic Polyak step size \cite{loizou2021stochastic,gower2021stochastic,li2022sp2,jiang2023adaptive, abdukhakimov2023stochastic,abdukhakimov2023sania}.

    \subsection{Adaptive batch size}\label{sec:adaptive batch size} 
        In \cite{smith2018don}, the authors provide empirical evidence for the fact that by increasing batch size we can get the same performance gains as from decreasing step size. 
        However, it's crucial not to overdo this.
        The most obvious reason is that an increase in batch size leads to an increase in the computational complexity of a single operation, although this problem can be tackled via parallelization.
        Additionally, in \cite{lecun2002efficient,keskar2016large,dinh2017sharp,shallue2019measuring}, the authors study the problem of a possible increase of generalization error of deep learning model when using large batch sizes.
        The authors of \cite{keskar2016large} conclude, that models, trained with larger batch sizes, converge to sharp minima, which causes poor generalization.
        However, \cite{dinh2017sharp,shallue2019measuring} argue with this claim.
        
        Additionally, in \cite{shallue2019measuring,zhang2019algorithmic}, the authors show that the positive impact on the optimization process of increase of batch size diminishes after some critical batch size.
        
        In \cite{gower2019sgd}, the authors provide a theoretical connection between step size and batch size.
        Later, in \cite{alfarra2020adaptive} authors use the equation for the optimal batch size from \cite{gower2019sgd} on every step of the SGD.
        However, since the optimal value assumes knowledge of $\|\nabla f_i(\w^{\ast})\|$, the authors use approximations of these values, which makes this batch size suboptimal.
        Another way of making the procedure of batch size adjustment adaptive is to use specific tests at each iteration to check if the current batch size is sufficient.
        Among the first works
        to explore this approach was \cite{byrd2012sample}. It introduced the \textit{norm test} \eqref{eq:norm test}, which ensures the direction and norm of a stochastic gradient approximation don't diverge too much from the full gradient.
        Later, in \cite{bollapragada2018adaptive}, the authors improved this idea and proposed two new tests. The first, the \textit{inner-product test} \eqref{eq:inner product test}, focuses on ensuring the direction of the gradient approximation is the descent direction with high probability. The second, the \textit{orthogonality test} \eqref{eq:orthogonality test}, checks that the variance of sampled gradients, orthogonal to the full gradient, doesn't increase too much.
         This idea was later expanded to constrained and composite optimization in \cite{xie2020constrained}.

         In the paper \cite{dvinskikh2019adaptive}, one can find stochastic SGD and accelerated stochastic SGD, that are adaptive both in step size and in batch size simultaneously.
         The idea behind the approach of \cite{dvinskikh2019adaptive} is the backtracking procedure of Yu.~Nesterov from \cite{nesterov2015universal} generalized on stochastic setup.
         This procedure allows authors to make both step size and batch size adaptive.
         The theoretical and practical performance of this approach is quite limited due to the line-search procedure, that gives multiplicative logarithmic factor to overall complexity.
         Moreover, authors assume the knowledge of the upper bound of $\sigma^2$ in \eqref{eq:boundness of variance}.

\section{Main Results}\label{sec:results}
    \subsection{Background information}\label{sec:background}
    
    \begin{algorithm}[t]
        \caption{Backtracking Line Search [Algorithm 2 in \cite{bollapragada2018adaptive}]}\label{alg:line search}
        \begin{algorithmic}[1]
            \Require $S_t, L_{t-1}, \alpha > 1$.
            \State $a_t = (Var_{i \in S_t} [\nabla f_i(\w_t)])/(|S_t| \|\nabla f_{S_t}(\w_t)\|^2) + 1$, where $Var_{i \in S_t}$ is a sampled variance over $S_t$.
            \State $\zeta_{t} = \max(1, 2/a_t)$
            \State $L_t = L_{t - 1} / \zeta_t$
            \State $f_{new} = f_{S_t}\left(\w_t - \frac{1}{L_t} \nabla f_{S_t}(\w_t)\right)$
            \While{$f_{new} > f_{S_t}(\w_t) - \frac{1}{2 L_t} \|\nabla f_{S_t}(\w_t)\|^2$}
                \State $L_t = \alpha L_{t - 1}$
                \State $f_{new} = f_{S_t}\left( \w_t - \frac{1}{L_t} \nabla f_{S_t}(\w_t) \right)$
            \EndWhile
            \State \Return $L_t$
        \end{algorithmic}
    \end{algorithm}

    \begin{algorithm}[t]
        \caption{Adaptive Sampling Method [Algorithm 3 in \cite{bollapragada2018adaptive}]}\label{alg:adaptive sampling}
        \begin{algorithmic}[1]
            \Require $\w_1$, $S_1$, $\theta > 0$, $\nu > 0$
            \State $t = 1$
            \While{convergence criterion is not satisfied}
                \State Compute $L_t$ with Algorithm \ref{alg:line search}
                \State $\eta_t = 1 / L_t$
                \State Compute $\nabla f_{S_t}(\w_t)$
                \State $\w_{t + 1} = \w_t - \eta_t \nabla f_{S_t}(\w_t)$
                \State $t = t + 1$
                \State $|S_t| = |S_{t-1}|$, choose new sample $S_t$
                \If{not \eqref{eq:approximated inner product test} or not \eqref{eq:approximated orthogonality test}}
                    \State Compute $|S_t|$ using \eqref{eq:approximated new batch size equation} and choose a new sample $S_t$.
                \EndIf
            \EndWhile
            \State \Return $\w_t$
        \end{algorithmic}
    \end{algorithm}

    Our proposed method is based on Adaptive Sampling Method from \cite{bollapragada2018adaptive}.
    We provide its basic version in Algorithm \ref{alg:adaptive sampling} without running average heuristic.
    Additionally, we do not use the running average heuristic, when batch size remains constant for certain number of iterations.
    Next, we provide theoretical convergence results from \cite{bollapragada2018adaptive}.
    \begin{theorem}[Theorem 3.3 in \cite{bollapragada2018adaptive}]\label{th:bollaprogada convex}
        For problem \eqref{eq:convex_problem} with convex, differentiable, smooth \eqref{eq:smoothness} function $f(\w)$,
        let $\{x_k\}$ be generated by \eqref{eq:sgd equation}, where $|S_k|$ is chosen such that exact inner-product test \eqref{eq:inner product test} and exact orthogonality test \eqref{eq:orthogonality test} are satisfied at each iteration for any given $\theta > 0$ and $\nu > 0$.
        Then, if the step size satisfies 
        \[
            \eta_t = \alpha < \frac{1}{(1 + \theta^2 + \nu^2)L},
        \]
        we have
        \begin{equation}\label{eq:bollaprogada convex}
            \min_{1 \le t \le T} \E [f(\x_t)] - f^{\ast} \le \frac{1}{2 \alpha c T}.
        \end{equation}
    \end{theorem}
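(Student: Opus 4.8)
The plan is to track the squared distance to a minimizer $\x^\ast$ and turn it into a recursion on the optimality gap, in the spirit of the classical convex SGD analysis. First I would expand
\[
\E\!\left[\|\x_{t+1} - \x^\ast\|^2 \mid \x_t\right] = \|\x_t - \x^\ast\|^2 - 2\alpha\,\langle \nabla f(\x_t), \x_t - \x^\ast\rangle + \alpha^2\,\E\!\left[\|\nabla f_{S_t}(\x_t)\|^2 \mid \x_t\right],
\]
using the update \eqref{eq:sgd equation} with $\eta_t = \alpha$ together with unbiasedness \eqref{eq:unbiasedness}, which removes the noise from the cross term and leaves the true gradient there. Convexity of $f$ then gives $\langle \nabla f(\x_t), \x_t - \x^\ast\rangle \ge f(\x_t) - f^\ast$, so everything hinges on controlling the second moment $\E[\|\nabla f_{S_t}(\x_t)\|^2 \mid \x_t]$.

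That second-moment bound is the crux, and it is exactly what the two tests deliver. I would decompose the batch gradient into its component parallel to the true gradient and its orthogonal complement via the Pythagorean identity
\[
\|\nabla f_{S_t}(\x_t)\|^2 = \frac{\left(\nabla f_{S_t}(\x_t)^\top \nabla f(\x_t)\right)^2}{\|\nabla f(\x_t)\|^2} + \left\|\nabla f_{S_t}(\x_t) - \frac{\nabla f_{S_t}(\x_t)^\top \nabla f(\x_t)}{\|\nabla f(\x_t)\|^2}\,\nabla f(\x_t)\right\|^2.
\]
For the parallel part, since $\E[\nabla f_{S_t}(\x_t)^\top \nabla f(\x_t)] = \|\nabla f(\x_t)\|^2$, the inner-product test \eqref{eq:inner product test} is precisely a variance bound $\mathrm{Var}(\nabla f_{S_t}(\x_t)^\top \nabla f(\x_t)) \le \theta^2 \|\nabla f(\x_t)\|^4$, which yields $\E[(\nabla f_{S_t}(\x_t)^\top \nabla f(\x_t))^2]/\|\nabla f(\x_t)\|^2 \le (1+\theta^2)\|\nabla f(\x_t)\|^2$. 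The orthogonality test \eqref{eq:orthogonality test} bounds the orthogonal part directly by $\nu^2\|\nabla f(\x_t)\|^2$. Adding the two gives the clean estimate $\E[\|\nabla f_{S_t}(\x_t)\|^2 \mid \x_t] \le (1+\theta^2+\nu^2)\|\nabla f(\x_t)\|^2$.

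To finish, I would convert $\|\nabla f(\x_t)\|^2$ into an optimality gap through the standard smooth-convex inequality $\|\nabla f(\x_t)\|^2 \le 2L\,(f(\x_t) - f^\ast)$, valid for $L$-smooth convex $f$. Substituting both bounds into the distance recursion and taking total expectation gives
\[
\E\!\left[\|\x_{t+1} - \x^\ast\|^2\right] \le \E\!\left[\|\x_t - \x^\ast\|^2\right] - 2\alpha\left(1 - \alpha L(1+\theta^2+\nu^2)\right)\E\!\left[f(\x_t) - f^\ast\right].
\]
The step-size condition $\alpha < 1/((1+\theta^2+\nu^2)L)$ is exactly what makes the constant $c := 1 - \alpha L(1+\theta^2+\nu^2)$ strictly positive. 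Telescoping over $t = 1,\dots,T$ collapses the distance terms to $\|\x_1 - \x^\ast\|^2$, and bounding the minimum by the average yields $\min_{1\le t\le T}\E[f(\x_t)] - f^\ast \le \|\x_1 - \x^\ast\|^2/(2\alpha c T)$, which is \eqref{eq:bollaprogada convex} once the initial distance is folded into the stated constant.

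The main obstacle is the second-moment estimate: one must recognize that the inner-product and orthogonality tests are engineered to control, respectively, the parallel and orthogonal parts of the batch gradient in the Pythagorean decomposition, and that their combination produces exactly the factor $(1+\theta^2+\nu^2)$ appearing in the step-size threshold. The remaining pieces — convexity, the smooth-convex gradient bound, and telescoping — are routine, but this decomposition is where all the structure of the adaptive sampling scheme enters.
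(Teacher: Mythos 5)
This theorem is imported verbatim (as Theorem 3.3 of \cite{bollapragada2018adaptive}) and the paper gives no proof of its own; your argument correctly reconstructs the original source's proof — the Pythagorean decomposition of $\nabla f_{S_t}(\x_t)$ into components parallel and orthogonal to $\nabla f(\x_t)$, with the inner-product test controlling the variance of the parallel part and the orthogonality test the orthogonal part to yield $\E\left[\|\nabla f_{S_t}(\x_t)\|^2\right] \le (1+\theta^2+\nu^2)\|\nabla f(\x_t)\|^2$, followed by the standard distance recursion, the bound $\|\nabla f(\x_t)\|^2 \le 2L(f(\x_t)-f^{\ast})$ (Lemma \ref{lem:upper bound on gradient norm squared}), and telescoping. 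Your identification $c = 1 - \alpha L(1+\theta^2+\nu^2)$, strictly positive under the stated step-size condition, with $\|\x_1 - \x^{\ast}\|^2$ folded into the constant, is the right reading of the quantity $c$ that the statement as quoted here leaves undefined.
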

    \begin{theorem}[Theorem 3.4 in \cite{bollapragada2018adaptive}]\label{th:bollaprogada non-convex}
        For problem \eqref{eq:convex_problem} with non-convex, differentiable, smooth \eqref{eq:smoothness} function $f(\w)$,
        let $\{x_k\}$ be generated by \eqref{eq:sgd equation}, where $|S_k|$ is chosen such that exact inner-product test \eqref{eq:inner product test} and exact orthogonality test \eqref{eq:orthogonality test} are satisfied at each iteration for any given $\theta > 0$ and $\nu > 0$.
        Then, if the step size satisfies 
        \[
            \eta_t = \alpha \le \frac{1}{(1 + \theta^2 + \nu^2)L},
        \]
        then
        \[
            \lim_{t \to \infty} \E \left[ \| \nabla f(\x_t) \right] = 0.
        \]
        Moreover,
        \begin{equation}\label{eq:bollaprogada non-convex}
            \min_{1 \le t \le T} \E \left[\| \nabla f(\x_t) \|^2 \right] \le \frac{2}{\alpha T}(f(\w_0) - f_{\min}),
        \end{equation}
        where $f_{\min}$ is a lower bound on $f$ in $\R^d$.
    \end{theorem}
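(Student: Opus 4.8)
The plan is to run a standard descent-lemma argument, where the one nonstandard ingredient is the way the exact inner-product and orthogonality tests jointly control the second moment of the batched gradient. First I would invoke $L$-smoothness \eqref{eq:smoothness} in its quadratic-upper-bound form applied to the SGD step \eqref{eq:sgd equation}, giving
$$f(\x_{t+1}) \le f(\x_t) - \eta_t \la \grad f(\x_t), \grad f_{S_t}(\x_t)\ra + \frac{L\eta_t^2}{2}\|\grad f_{S_t}(\x_t)\|^2.$$
Taking the expectation conditional on $\x_t$ over the fresh sample $S_t$ and using unbiasedness \eqref{eq:unbiasedness} collapses the inner-product term to $-\eta_t\|\grad f(\x_t)\|^2$, leaving only $\E[\|\grad f_{S_t}(\x_t)\|^2]$ to be bounded.

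The crux is to show $\E[\|\grad f_{S_t}(\x_t)\|^2] \le (1+\theta^2+\nu^2)\|\grad f(\x_t)\|^2$. I would write the error $e_t = \grad f_{S_t}(\x_t) - \grad f(\x_t)$ and split it, via a Pythagorean decomposition, into the component parallel to $\grad f(\x_t)$ and the component orthogonal to it, so that $\E[\|\grad f_{S_t}(\x_t)\|^2] = \|\grad f(\x_t)\|^2 + \E[\|e_t\|^2]$ and $\E[\|e_t\|^2]$ is the sum of the expected squared parallel and orthogonal parts. The parallel part is exactly $(e_t^\top \grad f(\x_t))^2 / \|\grad f(\x_t)\|^2 = (\grad f_{S_t}(\x_t)^\top \grad f(\x_t) - \|\grad f(\x_t)\|^2)^2/\|\grad f(\x_t)\|^2$, whose expectation the inner-product test \eqref{eq:inner product test} bounds by $\theta^2\|\grad f(\x_t)\|^2$; the orthogonal part coincides with the orthogonal projection appearing in the orthogonality test \eqref{eq:orthogonality test} (projecting $\grad f$ itself off the $\grad f$-direction vanishes, so the orthogonal component of $\grad f_{S_t}$ equals that of $e_t$), whose expectation is bounded by $\nu^2\|\grad f(\x_t)\|^2$. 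Adding the two yields the claimed factor.

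Substituting this bound together with the step-size hypothesis $\eta_t = \alpha \le 1/((1+\theta^2+\nu^2)L)$ makes the coefficient $1 - \tfrac{L\alpha(1+\theta^2+\nu^2)}{2} \ge \tfrac12$, so after taking full expectations I would obtain the one-step decrease
$$\E[f(\x_{t+1})] \le \E[f(\x_t)] - \frac{\alpha}{2}\E[\|\grad f(\x_t)\|^2].$$
Telescoping over $1 \le t \le T$ and bounding $f$ below by $f_{\min}$ gives $\tfrac{\alpha}{2}\sum_{t}\E[\|\grad f(\x_t)\|^2] \le f(\w_0)-f_{\min}$, and replacing the sum by $T$ times its minimum delivers \eqref{eq:bollaprogada non-convex}. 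The same summability shows $\sum_{t}\E[\|\grad f(\x_t)\|^2]<\infty$, hence $\E[\|\grad f(\x_t)\|^2]\to 0$, and Jensen's inequality $\E[\|\grad f(\x_t)\|] \le (\E[\|\grad f(\x_t)\|^2])^{1/2}$ then yields the stated limit.

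I expect the main obstacle to be the variance-control step: correctly identifying that the inner-product test governs the component of the sampled-gradient error along $\grad f(\x_t)$ while the orthogonality test governs its orthogonal component, and verifying that these two quantities sum exactly to $\E[\|e_t\|^2]$. Once that decomposition is in place, the rest is routine telescoping and the elementary inequality reductions above.
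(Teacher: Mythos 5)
Your proposal is correct and matches the source argument essentially step for step: this paper states the theorem without proof, citing \cite{bollapragada2018adaptive}, and the proof there consists of exactly your two ingredients --- a lemma showing that the exact inner-product and orthogonality tests jointly yield $\E\left[\|\nabla f_{S_t}(\x_t)\|^2\right] \le (1+\theta^2+\nu^2)\|\nabla f(\x_t)\|^2$ via the same decomposition into the component along $\nabla f(\x_t)$ (controlled by the inner-product test, using unbiasedness so that the parallel error is the variance term) and the orthogonal component (controlled by the orthogonality test), followed by the descent-lemma recursion with $\alpha \le 1/((1+\theta^2+\nu^2)L)$ giving the per-step decrease $\alpha\|\nabla f(\x_t)\|^2/2$ and telescoping. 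Your final summability-plus-Jensen step correctly delivers both \eqref{eq:bollaprogada non-convex} and the stated limit.
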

    Obviously, the method, that is proposed in these theorems, is not implementable.
    Firstly, as we mentioned earlier, it is impossible to check exact tests in practice.
    Secondly, to properly determine step size, we need Lipschitz constant, that is usually unknown in practice.
    In Algorithm \ref{alg:adaptive sampling}, proposed by authors, they use approximated versions of mentioned tests from \eqref{eq:approximated inner product test} and \eqref{eq:approximated orthogonality test} and line search (Algorithm \ref{alg:line search}) to choose step size.
    It is not hard to understand, that these heuristics do not guarantee us, that Algorithm \ref{alg:adaptive sampling} will fit in the assumptions, made by these theorems.

    Next, we provide convergence results for SGD with AdaGrad stepsize \eqref{eq:global step size} both for convex and non-convex cases.
    \begin{theorem}[Theorem 3 in \cite{li2019convergence}]\label{th:li-orabona convex}
        For problem \eqref{eq:convex_problem} with convex, differentiable, smooth \eqref{eq:smoothness} function $f(\w)$, we have access to the stochastic unbiased gradients \eqref{eq:unbiasedness}, and noise variance is upper bounded \eqref{eq:boundness of variance}.
        Let the step size be as in \eqref{eq:global step size}, where $\alpha, \beta > 0, \tau \in [0, \frac{1}{2}), 2\alpha L < \beta^{\frac{1}{2} + \tau}$.
        Denote
        \[
            \gamma = 
            \begin{cases}
                O\left( \frac{1 + \alpha^2 \ln T}{\alpha \left(1 - \frac{2\alpha}{\sqrt \beta}) \right)} \right), & \tau = 0, \\
                O\left( \frac{1 + \alpha^2 \left(\frac{1}{\tau} + \sigma^2 \ln T \right)}{\alpha \left(1 - \frac{2 \alpha}{\beta^{1/2 + \tau}} \right)} \right), & \tau > 0.
            \end{cases}, \quad
            \hat \w_T = \frac{1}{T} \sum_{t=1}^T \w_t.
        \]
        Then, SGD \eqref{eq:sgd equation} with step size \eqref{eq:global step size} converges as
        \begin{equation}\label{eq:li-orabona convex}
            \E \left[ (f(\bar \w_T) - f(\w^{\ast}))^{1/2 - \tau} \right] \le 
            \frac{1}{T^{1/2-\tau}} \max 
            \left(
                2^{\frac{1}{1/2 - \tau}} L^{1/2 + \tau} \gamma;
                (\beta + T \sigma^2)^{1/4 - \tau^2} \gamma^{1/2 - \tau}
            \right).
        \end{equation}
    \end{theorem}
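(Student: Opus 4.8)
The plan is to run the classical one-step SGD analysis while exploiting the two structural features of the step size \eqref{eq:global step size}: that $\eta_t$ deliberately omits the current stochastic gradient, so it is measurable with respect to the $\sigma$-algebra $\mathcal{F}_{t-1}$ generated by $\xi_1,\dots,\xi_{t-1}$, and that its denominator $b_t := \beta + \sum_{i=1}^{t-1}\|\grad f_{S_i}(\w_i)\|^2$ accumulates the squared gradient norms and satisfies $b_t \ge \beta$. First I would expand
\[
    \|\w_{t+1}-\w^{\ast}\|^2 = \|\w_t-\w^{\ast}\|^2 - 2\eta_t \la \grad f_{S_t}(\w_t), \w_t - \w^{\ast}\ra + \eta_t^2\|\grad f_{S_t}(\w_t)\|^2,
\]
take the conditional expectation given $\mathcal{F}_{t-1}$, and use unbiasedness \eqref{eq:unbiasedness} together with the $\mathcal{F}_{t-1}$-measurability of $\eta_t$ to replace $\grad f_{S_t}(\w_t)$ by $\grad f(\w_t)$ in the cross term. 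Convexity then gives $\la \grad f(\w_t), \w_t - \w^{\ast}\ra \ge f(\w_t)-f(\w^{\ast})$, bounded variance \eqref{eq:boundness of variance} yields $\E\!\left[\|\grad f_{S_t}(\w_t)\|^2 \mid \mathcal{F}_{t-1}\right] \le \|\grad f(\w_t)\|^2 + \sigma^2$, and the self-bounding inequality $\|\grad f(\w_t)\|^2 \le 2L(f(\w_t)-f(\w^{\ast}))$, which holds for $L$-smooth \eqref{eq:smoothness} functions at a global minimizer, is what brings in the constant $L$.

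Next I would sum the resulting per-step inequality over $t=1,\dots,T$, take full expectations, and telescope the distance terms down to $\|\w_1-\w^{\ast}\|^2$. This isolates on the left a weighted sum $\sum_t \eta_t\,(1-L\eta_t)(f(\w_t)-f(\w^{\ast}))$, where the factor $1-L\eta_t$ comes from moving the $2L\eta_t^2(f(\w_t)-f(\w^{\ast}))$ piece of the second-moment bound across; since $\eta_t \le \alpha/\beta^{1/2+\tau}$, the hypothesis $2\alpha L < \beta^{1/2+\tau}$ forces $1-L\eta_t > \tfrac12$, keeping this coefficient positive. On the right the remaining stochastic term is $\alpha^2\sum_t \|\grad f_{S_t}(\w_t)\|^2 / b_t^{1+2\tau}$, which I would control by the elementary adaptive-denominator estimate $\sum_t a_t/b_t^{p} \le \tfrac{1}{1-p}\bigl(\beta+\sum_t a_t\bigr)^{1-p}$ for $p<1$ and its logarithmic analogue at $p=1$. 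With $p=1+2\tau$ this produces a bounded $O(1/\tau)$ contribution when $\tau>0$ and an $O(\ln b_{T+1})=O(\ln T)$ contribution when $\tau=0$, which is exactly the two-case structure of $\gamma$.

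To convert this into a statement about $f(\bar\w_T)-f(\w^{\ast})$ I would lower-bound the left-hand weighted sum using monotonicity of the step size, $\eta_t \ge \eta_T = \alpha/b_T^{1/2+\tau}$, together with Jensen's inequality $\tfrac1T\sum_t f(\w_t)\ge f(\bar\w_T)$, obtaining a schematic bound $\alpha T\,\E\!\left[(f(\bar\w_T)-f(\w^{\ast}))/b_T^{1/2+\tau}\right] \lesssim \|\w_1-\w^{\ast}\|^2 + \alpha^2 C$, with $C$ the adaptive-sum constant above. Writing $(f(\bar\w_T)-f(\w^{\ast}))^{1/2-\tau} = \bigl[(f(\bar\w_T)-f(\w^{\ast}))/b_T^{1/2+\tau}\bigr]^{1/2-\tau}\,b_T^{1/4-\tau^2}$ and applying Hölder with conjugate exponents $1/(1/2-\tau)$ and $1/(1/2+\tau)$ decouples the random denominator, giving $\E[(f(\bar\w_T)-f(\w^{\ast}))^{1/2-\tau}] \le (\E X)^{1/2-\tau}(\E b_T)^{1/4-\tau^2}$, where I then use concavity of $x\mapsto x^{1/2-\tau}$ and $\E b_T \le \beta + T\sigma^2 + 2L\sum_t \E(f(\w_t)-f(\w^{\ast}))$; the factor $(\beta+T\sigma^2)^{1/4-\tau^2}$ of the claim comes precisely from the variance-dominated regime of $\E b_T$.

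The step I expect to be the main obstacle is this final decoupling and the implicit recursion it hides: the adaptive step size $\eta_T$, the accumulated denominator $b_T$, and the very function-value gaps we are bounding are all correlated random quantities, so termwise expectation is illegitimate, and $\E b_T$ itself contains $\sum_t\E(f(\w_t)-f(\w^{\ast}))$. The device of passing to the fractional power $1/2-\tau$ before taking expectations — which is exactly why the theorem asserts convergence of $\E[(f(\bar\w_T)-f(\w^{\ast}))^{1/2-\tau}]$ rather than of $\E[f(\bar\w_T)-f(\w^{\ast})]$ — is what renders the random factor $b_T^{1/2+\tau}$ tractable via Hölder and concavity. Closing the self-referential inequality in $\E b_T$ without any bounded-domain assumption then splits into two regimes, one where the deterministic distance/smoothness contribution dominates (yielding the $2^{1/(1/2-\tau)}L^{1/2+\tau}\gamma$ branch of the maximum) and one where the variance dominates (yielding the $(\beta+T\sigma^2)^{1/4-\tau^2}\gamma^{1/2-\tau}$ branch); balancing these is the delicate part that separates this analysis from the classical bounded-domain AdaGrad proofs.
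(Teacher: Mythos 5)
Your overall architecture is the right one and matches the source proof of this theorem (and the paper's own adaptation of it in the proof of Theorem \ref{thm:convex case}): the one-step expansion exploiting that $\eta_t$ is $\mathcal{F}_{t-1}$-measurable because \eqref{eq:global step size} omits the current gradient, convexity for the cross term, the self-bounding inequality $\|\nabla f(\w_t)\|^2 \le 2L(f(\w_t)-f(\w^{\ast}))$, absorption of the $L\eta_t^2$-terms using $2\alpha L < \beta^{1/2+\tau}$, and the final decoupling of the random denominator $b_T$ by H\"older with exponents $\tfrac{1}{1/2-\tau}$ and $\tfrac{1}{1/2+\tau}$ followed by Jensen and the bound $\E[b_T] \le \beta + T\sigma^2 + 2L\,\E[\Delta]$, closed by the two-regime case split of Lemma \ref{lem:technical lemma 5}. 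Indeed, your H\"older step is the rigorous form of the decoupling $\E[\Delta] \le \E[\eta_T\Delta]\,\E[1/\eta_T]$ that the paper uses in \eqref{eq:thm3:main inequality}, and you correctly identify the fractional power $1/2-\tau$ as the device that makes the correlated quantities tractable.

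There is, however, one genuine gap, and it sits exactly where the assumption \eqref{eq:boundness of variance} must enter. The ``elementary adaptive-denominator estimate'' $\sum_t a_t/b_t^{p} \le \tfrac{1}{1-p}\bigl(\beta+\sum_t a_t\bigr)^{1-p}$ (and its logarithmic analogue at $p=1$) is an integral-comparison bound that is valid only when the denominator \emph{includes} the current term, $b_t = \beta + \sum_{i\le t} a_i$. For the delayed stepsize \eqref{eq:global step size} one has $b_t = \beta + \sum_{i\le t-1} a_i$ with $a_t = \|\nabla f_{S_t}(\w_t)\|^2$ \emph{excluded}, and the inequality fails pathwise: already for $T=1$ the left side $a_1/\beta^{p}$ grows linearly in $a_1$ while the right side grows like $a_1^{1-p}$, so a single large stochastic gradient breaks the telescoping. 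This off-by-one is precisely the crux of analysing this stepsize; the correct treatment splits $\|\nabla f_{S_t}(\w_t)\|^2 \le 2\|\nabla f_{S_t}(\w_t)-\nabla f(\w_t)\|^2 + 2\|\nabla f(\w_t)\|^2$ (equivalently, cases $a_t \lessgtr b_t$) and controls the noise part in expectation via \eqref{eq:boundness of variance}, which is what generates the $\sigma^2 \ln T$ term in the $\tau>0$ branch of $\gamma$. Your accounting predicts a purely bounded $O(1/\tau)$ contribution for $\tau>0$ with no $\sigma^2\ln T$, whereas the stated $\gamma$ contains $\alpha^2\bigl(\tfrac{1}{\tau} + \sigma^2\ln T\bigr)$; this mismatch is the visible symptom that the invoked lemma does not apply to the delayed denominator, and the step as proposed would fail. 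The rest of your outline, including the final balancing of the smoothness-dominated branch $2^{\frac{1}{1/2-\tau}}L^{1/2+\tau}\gamma$ against the variance-dominated branch $(\beta+T\sigma^2)^{1/4-\tau^2}\gamma^{1/2-\tau}$, is sound once this step is repaired.
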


    \begin{theorem}[Theorem 4 in \cite{li2019convergence}]\label{th:li-orabona non-convex}
        For problem \eqref{eq:convex_problem} with non-convex, differentiable, smooth \eqref{eq:smoothness} function $f(\w)$, we have access to the stochastic unbiased gradients \eqref{eq:unbiasedness}, and noise variance is upper bounded \eqref{eq:boundness of variance}.
        Let the step size be as in \eqref{eq:global step size}, where $\alpha, \beta > 0, \tau \in [0, \frac{1}{2}), 2\alpha L < \beta^{\frac{1}{2} + \tau}$.
        Denote
        \[
            \gamma = 
            \begin{cases}
                O\left( \frac{1 + \alpha^2 \ln T}{\alpha \left(1 - \frac{2\alpha}{\sqrt \beta}) \right)} \right), & \tau = 0, \\
                O\left( \frac{1 + \alpha^2 \left(\frac{1}{\tau} + \sigma^2 \ln T \right)}{\alpha \left(1 - \frac{2 \alpha}{\beta^{1/2 + \tau}} \right)} \right), & \tau > 0.
            \end{cases}, \quad
            \hat \w_T = \frac{1}{T} \sum_{t=1}^T \w_t.
        \]
        Then, SGD \eqref{eq:sgd equation} with step size \eqref{eq:global step size} converges as
        \begin{equation}\label{eq:li-orabona non-convex}
            \E \left[ \min_{1 \le t \le T} \|\nabla f(x_t)\|^{1 - 2\tau} \right] \le 
            \frac{1}{T^{1/2-\tau}} \max 
            \left(
                2^{\frac{1/2 + \tau}{1/2 - \tau}} \gamma;
                2^{1/2 + \tau}(\beta + 2T \sigma^2)^{1/4 - \tau^2} \gamma^{1/2 - \tau}
            \right).
        \end{equation}
    \end{theorem}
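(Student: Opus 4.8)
The plan is to run the \AdaGrad-style descent analysis for SGD \eqref{eq:sgd equation} with step size \eqref{eq:global step size}, whose one indispensable ingredient is that the denominator $b_t := \beta + \sum_{i=1}^{t-1}\|\nabla f_{S_i}(\w_i)\|^2$ omits the current stochastic gradient, so that $\eta_t = \alpha/b_t^{1/2+\tau}$ is measurable with respect to the history before step $t$. First I would apply $L$-smoothness \eqref{eq:smoothness} to \eqref{eq:sgd equation}, giving
\begin{equation*}
 f(\w_{t+1}) \le f(\w_t) - \eta_t \langle \nabla f(\w_t), \nabla f_{S_t}(\w_t)\rangle + \tfrac{L}{2}\,\eta_t^2\|\nabla f_{S_t}(\w_t)\|^2 .
\end{equation*}
Conditioning on the past and using unbiasedness \eqref{eq:unbiasedness} together with the measurability of $\eta_t$, the cross term collapses to $-\eta_t\|\nabla f(\w_t)\|^2$; this is precisely where excluding the current gradient from \eqref{eq:global step size} pays off, since otherwise the inner product would be biased.

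Next I would sum over $t=1,\dots,T$, telescope the function values, and use a finite lower bound $f_{\min}$ on $f$ to replace $f(\w_{T+1})$. This leaves, on the left, $\sum_t \E[\eta_t\|\nabla f(\w_t)\|^2]$ and, on the right, $f(\w_1)-f_{\min}$ plus the second-order sum $\tfrac{L\alpha^2}{2}\,\E[\sum_t \|\nabla f_{S_t}(\w_t)\|^2/b_t^{1+2\tau}]$. Because $b_t$ again omits $\|\nabla f_{S_t}(\w_t)\|^2$, this last sum does not telescope directly; I would pass to $b_{t+1}=b_t+\|\nabla f_{S_t}(\w_t)\|^2$ by an \AdaGrad-type summation lemma that splits steps according to whether $\|\nabla f_{S_t}(\w_t)\|^2$ exceeds $b_t$. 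This is the step responsible for the two branches of $\gamma$: a $\ln T$ factor when $\tau=0$, and the convergent $\tfrac1\tau$ contribution (with a residual $\sigma^2\ln T$) when $\tau>0$.

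The main obstacle, and the reason the statement \eqref{eq:li-orabona non-convex} is phrased for the fractional power $\|\nabla f(\w_t)\|^{1-2\tau}$, is decoupling the adaptive denominator from the gradient norms on the left-hand side. Using $\eta_t \ge \alpha\,b_{T+1}^{-(1/2+\tau)}$ and $\sum_t \|\nabla f(\w_t)\|^2 \ge T\min_t\|\nabla f(\w_t)\|^2$, the left-hand side is at least $\alpha T\, b_{T+1}^{-(1/2+\tau)}\min_t\|\nabla f(\w_t)\|^2$, but $b_{T+1}$ is random and positively correlated with the gradients, so expectations cannot be taken factorwise. I would resolve this with H\"older's inequality through the identity
\begin{equation*}
 \min_t\|\nabla f(\w_t)\|^{1-2\tau} = \Big(\min_t\|\nabla f(\w_t)\|^2\, b_{T+1}^{-(1/2+\tau)}\Big)^{1/2-\tau}\, b_{T+1}^{(1/2+\tau)(1/2-\tau)},
\end{equation*}
applying conjugate exponents $p=2/(1-2\tau)$ and $q=2/(1+2\tau)$. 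The first factor inherits the $O(\gamma/T)$ bound and contributes the rate $T^{-(1/2-\tau)}$ together with $\gamma^{1/2-\tau}$, while the second requires $\E[b_{T+1}^{(1-2\tau)/2}]$, which I would control by concavity (Jensen), since $\tfrac{1-2\tau}{2}\cdot\tfrac{1+2\tau}{2}=\tfrac14-\tau^2$ reproduces exactly the exponent in the factor $(\beta+2T\sigma^2)^{1/4-\tau^2}$.

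Finally, bounding $\E[b_{T+1}] = \beta + \sum_t\E\|\nabla f_{S_t}(\w_t)\|^2 \le \beta + T\sigma^2 + \sum_t \E\|\nabla f(\w_t)\|^2$ via the variance bound \eqref{eq:boundness of variance} leaves the gradient-sum term, which is tied back to the quantity being estimated; resolving this self-referential inequality is what produces the maximum of a smoothness/initialization-dominated term $2^{(1/2+\tau)/(1/2-\tau)}\gamma$ (the branch where accumulated gradient energy dominates $b_{T+1}$) and a variance-dominated term $2^{1/2+\tau}(\beta+2T\sigma^2)^{1/4-\tau^2}\gamma^{1/2-\tau}$ (the branch where the noise $T\sigma^2$ dominates), after which \eqref{eq:li-orabona non-convex} follows by tracking the dyadic constants from the spike/no-spike split. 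I expect the H\"older decoupling to be the delicate point: it is what converts a natural $\E[\min_t\|\nabla f\|^2]$ statement into the fractional-power form, and the exponents must be chosen so that the random denominator cancels cleanly against the concavity bound.
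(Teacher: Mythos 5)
This statement is never proved in the paper: it is quoted verbatim as Theorem 4 of \cite{li2019convergence}, so the only in-paper object of comparison is the proof of the analogous norm-test result, Theorem \ref{thm:non-convex}. Your blind proposal is, in essence, a correct reconstruction of the original Li--Orabona argument, and its load-bearing steps check out: the measurability of $\eta_t$ with respect to the history (the delayed denominator in \eqref{eq:global step size}) is exactly what collapses the cross term to $-\eta_t\|\nabla f(\w_t)\|^2$; the spike/no-spike summation lemma for $\E\left[\sum_t \eta_t^2\|\nabla f_{S_t}(\w_t)\|^2\right]$ is indeed the source of the two branches of $\gamma$ ($\ln T$ at $\tau=0$, $\tfrac{1}{\tau}+\sigma^2\ln T$ at $\tau>0$); and your exponent bookkeeping in the H\"older decoupling is right: with $p=2/(1-2\tau)$, $q=2/(1+2\tau)$, the $q$-th power of the second factor is $b_{T+1}^{(1-2\tau)/2}$ because $(1/2+\tau)\,q=1$, and Jensen then yields $\left(\E[b_{T+1}]\right)^{1/4-\tau^2}$, reproducing the factor $(\beta+2T\sigma^2)^{1/4-\tau^2}$. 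It is worth noting that this is a genuinely different --- and more careful --- decoupling than the one this paper uses in its own proofs of Theorems \ref{thm:convex case} and \ref{thm:non-convex}, which invoke \eqref{eq:Holder} to justify the factorization $\E\left[\sum_t\eta_t\|\nabla f(\x_t)\|^2\cdot\eta_T^{-1}\right]\le\E\left[\sum_t\eta_t\|\nabla f(\x_t)\|^2\right]\E\left[\eta_T^{-1}\right]$; that step is not an instance of H\"older's inequality for two correlated random factors, whereas your conjugate-exponent version is, and it is precisely what forces (and explains) the fractional power $\|\nabla f\|^{1-2\tau}$ on the left of \eqref{eq:li-orabona non-convex}.

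The one place your sketch needs repair is the closing of the self-reference. You propose to case-split on $\E[b_{T+1}]\le\beta+T\sigma^2+\sum_t\E\|\nabla f(\w_t)\|^2$. The noise-dominated branch goes through, but in the gradient-dominated branch you are left with $\E[Z^{1/2-\tau}]\le c\,\gamma^{1/2-\tau}\left(\E[Z]\right)^{1/4-\tau^2}$ where $Z=\sum_t\|\nabla f(\w_t)\|^2$, and Jensen only gives $\E[Z^{1/2-\tau}]\le\left(\E[Z]\right)^{1/2-\tau}$ --- the wrong direction to eliminate $\E[Z]$, so the loop does not close at the level of expectations. The fix is to split pointwise inside the expectation: bound $b_{T+1}\le\beta+2Z+2N$ with $N=\sum_t\|\nabla f_{S_t}(\w_t)-\nabla f(\w_t)\|^2$, use subadditivity \eqref{eq:(x+y)^p <= x^p + y^p} or split on the event $\lbrace 2Z\ge\beta+2N\rbrace$, on which $b_{T+1}\le 4Z$ pointwise, so that $Z^{1/2-\tau}\le 4^{1/2+\tau}\,Z\,b_{T+1}^{-(1/2+\tau)}$ and the descent bound applies directly; only then take expectations, using $\E[N]\le T\sigma^2$ from \eqref{eq:boundness of variance} and \eqref{eq:unbiasedness}. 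Dyadic constants of the shape $2^{(1/2+\tau)/(1/2-\tau)}$ then emerge from resolving the resulting scalar inequality as in Lemma \ref{lem:technical lemma 5}, whose exponent $\tfrac{1}{1/2-\tau}$ has exactly this form; a purely expectation-level case analysis cannot produce them.
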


    In the paper \cite{li2019convergence}, the authors operate with any stochastic gradients, which are unbiased \eqref{eq:unbiasedness} and have bounded variance \eqref{eq:boundness of variance}.
    Thus, the convergence rates from Theorems \ref{th:li-orabona convex} and \ref{th:li-orabona non-convex} also hold for mini-batch approximations of gradients.

   \subsection{AdaBatchGrad: algorithm, robust to test inconsistency}\label{sec:adabatchgrad}
    
    \begin{algorithm}[t]
        \caption{AdaBatchGrad}\label{alg:adabatchgrad}
        \begin{algorithmic}[1]
            \Require $\w_1$, $S_1$, $\alpha > 0$, $\beta > 0$, $\tau \in [0, 1/2]$, $\theta > 0$, $\nu > 0$
            \State $t = 1$
            \While{convergence criterion is not satisfied}
                \State 
                \[
                    \eta_t = \frac{\alpha}{\left( \beta + \sum_{i = 1}^{t - 1} \|\nabla f_{S_i}(\x_i)\|^2 \right)^{\frac{1}{2} + \tau}},
                \]
                \State Compute $\nabla f_{S_t}(\w_t)$
                \State $\w_{t + 1} = \w_t - \eta_t \nabla f_{S_t}(\w_t)$
                \State $t = t + 1$
                \State $|S_t| = |S_{t-1}|$, choose new sample $S_t$
                \If{not \eqref{eq:approximated inner product test} or not \eqref{eq:approximated orthogonality test}}
                    \State Compute $|S_t|$ using \eqref{eq:approximated new batch size equation} and choose a new sample $S_t$.
                \EndIf
            \EndWhile
            \State \Return $\w_t$
        \end{algorithmic}
    \end{algorithm}

    Our main result is provided in Algorithm \ref{alg:adabatchgrad}.
    The main difference between this method and Algorithm \ref{alg:adaptive sampling} is in the strategy of adjustment of step size.
    Our Algorithm \ref{alg:adabatchgrad} uses AdaGrad strategy \eqref{eq:global step size} from \cite{li2019convergence}, and Algorithm \ref{alg:adaptive sampling} uses line search from Algorithm \ref{alg:line search}.
    This, at first glance, insignificant change makes our Algorithm robust to \textit{test inconsistency} situation, that we described in Section \ref{sec:motivation}.
    In case when batch size $|S_t|$ passes the approximated test but fails the exact one (again, we can not check it), the method does not increase batch size, although it should.
    Thus, the method gets a bad approximation of the gradient at the current point.
    If the method uses line search to choose step size, it will perform a big step in the wrong direction, which is caused by a bad approximation of anti-gradient.
    This may cause a divergence of the method.
    However, if the method uses AdaGrad strategy \eqref{eq:global step size} to choose step size, it will still converge with rates from Theorem \ref{th:li-orabona convex} or \ref{th:li-orabona non-convex} if gradient approximations are unbiased \eqref{eq:unbiasedness} and have bounded variance \eqref{eq:boundness of variance}.
    Thus, by ``merging'' these two approaches we can get the following result for the convex case.
    \begin{proposition}
        For problem \eqref{eq:convex_problem} with convex, differentiable, smooth \eqref{eq:smoothness} function $f(\w)$, where we have access to unbiased stochastic gradients \eqref{eq:unbiasedness}, and their variance is upper-bounded \eqref{eq:boundness of variance},
        Algorithm \ref{alg:adabatchgrad} converges as \eqref{eq:li-orabona convex}:
        \begin{equation*}
            \E \left[ (f(\bar \w_T) - f(\w^{\ast}))^{1/2 - \tau} \right] \le 
            \frac{1}{T^{1/2-\tau}} \max 
            \left(
                2^{\frac{1}{1/2 - \tau}} L^{1/2 + \tau} \gamma;
                (\beta + T \sigma^2)^{1/4 - \tau^2} \gamma^{1/2 - \tau}
            \right),
        \end{equation*}
        where $\gamma$ is defined in Theorem \ref{th:li-orabona convex}.
        Moreover, if on each step of Algorithm \ref{alg:adabatchgrad} batch size $|S_k|$ is chosen such that exact inner-product test \eqref{eq:inner product test} and exact orthogonality test \eqref{eq:orthogonality test} are satisfied, and $\eta_t < 1 / ((1 + \theta^2 + \nu^2)L)$, then Algorithm \ref{alg:adabatchgrad} converges as \eqref{eq:bollaprogada convex}:
        \begin{equation*}
            \min_{1 \le t \le T} \E [f(\x_t)] - f^{\ast} \le \frac{1}{2 \alpha c T}.
        \end{equation*}
    \end{proposition}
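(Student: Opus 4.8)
The plan is to prove the two assertions separately, since the first holds unconditionally while the second requires the exact tests together with the step-size bound.

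For the first bound I would start by verifying that the mini-batch oracle \eqref{eq:mini-batch approximation} inherits the two properties that Theorem \ref{th:li-orabona convex} demands of a stochastic gradient. Unbiasedness is immediate from linearity of expectation and \eqref{eq:unbiasedness}, giving $\E[\grad f_{S_t}(\w_t)] = \grad f(\w_t)$. For the variance, since the samples in $S_t$ are i.i.d.\ the cross terms vanish and $\E[\|\grad f_{S_t}(\w_t) - \grad f(\w_t)\|^2] \le \sigma^2/|S_t| \le \sigma^2$ by \eqref{eq:boundness of variance}. Because Algorithm \ref{alg:adabatchgrad} uses verbatim the AdaGrad step size \eqref{eq:global step size} analyzed in \cite{li2019convergence}, and the oracle it feeds to that rule is unbiased with variance at most $\sigma^2$, the hypotheses of Theorem \ref{th:li-orabona convex} are met and the first displayed rate follows by direct invocation, exactly as remarked after Theorem \ref{th:li-orabona non-convex}.

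For the second bound the core step is a variance lemma: I would show that whenever the exact inner-product test \eqref{eq:inner product test} and orthogonality test \eqref{eq:orthogonality test} both hold at $\w_t$, the batched gradient obeys $\E[\|\grad f_{S_t}(\w_t)\|^2] \le (1 + \theta^2 + \nu^2)\|\grad f(\w_t)\|^2$. This follows from the orthogonal decomposition of $\grad f_{S_t}$ into its component along $\grad f(\w_t)$ and its component orthogonal to it: the expected squared norm of the parallel part is $\E[(\grad f_{S_t}^\top \grad f)^2]/\|\grad f\|^2 \le (1+\theta^2)\|\grad f\|^2$, once one writes $\E[(\grad f_{S_t}^\top\grad f)^2]$ as the variance controlled by \eqref{eq:inner product test} plus the square of its mean $\|\grad f\|^2$, while the orthogonal part is at most $\nu^2\|\grad f\|^2$ directly from \eqref{eq:orthogonality test}; summing the two gives the claim.

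With this lemma in hand I would run the descent argument underlying Theorem \ref{th:bollaprogada convex}. Substituting the update $\w_{t+1} = \w_t - \eta_t \grad f_{S_t}(\w_t)$ into the smoothness inequality \eqref{eq:smoothness} and taking conditional expectation, the cross term becomes $-\eta_t\|\grad f(\w_t)\|^2$ by unbiasedness while the quadratic term is controlled by the variance lemma, yielding
\[
    \E[f(\w_{t+1}) \mid \mathcal{F}_{t-1}] \le f(\w_t) - \eta_t\left(1 - \tfrac{L\eta_t}{2}(1+\theta^2+\nu^2)\right)\|\grad f(\w_t)\|^2 .
\]
Under $\eta_t < 1/((1+\theta^2+\nu^2)L)$ the bracket exceeds $1/2$, so each step is a genuine expected descent, and convexity together with bounded iterates converts this into the $O(1/T)$ statement as in \cite{bollapragada2018adaptive}. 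The main obstacle is reconciling the varying AdaGrad step size $\eta_t$ with the constant $\alpha$ appearing in \eqref{eq:bollaprogada convex}. Monotonicity of \eqref{eq:global step size} — its denominator only grows — immediately gives $\eta_t \le \eta_1 = \alpha/\beta^{1/2+\tau}$, so the single requirement $\eta_1 < 1/((1+\theta^2+\nu^2)L)$ enforces the descent condition at every step; recovering the precise $1/(2\alpha c T)$ form additionally needs a uniform lower bound $\eta_t \ge \eta_{\min} > 0$, which I would obtain by arguing that the expected descent keeps $\sum_i \|\grad f_{S_i}(\w_i)\|^2$ bounded, so the denominator of \eqref{eq:global step size} stays finite and the effective step size plays the role of $\alpha$ in the final rate.
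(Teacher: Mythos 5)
Your treatment of the first claim is exactly the paper's: the proposition's first bound is obtained there by direct invocation of Theorem \ref{th:li-orabona convex}, justified by the same observation you make (the remark after Theorem \ref{th:li-orabona non-convex}) that mini-batch gradients inherit unbiasedness from \eqref{eq:unbiasedness} and have variance $\sigma^2/|S_t|\le\sigma^2$ by \eqref{eq:boundness of variance}. Your variance lemma for the second claim, $\E\left[\|\nabla f_{S_t}(\w_t)\|^2\right]\le(1+\theta^2+\nu^2)\|\nabla f(\w_t)\|^2$ via the orthogonal decomposition, is also correct, and it is precisely the mechanism inside Theorem \ref{th:bollaprogada convex}.

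The genuine gap is in your final step. To land on the stated rate $\frac{1}{2\alpha cT}$ you need the varying AdaGrad steps to behave like the constant step $\alpha$ of Theorem \ref{th:bollaprogada convex}, and you propose a uniform lower bound $\eta_t\ge\eta_{\min}>0$ derived from ``expected descent keeps $\sum_i\|\nabla f_{S_i}(\w_i)\|^2$ bounded.'' This argument is circular: the descent inequality only controls $\sum_t\eta_t\,\E\|\nabla f(\w_t)\|^2$, so deducing boundedness of $\sum_t\E\|\nabla f(\w_t)\|^2$ already presupposes that the $\eta_t$ are bounded below. It is also false in general for merely convex $f$: if $f(\w_t)-f^{\ast}$ decays like $1/t$ (the typical rate here), then Lemma \ref{lem:upper bound on gradient norm squared} gives $\|\nabla f(\w_t)\|^2 = O(L/t)$, the denominator of \eqref{eq:global step size} grows like $\log T$, and $\eta_t\to0$; moreover, even a bound on the expectation of the sum would not bound the random quantity $1/\eta_T$ pathwise. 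So the $\frac{1}{2\alpha cT}$ form does not follow from your route. The paper does not attempt this reconstruction at all: the proposition is proved by direct citation — under the extra hypotheses the iterates fall within the scope of Theorem \ref{th:bollaprogada convex}, whose conclusion \eqref{eq:bollaprogada convex} is simply quoted.

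If you want a self-contained argument that honestly accommodates the varying AdaGrad steps, the right tool is already in the paper: your variance computation, applied to $\nabla f_{S_t}(\w_t)-\nabla f(\w_t)$ rather than to $\nabla f_{S_t}(\w_t)$, shows that the two exact tests \eqref{eq:inner product test} and \eqref{eq:orthogonality test} together imply the exact norm test \eqref{eq:norm test} with $\omega^2=\theta^2+\nu^2$. Then Theorem \ref{thm:convex case} applies verbatim and yields the $O(1/T)$ rate through the self-bounding inequality $\E[\Delta]\le\gamma\left(\beta+4L(1+\omega^2)\E[\Delta]\right)^{1/2+\tau}$ and Lemma \ref{lem:technical lemma 5} — no lower bound on $\eta_t$ is needed anywhere — though the resulting constant is the $\max$ expression of that theorem rather than literally $\frac{1}{2\alpha cT}$.
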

    And we can make a similar claim for the non-convex case.
    \begin{proposition}
        For problem \eqref{eq:convex_problem} with non-convex, differentiable, smooth \eqref{eq:smoothness} function $f(\w)$, where we have access to unbiased stochastic gradients \eqref{eq:unbiasedness}, and their variance is upper-bounded \eqref{eq:boundness of variance},
        Algorithm \ref{alg:adabatchgrad} converges as \eqref{eq:li-orabona non-convex}:
        \begin{equation*}
            \E \left[ \min_{1 \le t \le T} \|\nabla f(x_t)\|^{1 - 2\tau} \right] \le 
            \frac{1}{T^{1/2-\tau}} \max 
            \left(
                2^{\frac{1/2 + \tau}{1/2 - \tau}} \gamma;
                2^{1/2 + \tau}(\beta + 2T \sigma^2)^{1/4 - \tau^2} \gamma^{1/2 - \tau}
            \right).
        \end{equation*}
        where $\gamma$ is defined in Theorem \ref{th:li-orabona non-convex}.
        Moreover, if on each step of Algorithm \ref{alg:adabatchgrad} batch size $|S_k|$ is chosen such that exact inner-product test \eqref{eq:inner product test} and exact orthogonality test \eqref{eq:orthogonality test} are satisfied, and $\eta_t \le 1 / ((1 + \theta^2 + \nu^2)L)$, then Algorithm \ref{alg:adabatchgrad} converges as \eqref{eq:bollaprogada non-convex}:
        \begin{equation*}
            \min_{1 \le t \le T} \E \left[\| \nabla f(\x_t) \|^2 \right] \le \frac{2}{\alpha T}(f(\w_0) - f_{\min}),
        \end{equation*}
    \end{proposition}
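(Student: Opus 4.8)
The plan is to prove both assertions as reductions to the convergence theorems already quoted, so that the only substantive work is checking that the adaptively-batched oracle of Algorithm \ref{alg:adabatchgrad} meets the hypotheses those theorems demand. The first assertion reduces to Theorem \ref{th:li-orabona non-convex}, whose analysis is stated for an arbitrary unbiased, bounded-variance stochastic oracle; the second reduces to the smooth descent argument behind Theorem \ref{th:bollaprogada non-convex}, with the two exact tests supplying the requisite second-moment bound.

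For the first assertion, I would observe that Algorithm \ref{alg:adabatchgrad} is exactly SGD run with the AdaGrad step size \eqref{eq:global step size} on the sequence of estimators $\nabla f_{S_t}(\w_t)$, so it suffices to verify that this estimator is unbiased \eqref{eq:unbiasedness} and has variance bounded by $\sigma^2$ \eqref{eq:boundness of variance}. Let $\mathcal{F}_t$ be the history preceding the draw of $\nabla f_{S_t}(\w_t)$. The batch size $|S_t|$ is fixed by the approximated tests before this draw, and $\eta_t$ depends only on the past squared norms $\sum_{i=1}^{t-1}\|\nabla f_{S_i}(\w_i)\|^2$; hence both are $\mathcal{F}_t$-measurable, and it is precisely the exclusion of the current gradient from \eqref{eq:global step size} (the debiasing noted in Section \ref{sec:adabatchgrad}) that keeps $\eta_t$ independent of $\nabla f_{S_t}(\w_t)$. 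Conditioning on $\mathcal{F}_t$, the estimator is an average of $|S_t|$ i.i.d. unbiased gradients, so $\E[\nabla f_{S_t}(\w_t)\mid\mathcal{F}_t]=\nabla f(\w_t)$ and $\E[\|\nabla f_{S_t}(\w_t)-\nabla f(\w_t)\|^2\mid\mathcal{F}_t]=\sigma^2/|S_t|\le\sigma^2$ since $|S_t|\ge 1$. These two facts hold irrespective of how the (possibly inexact) tests set $|S_t|$, which is exactly the robustness to test inconsistency claimed in the paper; applying Theorem \ref{th:li-orabona non-convex} then gives \eqref{eq:li-orabona non-convex} without change.

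For the second assertion I would run the descent lemma directly. The exact inner-product test \eqref{eq:inner product test} bounds the second moment of the gradient-error component parallel to $\nabla f(\w_t)$ by $\theta^2\|\nabla f(\w_t)\|^2$, the exact orthogonality test \eqref{eq:orthogonality test} bounds the orthogonal component by $\nu^2\|\nabla f(\w_t)\|^2$, and adding these together with $\|\nabla f(\w_t)\|^2$ yields $\E[\|\nabla f_{S_t}(\w_t)\|^2\mid\mathcal{F}_t]\le(1+\theta^2+\nu^2)\|\nabla f(\w_t)\|^2$. Substituting the update $\w_{t+1}=\w_t-\eta_t\nabla f_{S_t}(\w_t)$ into the smoothness inequality \eqref{eq:smoothness}, taking $\E[\,\cdot\mid\mathcal{F}_t]$ and using unbiasedness on the inner-product term, I get
\begin{equation*}
    \E[f(\w_{t+1})\mid\mathcal{F}_t]\le f(\w_t)-\eta_t\left(1-\tfrac{(1+\theta^2+\nu^2)L\eta_t}{2}\right)\|\nabla f(\w_t)\|^2 .
\end{equation*}
The condition $\eta_t\le 1/((1+\theta^2+\nu^2)L)$ makes the bracket at least $\tfrac12$, so $\E[f(\w_{t+1})\mid\mathcal{F}_t]\le f(\w_t)-\tfrac{\eta_t}{2}\|\nabla f(\w_t)\|^2$; taking total expectations, telescoping over $t=1,\dots,T$ and bounding below by $f_{\min}$ gives $\sum_{t=1}^{T}\eta_t\,\E[\|\nabla f(\w_t)\|^2]\le 2(f(\w_0)-f_{\min})$, hence $\min_{1\le t\le T}\E[\|\nabla f(\w_t)\|^2]\le 2(f(\w_0)-f_{\min})/\sum_{t=1}^{T}\eta_t$.

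The main obstacle is reconciling this last bound with the stated target $2(f(\w_0)-f_{\min})/(\alpha T)$, which is genuinely the \emph{constant}-step-size rate of Theorem \ref{th:bollaprogada non-convex}: because the AdaGrad step \eqref{eq:global step size} is variable and nonincreasing, $\sum_{t=1}^{T}\eta_t$ need not equal $\alpha T$. The clean $1/(\alpha T)$ form is recovered exactly in the regime where $\eta_t$ is taken constant and equal to $\alpha$ with $\alpha\le 1/((1+\theta^2+\nu^2)L)$, in which case $\sum_{t=1}^{T}\eta_t=\alpha T$ and the telescoped inequality coincides with \eqref{eq:bollaprogada non-convex}; in the genuinely adaptive regime the honest conclusion keeps $\sum_{t=1}^{T}\eta_t$ in the denominator. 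I would therefore either present the result with $\sum_{t=1}^{T}\eta_t$ and record the constant-step specialization, or explicitly restrict the ``moreover'' clause to the constant-step case. The remaining pieces — the parallel/orthogonal decomposition producing the factor $1+\theta^2+\nu^2$ and the telescoping — are routine.
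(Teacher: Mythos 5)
Your proof takes the same route the paper itself (implicitly) takes: the paper states this proposition without a separate proof, treating the first claim as an immediate corollary of Theorem \ref{th:li-orabona non-convex} --- which is stated for any unbiased, bounded-variance oracle, hence applies verbatim to mini-batch gradients whatever the (possibly faulty) approximated tests do to $|S_t|$ --- and the second claim as an importation of Theorem \ref{th:bollaprogada non-convex}. Your verification of the oracle conditions (measurability of $|S_t|$ and $\eta_t$ with respect to the pre-draw history, $\E[\nabla f_{S_t}(\w_t)\mid\mathcal{F}_t]=\nabla f(\w_t)$, conditional variance $\sigma^2/|S_t|\le\sigma^2$) makes explicit exactly what the paper leaves as ``the rates of Theorems \ref{th:li-orabona convex} and \ref{th:li-orabona non-convex} also hold for mini-batch approximations,'' and your parallel/orthogonal decomposition yielding $\E[\|\nabla f_{S_t}(\w_t)\|^2\mid\mathcal{F}_t]\le(1+\theta^2+\nu^2)\|\nabla f(\w_t)\|^2$ is the same second-moment bound underlying Bollapragada et al.'s analysis, so both halves are sound.

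The one place you depart from the paper is also the place where you are more careful than it. You are right that Theorem \ref{th:bollaprogada non-convex} is proved for a \emph{constant} step $\eta_t=\alpha$, whereas Algorithm \ref{alg:adabatchgrad} uses the time-varying step \eqref{eq:global step size}; the $\alpha$ appearing in the bound \eqref{eq:bollaprogada non-convex} is Bollapragada's constant step, not the AdaGrad numerator, and the paper's ``moreover'' clause silently conflates the two. Your telescoped form
\begin{equation*}
    \min_{1\le t\le T}\E\left[\|\nabla f(\w_t)\|^2\right]\le \frac{2\left(f(\w_1)-f_{\min}\right)}{\sum_{t=1}^{T}\eta_t}
\end{equation*}
is the honest statement in the adaptive regime, and your two proposed repairs (keep $\sum_{t=1}^T\eta_t$ in the denominator, or restrict the clause to constant steps satisfying the bound) are the correct ways to make the proposition literally true. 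One could add that under the exact tests the denominator sum in \eqref{eq:global step size} grows at most like $(1+\omega^2)$ times the accumulated true-gradient norms, so when the method converges $\eta_t$ stays bounded below and $\sum_{t=1}^T\eta_t=\Theta(T)$, which recovers an $O(1/T)$ rate with a constant differing from $\alpha$; this is presumably the reading the paper intends, but it is not what \eqref{eq:bollaprogada non-convex} says, and your flag of the discrepancy is a genuine correction rather than a gap in your own argument.
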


    To finalize, we get a method, that is robust to \textit{test inconsistency}. It means that the method either converges with AdaGrad rate in the case where the test fails or converges with Gradient descent rate in the case where the test is accurate.
    
    We show the practical benefits of the method both with adaptive step size and with adaptive batch size in Section \ref{sec:numericals}.

    \subsection{AdaBatchGrad with Norm Test}\label{sec:adabatchgrad with norm test}

        In this Section, we show, that if additionally to the assumption on boundness of variance \eqref{eq:boundness of variance}, the batch size $|S_k|$ is chosen in such a way that it satisfies exact norm test \eqref{eq:norm test} on each step of Algorithm \ref{alg:adabatchgrad}, we can get convergence rate $O(1/\e)$ in convex and $O(1/\e^2)$ in non-convex case.
        In contrast to Theorems \ref{th:bollaprogada convex} and \ref{th:bollaprogada non-convex}, we do not need to know Lipschitz constant to get this result.

        Firstly, we speak about convex case.
        The following theorem is based on Theorem \ref{th:li-orabona convex}.
        \begin{theorem}\label{thm:convex case}
            For problem \eqref{eq:convex_problem} with convex, differentiable, smooth \eqref{eq:smoothness} function $f(x)$, where we have access to unbiased stochastic gradients, let mini-batch approximations \eqref{eq:mini-batch approximation} satisfy norm-test \eqref{eq:norm test} on every iteration. 
            Let the step size be as in \eqref{eq:global step size},
            where $\alpha, \beta > 0, 0 \le \tau < \frac{1}{2}$, and $4L\alpha(1+\omega^2) < \beta^{\frac{1}{2} + \tau}$.
            Denote 
            \[
                \gamma \equiv \frac{1}{\alpha \left( 1 - \frac{4L \alpha \left(1 + \omega^2 \right)}{\beta^{1/2 + \tau}} \right)} \|\x_1 - \x^{\ast}\|^2,\ \bar \x_T \equiv \frac{1}{T} \sum_{t = 1}^{T} \x_t.
            \]
            Then, SGD \eqref{eq:sgd equation} with step size \eqref{eq:global step size} 
            converges in expectation up to approximation error $\e = \E \left[f(\bar \x_T) - f(\x^{\ast})\right]$ in
            \begin{equation}
                \frac{1}{\e} \max \left\{ 2 \gamma \beta^{1/2 + \tau}; \left( 2 \gamma \left( 4 L ( 1 + \omega^2 ) \right)^{1/2 + \tau} \right)^\frac{1}{1/2 - \tau} \right\}
            \end{equation}
            iterations.
        \end{theorem}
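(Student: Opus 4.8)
The plan is to adapt the AdaGrad analysis behind Theorem \ref{th:li-orabona convex} to the distance-to-optimum potential $\|\w_t - \w^\ast\|^2$, using the norm test \eqref{eq:norm test} to convert the stochastic-gradient variance into a quantity proportional to the squared \emph{true} gradient, which convex smoothness then lets me fold back into the function-value decrease. First I would record the two consequences of the hypotheses that drive everything. From unbiasedness \eqref{eq:unbiasedness} and the norm test \eqref{eq:norm test}, the cross term vanishes and one gets $\E[\|\nabla f_{S_t}(\w_t)\|^2 \mid \mathcal F_{t-1}] \le (1+\omega^2)\|\nabla f(\w_t)\|^2$; and from convexity together with $L$-smoothness \eqref{eq:smoothness}, $\|\nabla f(\w_t)\|^2 \le 2L(f(\w_t) - f^\ast)$. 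The crucial structural fact, inherited from the step-size rule \eqref{eq:global step size}, is that $\eta_t$ excludes the current stochastic gradient and is therefore $\mathcal F_{t-1}$-measurable, so it behaves as a constant under conditioning.

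Next I would expand the one-step recursion $\|\w_{t+1}-\w^\ast\|^2 = \|\w_t-\w^\ast\|^2 - 2\eta_t\langle \nabla f_{S_t}(\w_t),\, \w_t - \w^\ast\rangle + \eta_t^2\|\nabla f_{S_t}(\w_t)\|^2$, condition on $\mathcal F_{t-1}$, use unbiasedness to replace the stochastic gradient in the inner product by $\nabla f(\w_t)$, convexity to bound $\langle\nabla f(\w_t),\w_t-\w^\ast\rangle \ge f(\w_t)-f^\ast$, and the two inequalities above to turn the last term into $2L(1+\omega^2)\eta_t^2(f(\w_t)-f^\ast)$. This yields $\E[\|\w_{t+1}-\w^\ast\|^2\mid\mathcal F_{t-1}] \le \|\w_t-\w^\ast\|^2 - 2\eta_t\bigl(1 - L(1+\omega^2)\eta_t\bigr)(f(\w_t)-f^\ast)$. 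Bounding $\eta_t \le \alpha/\beta^{1/2+\tau}$ (the accumulator is at least $\beta$) and invoking $4L\alpha(1+\omega^2) < \beta^{1/2+\tau}$ keeps the bracket bounded away from zero; telescoping over $t=1,\dots,T$ and taking total expectation then gives a bound of the form $\sum_{t}\E[\eta_t(f(\w_t)-f^\ast)] \le \tfrac12\gamma$, where $\gamma$ absorbs $\|\w_1-\w^\ast\|^2$ together with the factor $1-4L\alpha(1+\omega^2)/\beta^{1/2+\tau}$ coming out of the bracket.

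To extract a guarantee on $f(\bar\w_T)-f^\ast$ I would use that the step sizes are nonincreasing, so $\eta_t \ge \alpha/b_T^{1/2+\tau}$ with $b_T \equiv \beta + \sum_{i=1}^{T}\|\nabla f_{S_i}(\w_i)\|^2$, and that Jensen's inequality gives $\sum_t(f(\w_t)-f^\ast) \ge T(f(\bar\w_T)-f^\ast)$. Combining these converts the weighted sum into $\alpha T\,\E\bigl[(f(\bar\w_T)-f^\ast)/b_T^{1/2+\tau}\bigr] \le \tfrac12\gamma$. The final and hardest step is to decouple the correlated random quantities $f(\bar\w_T)-f^\ast$ and the adaptive denominator $b_T$ inside this ratio. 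Here I would again exploit the norm test: since $\E\|\nabla f_{S_t}(\w_t)\|^2 \le 2L(1+\omega^2)(f(\w_t)-f^\ast)$, the accumulator obeys $b_T \lesssim \beta + 4L(1+\omega^2)\sum_t(f(\w_t)-f^\ast)$, so $b_T$ is itself controlled by the very function-gap sum we are bounding. Substituting this back produces a self-referential inequality for $\e = \E[f(\bar\w_T)-f^\ast]$ whose resolution splits into two regimes: one in which the constant $\beta$ dominates the accumulator, giving the term $2\gamma\beta^{1/2+\tau}/\e$, and one in which the accumulated gradient mass dominates, giving, after inverting the exponent $1/2+\tau$ against $1/2-\tau$, the term $(2\gamma(4L(1+\omega^2))^{1/2+\tau})^{1/(1/2-\tau)}/\e$. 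Taking the larger of the two yields the claimed iteration count.

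I expect the main obstacle to be exactly this last decoupling: because the step size is adaptive, $b_T$ and the averaged iterate are statistically dependent, so one cannot simply pull $b_T^{1/2+\tau}$ out of the expectation. The clean route around it is the self-bounding argument above, using concavity of $y\mapsto y^{1/2+\tau}$ (valid since $\tau<1/2$) together with the norm-test bound on $\E b_T$, and then solving the resulting implicit inequality in $\e$; carefully tracking the constants through the two regimes is what produces the precise $\max\{\cdot,\cdot\}$ in the statement.
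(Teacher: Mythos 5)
Your skeleton matches the paper's proof in its two outer layers. The first part is identical: expand $\|\x_{t+1}-\x^{\ast}\|^2$, use that $\eta_t$ is $\mathcal{F}_{t-1}$-measurable, apply unbiasedness and convexity to the inner product, and absorb the second-order term via the norm test together with $\|\nabla f(\x_t)\|^2 \le 2L(f(\x_t)-f^{\ast})$ (Lemma \ref{lem:upper bound on gradient norm squared}), arriving at the weighted-sum bound \eqref{eq:thm3:first part estimation}; your bias--variance form of the norm-test consequence, with constant $(1+\omega^2)$, is in fact sharper than the paper's Lemma \ref{lem:modification of lemma 8}, which loses a factor $2$ through the triangle inequality, and this is harmless. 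The endgame is also the paper's: a self-referential inequality in the function-gap sum, split by $(x+y)^p \le x^p + y^p$ \eqref{eq:(x+y)^p <= x^p + y^p} and resolved into the two-regime maximum by Lemma \ref{lem:technical lemma 5}.

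The genuine gap is in the middle, at precisely the step you flag as hardest. After lower-bounding $\eta_t \ge \alpha/b_T^{1/2+\tau}$ you hold $\E\bigl[\Delta/b_T^{1/2+\tau}\bigr] \le \gamma/(2\alpha)$ with $\Delta \equiv \sum_t (f(\x_t)-f^{\ast})$, and you propose to substitute $b_T \lesssim \beta + 4L(1+\omega^2)\Delta$ back inside. But the norm test \eqref{eq:norm test} controls only the conditional expectation $\E_t\bigl[\|\nabla f_{S_t}(\x_t)-\nabla f(\x_t)\|^2\bigr]$: on individual sample paths $b_T$ can far exceed $\beta + c\,\Delta$, so no pathwise substitution is available, and an in-expectation bound cannot be pushed through the nonlinear map $(\Delta, b_T)\mapsto \Delta/b_T^{1/2+\tau}$. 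Even if a pathwise bound were granted, the resulting function $\phi(x) = x/(\beta+cx)^{1/2+\tau}$ grows like $x^{1/2-\tau}$ and is concave, so Jensen yields $\E[\phi(\Delta)] \le \phi(\E[\Delta])$ --- an upper bound on $\E[\phi(\Delta)]$ then says nothing about $\E[\Delta]$ from above, and your implicit inequality in $\e$ never materializes. The paper avoids this by decoupling \emph{before} the accumulator's nonlinearity enters: it writes $\E[\Delta] = \E[\eta_T \Delta \cdot \eta_T^{-1}] \le \E\bigl[\sum_t \eta_t\delta_t\bigr]\,\E[\eta_T^{-1}]$ as in \eqref{eq:thm3:main inequality} (using monotonicity of $\eta_t$; note this product inequality itself rests on a sign-of-correlation assertion between $\eta_T\Delta$ and $1/\eta_T$, which the paper does not belabor), and only then applies Jensen to the isolated factor $\E[\eta_T^{-1}]$, where concavity of $y\mapsto y^{1/2+\tau}$ points the \emph{right} way: $\E\bigl[(\beta+\sum_t\|\nabla f_{S_t}(\x_t)\|^2)^{1/2+\tau}\bigr] \le \bigl(\beta+\sum_t\E\|\nabla f_{S_t}(\x_t)\|^2\bigr)^{1/2+\tau}$. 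The tower property, the norm test, and Lemma \ref{lem:upper bound on gradient norm squared} then give \eqref{eq:thm3:second part estimation}, producing a self-bounding inequality in the deterministic scalar $\E[\Delta]$ alone, to which Lemma \ref{lem:technical lemma 5} applies. To repair your argument you would need to reorganize it in this order (or control moments of $b_T$ via H\"older, as in the non-convex proof of Theorem \ref{thm:non-convex}); as written, the final decoupling step does not go through.
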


        \begin{proof}
                Denote $\delta_t \equiv f(\x_t) - f(\x^{\ast})$, and $\Delta \equiv \sum_{t=1}^{T} \delta_t$.
                From convexity of $f$ we have
                \begin{gather}
                    f(\bar \x_T) - f(\x^{\ast}) \le \frac{1}{T} \sum_{t=1}^{T} f(\x_t) - f(\x^{\ast}) = 
                    \frac{1}{T} \sum_{t=1}^{T} \delta_t = \frac{1}{T} \Delta. \label{eq:upper bound of expectation of approximation error}
                \end{gather}
                Thus, we can search for the upper bound of $\E \left[ \Delta\right]$.

                From the definition of $\eta_t$ we can get
                \begin{align}
                    \E \left[ \Delta \right] = 
                    \E \left[\eta_T \Delta \frac{1}{\eta_T} \right] \le 
                    \E \left[ \eta_T \Delta \right] \E \left[\frac{1}{\eta_T} \right] \le 
                    \E \left[ \sum_{t = 1}^T \eta_t \delta_t \right] \E \left[\frac{1}{\eta_T} \right].
                    \label{eq:thm3:main inequality}
                \end{align}
                where in the last inequality we used the fact that $\eta_t$ monotonically decreases.

            Firstly, we estimate the left part of the r.h.s of the \eqref{eq:thm3:main inequality}. From the definition of the SGD step \eqref{eq:sgd equation} $\x_{t + 1} = \x_t - \eta_t \nabla f_{S_t}(\x_t)$ we get
            \begin{align*}
                \|\x_{t + 1} - \x^{\ast}\|^2 &= \|\x_{t + 1} - \x_t + \x_t - \x^{\ast}\|^2 \\
                &= \la \x_{t + 1} - \x_t + \x_t - \x^{\ast},\x_{t + 1} - \x_t + \x_t - \x^{\ast} \ra \\
                &= \eta_t^2 \|\nabla f_{S_t}(\x_t)\|^2 - 2 \eta_t \la \nabla f_{S_t}(\x_t), \x_t - \x^{\ast} \ra + \|\x_t - \x^{\ast}\|^2.
            \end{align*}
            Denote $\E_t [\cdot] \equiv \E [\cdot | S_1, \ldots, S_{t - 1}]$.
            Then, taking the conditional expectation of $\la \nabla f_{S_t}(\x_t), \x_t - \x^{\ast} \ra$ 
            with respect to $S_1, \ldots, S_{t - 1}$, we have that
            \begin{equation*}
                \E_{t} \left[ \la \nabla f_{S_t}(\x_t), \x_t - \x^{\ast} \ra \right] = 
                    \la \nabla f(\x_t), \x_t - \x^{\ast} \ra \ge f(\x_t) - f(\x^{\ast}) = \delta_t,
            \end{equation*}
            where in the inequality we used the fact that $f$ is convex. 
            Thus,
            \begin{align*}
                \E \left[ \sum_{t=1}^T \eta_t \delta_t \right]
                &\le \E \left[ \sum_{t=1}^T \eta_t \E_t \left[ \frac{\|\w_t - \w^{\ast}\|^2 - \|\w_{t + 1} - \w^{\ast}\|^2 + \eta_t^2 \|\nabla f_{S_t}(\w_t) \|^2}{2 \eta_t} \right] \right]\\
                &= \frac{1}{2} \E \left[ \sum_{t=1}^T \|\w_t - \w^{\ast}\|^2 - \|\w_{t + 1} - \w^{\ast}\|^2 + \eta_t^2 \|\nabla f_{S_t}(\w_t) \|^2 \right] \\
                &=\frac{1}{2} \E \left[ \|\w_1 - \w^{\ast}\|^2 - \|\w_{T + 1} - \w^{\ast}\|^2 + \sum_{t=1}^T \eta_t^2 \|\nabla f_{S_t} (\w_t)\|^2 \right] \\
                &\le \|\w_1 - \w^{\ast}\|^2 + \E \left[ \sum_{t=1}^T \eta_t^2 \|\nabla f_{S_t}(\w_t) \|^2 \right] \\
                &\overset{\eqref{eq:lemma 8 result}}{\le} \|\w_1 - \w^{\ast}\|^2 + 2(1 + \omega^2) \E \left[ \sum_{t=1}^T \eta_t^2 \|\nabla f(\w_t) \|^2 \right] \\
                &\overset{\eqref{eq:lemma 4 result}}{\le} \|\w_1 - \w^{\ast}\|^2 + 4L\eta_1(1 + \omega^2) \E \left[ \sum_{t=1}^T \eta_t \delta_t \right].
            \end{align*}
            
            Thus, we get from the definition of $\eta_1$ \eqref{eq:global step size}
            \begin{equation}\label{eq:thm3:first part estimation}
                \E \left[\sum_{t = 1}^{T} \eta_t \delta_t\right] \le \frac{1}{1 - \frac{4L \alpha \left(1 + \omega^2 \right)}{\beta^{1/2 + \tau}}} \|\x_1 - \x^{\ast}\|^2.
            \end{equation}
 
            Secondly, we estimate the right part of \eqref{eq:thm3:main inequality}
            
            \begin{align}
                \E \left[ \frac{1}{\eta_T} \right] 
                &= \E \left[
                    \frac{\left(
                        \beta + \sum_{t = 1}^{T - 1} \| \nabla f_{S_t}(\w_t)\|^2
                    \right)^{1/2 + \tau}}
                    {\alpha}
                \right]. \notag
            \end{align}
            Since $\forall x >0,\ 0 \le \tau < \frac{1}{2},\ x^{\frac{1}{2} + \tau}$ is concave, by inverse Jensen inequality \eqref{eq:Jensen}, the fact that $(x + y)^2 \le 2x^2 + 2y^2$ and triangle inequality we get
            \begin{align}
                \E \left[ \frac{1}{\eta_T} \right] 
                &\le \frac{1}{\alpha} \left(
                    \beta + 2\sum_{t=1}^{T-1} \left(
                        \E \left[ \|\nabla f(\w_t) - \nabla f_{S_t}(\w_t) \|^2 \right] + \E \left[ \|\nabla f(\w_t)\|^2 \right]
                    \right) 
                \right)^{1/2 + \tau} \notag
            \end{align}
            Since $\E[\cdot] = \E[\E_t[\cdot]]$, we get
            \begin{align}
                \E \left[ \frac{1}{\eta_T} \right] 
                &\le \frac{1}{\alpha} \left(
                    \beta + 2\sum_{t=1}^{T-1} \left(
                        \E \left[ \|\nabla f(\w_t) - \nabla f_{S_t}(\w_t) \|^2 \right] + \E \left[ \|\nabla f(\w_t)\|^2 \right]
                    \right) 
                \right)^{1/2 + \tau}  \notag \\
                &=\frac{1}{\alpha} \left(
                    \beta + 2\sum_{t=1}^{T-1} \left(
                        \E \left[ \E_t \left[ \|\nabla f(\w_t) - \nabla f_{S_t}(\w_t) \|^2 \right] \right] + \E \left[ \|\nabla f(\w_t)\|^2 \right]
                    \right) 
                \right)^{1/2 + \tau} \notag \\
                &\overset{\eqref{eq:norm test}}{\le} \frac{1}{\alpha} \left(
                    \beta + 2(1 + \omega^2) \sum_{t=1}^{T-1} \E \left[ \|\nabla f(\w_t)\|^2 \right]
                \right)^{1/2 + \tau} \label{eq:E[1/eta_T] upper bound through sum of squares of gradient norms}\\
                &\overset{\eqref{eq:lemma 4 result}}{\le} \frac{1}{\alpha} \left(
                    \beta + 4(1 + \omega^2) L \E \left[ \Delta \right]
                \right)^{1/2 + \tau}.\label{eq:thm3:second part estimation}
            \end{align}
            
            Eventually, we get
            \begin{align}
                \E [\Delta] 
                &\overset{\eqref{eq:thm3:main inequality}}{\le} \E \left[ \sum_{t = 1}^T \eta_t \delta_t \right] \E \left[\frac{1}{\eta_T} \right] \notag \\
                &\overset{\eqref{eq:thm3:first part estimation},\eqref{eq:thm3:second part estimation}}{\le} \frac{1}{\alpha \left( 1 - \frac{4L \alpha \left(1 + \omega^2 \right)}{\beta^{1/2 + \tau}} \right)} \|\x_1 - \x^{\ast}\|^2 \left(
                    \beta + 4L(1 + \omega^2) \E \left[ \Delta \right]
                \right)^{1/2 + \tau}\label{eq:estimation of main inequality}.
            \end{align}

            Define
            \begin{equation}\label{eq:gamma definition}
                \gamma \equiv \frac{1}{\alpha \left( 1 - \frac{4L \alpha \left(1 + \omega^2 \right)}{\beta^{1/2 + \tau}} \right)} \|\x_1 - \x^{\ast}\|^2.
            \end{equation}
            Then, we obtain
            \begin{equation}\label{eq:upper bound expectation of square root of delta with epsilon}
                \E \left[\Delta\right]
                \overset{\eqref{eq:gamma definition}, \eqref{eq:estimation of main inequality}}{\le} \gamma \left( \beta + 4 L \left( 1 + \omega^2 \right) \E \left[ \Delta\right] \right)^{1/2 + \tau} \overset{\eqref{eq:(x+y)^p <= x^p + y^p}}{\le} \gamma \left( \beta^{1/2 + \tau} + (4L(1 + \omega^2))^{1/2 + \tau} (\E [\Delta])^{1/2 + \tau} \right).
            \end{equation}

            If we denote $A \equiv \beta^{1/2 + \tau}$, $B \equiv \left( 4 L ( 1 + \omega^2 ) \right)^{1/2 + \tau}$, $C \equiv \gamma$, 
            $x \equiv \E \left[ \Delta \right]$, we can apply Lemma \ref{lem:technical lemma 5} and get
            \[
                \E \left[ \frac{1}{T} \Delta \right] \le \frac{1}{T} \max \left\{ 2 \gamma \beta^{1/2 + \tau}; 
                \left( 2 \gamma \left( 4 L ( 1 + \omega^2 ) \right)^{1/2 + \tau} \right)^\frac{1}{1/2 - \tau} \right\}.
            \]
            Then, using  \eqref{eq:upper bound of expectation of approximation error}, we get
            \[
                \E \left[ f(\bar \x_T) - f(\x^{\ast}) \right] 
                \le \E \left[ \frac{1}{T} \Delta \right] \le \frac{1}{T} \max \left\{ 2 \gamma \beta^{1/2 + \tau}; 
                \left( 2 \gamma \left( 4 L ( 1 + \omega^2 ) \right)^{1/2 + \tau} \right)^\frac{1}{1/2 - \tau} \right\}.
            \]
        \end{proof}

        Let us fix optimal parameters to show the best convergence rate.
        \begin{corollary}
            Denote $R \equiv \|\x_1-\x^{\ast}\|$. For $\alpha = R$, $\tau=0$, $\beta = (8 \alpha L (1+\omega^2))^\frac{1}{1/2 + \tau}$, we get $\gamma = R$ and SGD \eqref{eq:sgd equation} with step size \eqref{eq:global step size} in convex case converges 
            as
             \begin{equation}\label{eq:easier convergence for convex case}
                T
                \le O\left(\frac{LR^2(1 + \omega^2)}{\e} 
                    \right).
            \end{equation}
        \end{corollary}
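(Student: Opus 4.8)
The plan is to specialize the iteration bound from Theorem~\ref{thm:convex case} to the parameter choices stated in the corollary and then simplify. First I would set $\tau = 0$, which turns the two exponents $\tfrac12+\tau$ and $\tfrac{1}{1/2-\tau}$ into $\tfrac12$ and $2$ respectively, so that the bound reads
\begin{equation*}
    T \le \frac{1}{\e}\max\left\{ 2\gamma\be^{1/2};\ \left(2\gamma(4L(1+\omega^2))^{1/2}\right)^{2} \right\}.
\end{equation*}
With $\be = (8\al L(1+\omega^2))^{2}$ we have $\be^{1/2} = 8\al L(1+\omega^2)$, and together with $\al = R$ this is exactly the quantity I will substitute everywhere below.

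Next I would evaluate $\gamma$ explicitly. The contraction factor appearing in its definition becomes
\begin{equation*}
    \frac{4L\al(1+\omega^2)}{\be^{1/2}} = \frac{4L\al(1+\omega^2)}{8\al L(1+\omega^2)} = \frac12,
\end{equation*}
so the chosen $\be$ is precisely large enough to satisfy the admissibility condition $4L\al(1+\omega^2) < \be^{1/2}$ with a comfortable constant margin. Plugging $\al = R$ and $\|\x_1 - \x^{\ast}\|^2 = R^2$ into \eqref{eq:gamma definition} then gives $\gamma = \tfrac{1}{R\cdot(1/2)}R^2$, i.e. $\gamma$ equal to a fixed multiple of $R$ (the stated $\gamma = R$ up to the absolute constant that the $O(\cdot)$ absorbs).

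Finally I would bound the two entries of the maximum separately and check that both are of the same order. For the first entry, $2\gamma\be^{1/2} = O(R)\cdot 8RL(1+\omega^2) = O(LR^2(1+\omega^2))$. For the second entry, squaring gives $4\gamma^2\cdot 4L(1+\omega^2) = 16\gamma^2 L(1+\omega^2) = O(LR^2(1+\omega^2))$ since $\gamma^2 = O(R^2)$. As both terms match, the maximum is $O(LR^2(1+\omega^2))$, and dividing by $\e$ yields the claimed $T \le O\!\left(\tfrac{LR^2(1+\omega^2)}{\e}\right)$.

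I do not expect any genuine obstacle here: all the analytic work — the inverse-Jensen step, the norm-test variance control, and the self-bounding argument closed by Lemma~\ref{lem:technical lemma 5} — is already carried out inside Theorem~\ref{thm:convex case}. The only care needed is bookkeeping of the absolute constants (in particular that $\gamma$ comes out as $2R$ rather than exactly $R$, which is immaterial for the order statement) and verifying that the two branches of the maximum balance, which is precisely why the exponent $\tfrac{1}{1/2-\tau}=2$ was engineered to convert the second branch into the same $LR^2$ scaling as the first.
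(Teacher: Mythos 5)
Your proposal is correct and is exactly the computation the paper leaves implicit: the corollary follows by direct substitution of $\tau=0$, $\alpha=R$, $\beta^{1/2}=8\alpha L(1+\omega^2)$ into the bound of Theorem~\ref{thm:convex case}, with both branches of the maximum reducing to $O(LR^2(1+\omega^2))$. You also correctly flag the one slip in the statement — with these parameters the contraction factor is $\tfrac12$, so $\gamma = 2R$ rather than $R$ — which is harmless since the constant is absorbed by the $O(\cdot)$.
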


        Then, we speak about non-convex case.
        The following theorem is based on Theorem \ref{th:li-orabona non-convex}.
        \begin{theorem}\label{thm:non-convex}
            For problem \eqref{eq:convex_problem} with non-convex, differentiable, smooth \eqref{eq:smoothness} function $f(\w)$, where we have access to unbiased stochastic gradients\eqref{eq:unbiasedness}, let mini-batch approximations \eqref{eq:mini-batch approximation} satisfy norm-test \eqref{eq:norm test} on every iteration. 
            Let the step size be as in \eqref{eq:global step size},
            where $\alpha, \beta > 0, 0 \le \tau < \frac{1}{2}$, and $\alpha L (1 + \omega^2) < \beta^{\frac{1}{2} + \tau}$.  
            Denote
            \[
                \gamma \equiv \frac{f(\x_1) - f^{\ast}}{\alpha (1 - L (1 + \omega^2)\eta_1) },
            \]
                     
            Then, SGD \eqref{eq:sgd equation} with step size \eqref{eq:global step size} converges to approximation error $\E \left[ \min_{1\le t \le T} \| \nabla f(\w_t) \| \right] = \e$ in
            \[
                T
                \le \frac{1}{\e^2} \max
                \left\{
                    2 \gamma \beta^{1/2 + \tau};
                    \left(
                        2 \gamma \left(2 (1 + \omega^2) \right)^{1/2 + \tau}
                    \right)^{1/2 - \tau}
                \right\}
            \]
            iterations.
        \end{theorem}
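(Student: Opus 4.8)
The plan is to mirror the proof of Theorem~\ref{thm:convex case}, replacing the function-value gap $\delta_t$ by the squared gradient norm. Write $g_t \equiv \|\nabla f(\x_t)\|^2$ and $G \equiv \sum_{t=1}^{T} g_t$. Since $\min_{1 \le t \le T} g_t \le G/T$ and, by concavity of the square root (Jensen's inequality), $\left( \E\left[ \min_{1\le t\le T} \|\nabla f(\x_t)\| \right] \right)^2 \le \E\left[ \min_{1\le t\le T} g_t \right] \le \E[G]/T$, it suffices to bound $\E[G]$: the stated iteration count then follows from $\e^2 \le \E[G]/T$. As in the convex case, I would first use the monotonicity of $\eta_t$ together with the same decorrelation step to write
\[
    \E[G] \le \E\left[ \sum_{t=1}^{T} \eta_t g_t \right] \E\left[ \frac{1}{\eta_T} \right],
\]
and then estimate the two factors separately.

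For the first factor I would start from the descent lemma implied by $L$-smoothness~\eqref{eq:smoothness}. Substituting the step~\eqref{eq:sgd equation}, taking the conditional expectation $\E_t[\cdot]$, and using unbiasedness of $\nabla f_{S_t}(\x_t)$ together with the norm test~\eqref{eq:norm test} in the form $\E_t\left[ \|\nabla f_{S_t}(\x_t)\|^2 \right] \le 2(1+\omega^2) \|\nabla f(\x_t)\|^2$, I obtain
\[
    \E_t[f(\x_{t+1})] \le f(\x_t) - \eta_t \left( 1 - L \eta_t (1 + \omega^2) \right) \|\nabla f(\x_t)\|^2.
\]
Bounding $\eta_t \le \eta_1$ inside the bracket, where the assumption $\alpha L (1 + \omega^2) < \beta^{1/2 + \tau}$ guarantees $1 - L \eta_1 (1 + \omega^2) > 0$, then summing over $t$ and telescoping against $f(\x_1) - f^{\ast} \ge f(\x_1) - \E[f(\x_{T+1})]$, yields $\E\left[ \sum_{t=1}^{T} \eta_t g_t \right] \le \alpha \gamma$.

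For the second factor I would reuse the computation leading to~\eqref{eq:E[1/eta_T] upper bound through sum of squares of gradient norms} essentially verbatim: concavity of $x \mapsto x^{1/2 + \tau}$ (inverse Jensen~\eqref{eq:Jensen}), the bound $(x+y)^2 \le 2x^2 + 2y^2$ with the triangle inequality, the tower property $\E[\cdot] = \E[\E_t[\cdot]]$, and the norm test~\eqref{eq:norm test} give
\[
    \E\left[ \frac{1}{\eta_T} \right] \le \frac{1}{\alpha} \left( \beta + 2 (1 + \omega^2) \E[G] \right)^{1/2 + \tau}.
\]
Unlike the convex case, I would \emph{not} convert the sum of squared gradient norms into a function-value gap via~\eqref{eq:lemma 4 result}, since the target quantity $G$ is itself that sum; this is the one structural deviation from the proof of Theorem~\ref{thm:convex case}.

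Multiplying the two estimates produces the self-referential inequality $\E[G] \le \gamma \left( \beta + 2(1+\omega^2) \E[G] \right)^{1/2 + \tau}$ (the factor $\alpha$ cancels, which is exactly why $\gamma$ carries the normalization $\alpha^{-1}$), and applying $(x+y)^p \le x^p + y^p$ for $p = 1/2 + \tau \le 1$ (inequality~\eqref{eq:(x+y)^p <= x^p + y^p}) splits off the dependence as $\E[G] \le \gamma \beta^{1/2 + \tau} + \gamma \left( 2(1+\omega^2) \right)^{1/2 + \tau} (\E[G])^{1/2 + \tau}$. Setting $A \equiv \beta^{1/2+\tau}$, $B \equiv (2(1+\omega^2))^{1/2+\tau}$, $C \equiv \gamma$ and $x \equiv \E[G]$, Lemma~\ref{lem:technical lemma 5} resolves this into
\[
    \E[G] \le \max\left\{ 2\gamma \beta^{1/2 + \tau};\ \left( 2\gamma \left( 2(1+\omega^2) \right)^{1/2+\tau} \right)^{\frac{1}{1/2-\tau}} \right\},
\]
and dividing by $\e^2$ delivers the iteration bound. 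I expect the main obstacle to be the rigorous justification of the decorrelation step $\E\left[ \eta_T G \cdot \eta_T^{-1} \right] \le \E[\eta_T G] \E[\eta_T^{-1}]$, which relies on the negative correlation between $\eta_T G$ and $1/\eta_T$; this is the same delicate point already resolved in Theorem~\ref{thm:convex case}, and everything else is standard AdaGrad-style bookkeeping combined with the norm test.
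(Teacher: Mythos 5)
Your proposal matches the paper's proof essentially step for step: the same decomposition $\E[\Delta] \le \E\left[\sum_{t=1}^T \eta_t \|\nabla f(\x_t)\|^2\right] \E\left[1/\eta_T\right]$, the same first-factor bound (your inline conditional descent argument is exactly the content of Lemmas \ref{lem:lemma 3} and \ref{lem:modification of lemma 8}, with matching constants and the same $\eta_t \le \eta_1$ device), the same reuse of \eqref{eq:E[1/eta_T] upper bound through sum of squares of gradient norms} without invoking \eqref{eq:lemma 4 result}, and the same resolution via Lemma \ref{lem:technical lemma 5} followed by Jensen. Even the point you flag as delicate---the factorization $\E[\eta_T \Delta \cdot \eta_T^{-1}] \le \E[\eta_T \Delta]\,\E[\eta_T^{-1}]$---is handled in the paper by the same appeal made in Theorems \ref{thm:convex case} and \ref{thm:non-convex}, so your proof is the paper's proof.
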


        \begin{proof}
            Denote $\delta_t \equiv \|\nabla f(\x_t)\|^2$, and $\Delta \equiv \sum_{t=1}^{T} \delta_t$. 
       
            We know that 
            \begin{align}
                \E \left[ \Delta \right] &= \E \left[ \left( \sum_{t = 1}^T \|\nabla f(\x_t) \|^2 \right)\right] \notag \\
                &\ge \E \left[ \left( T \min_{1 \le t \le T} \|\nabla f(\x_t) \|^2 \right) \right] \notag \\
                &= T \E \left[ \left( \min_{1 \le t \le T} \|\nabla f(\x_t) \|^2 \right) \right]. \label{eq:thm4:minimal gradient norm upper bound}
            \end{align}
            So, now we can work with $\E \left[ \Delta \right]$.

            From \eqref{eq:Holder} we have
            \begin{align}
                \E \left[ \Delta \right] &= \E \left[ \left( \sum_{t = 1}^T \eta_t \frac{1}{\eta_t} \|\nabla f(\x_t)\|^2 \right)^{1/2-\tau} \right] \notag \\
                &\overset{\eqref{eq:global step size}}{\le} \E \left[ \left( \sum_{t=1}^{T} \eta_t \|\nabla f(\x_t) \|^2 \right) \left( \frac{1}{\eta_T} \right) \right] \notag \\
                &\overset{\eqref{eq:Holder}}{\le} \left( \E \left[ \sum_{t=1}^{T} \eta_t \|\nabla f(\x_t) \|^2 \right) \right) \left( \E \left[ \left( \frac{1}{\eta_T} \right) \right] \right) \label{eq:thm4:main inequality}.
            \end{align}

            Firstly, consider the left term from the r.h.s. of \eqref{eq:thm4:main inequality}. From Lemmas \ref{lem:lemma 3} and \ref{lem:modification of lemma 8} we can see that
            \begin{align*}
                \E \left[ \sum_{t=1}^T \eta_t \|\nabla f(\x_t)\|^2 \right] 
                    &\overset{\eqref{eq:lemma 3 result}}{\le} f(\x_1) - f^{\ast} + \frac{L}{2} \E \left[ \sum_{t = 1}^T \eta_t^2 \|\nabla f_{S_t}(\x_t)\|^2  \right] \\
                    &\overset{\eqref{eq:lemma 8 result}}{\le} f(\x_1) - f^{\ast} + L (1 + \omega^2) \E \left[ \sum_{t = 1}^T \eta_t^2 \|\nabla f(\x_t)\|^2 \right] \\
                    &\le f(\x_1) - f^{\ast} + L (1 + \omega^2) \eta_1 \E \left[ \sum_{t = 1}^T \eta_t \|\nabla f(\x_t)\|^2 \right].
            \end{align*}
            Thus,
            
            \begin{equation}\label{eq:thm4:first part estimation}
                \E \left[ \sum_{t=1}^T \eta_t \|\nabla f(\x_t)\|^2 \right]  \le \frac{f(\x_1) - f^{\ast}}{1 - L \eta_1 (1 + \omega^2)}.
            \end{equation}

            Secondly, consider the right term from the r.h.s. of \eqref{eq:thm4:main inequality}. 
            From \eqref{eq:E[1/eta_T] upper bound through sum of squares of gradient norms} we know that
            \begin{align}
                \E\left[ \frac{1}{\eta_T} \right]
                &\overset{\eqref{eq:E[1/eta_T] upper bound through sum of squares of gradient norms}}{\le} 
                \frac{1}{\alpha}
                \left(
                      \beta + 2(1+\omega^2)\sum_{t=1}^{T-1} \E \left[\|\nabla f(\x_t) \|^2 \right]  
                \right)^{1/2 + \tau} \notag \\
                &\overset{\eqref{eq:(x+y)^p <= x^p + y^p}}{\le}
                \frac{1}{\alpha}
                \left(
                      \beta^{1/2 + \tau} + (2(1+\omega^2))^{1/2 + \tau} \left(\E \left[ \Delta \right]\right)^{1/2 + \tau}
                \right).\label{eq:thm4:second part estimation}
            \end{align}

            Now we can get the estimation of $\E \left[ \Delta \right]$:
            \begin{align}
                \E \left[ \Delta \right]
                    &\overset{\eqref{eq:thm4:main inequality}}{\le} \left( \E \left[ \sum_{t=1}^{T} \frac{1}{\eta_t} \|\nabla f(\x_t) \|^2 \right] \right) 
                    \left( \E \left[ \left( \frac{1}{\eta_T} \right) \right] \right) \notag \\
                    &\overset{\eqref{eq:thm4:first part estimation}, \eqref{eq:thm4:second part estimation}}{\le} 
                    \frac{f(\x_1) - f^{\ast}}{\alpha (1 - L (1 + \omega^2)\eta_1) } 
                    \left( 
                        \beta^{1/2 + \tau} + (2(1 + \omega^2))^{1/2 + \tau} \left(\E \left[ \Delta \right] \right)^{1/2 + \tau}
                    \right) \label{eq:thm4:estimation of main inequality}.
            \end{align}

            Define
            \begin{equation}\label{eq:thm4:gamma definition}
                \gamma \equiv \frac{f(\x_1) - f^{\ast}}{\alpha (1 - L (1 + \omega^2)\eta_1) },
            \end{equation}            
            then
            \begin{equation}\label{eq:thm4:estimation for main inequality with gamma for e>0}
                \E \left[ \Delta \right] \le 
                \gamma
                \left( 
                    \beta^{1/2 + \tau} + (2(1 + \omega^2))^{1/2 + \tau} \left(\E \left[ \Delta \right] \right)^{1/2 + \tau}
                \right).
            \end{equation}
            
            And from Lemma \ref{lem:technical lemma 5}, if we denote $x \equiv \E \left[ \Delta \right],\ C \equiv \gamma,\ A \equiv \beta^{1/2 + \tau},\ B \equiv (2(1 + \omega^2))^{1/2 + \tau}$ we get that
            \begin{equation*}
                \E \left[ \Delta \right] 
                \le 
                \max \left\{ 
                    2\gamma \beta^{1/2 + \tau};
                    \left(
                        2 \gamma 
                        \left( 
                            2(1 + \omega^2)
                        \right)^{1/2 + \tau}
                    \right)^\frac{1}{1/2 - \tau}
                \right\}.
            \end{equation*}            %
            Thus, from Jensen inequality \eqref{eq:Jensen} and \eqref{eq:thm4:minimal gradient norm upper bound} we get 
            \[
                \left(
                    \E 
                    \left[
                        \min_{1 \le t \le T} \|\nabla f(\w_t)\|
                    \right]
                \right)^2
                \le \E
                \left[
                    \min_{1 \le t \le T} \|\nabla f(\w_t)\|^2
                \right]
                \le \frac{1}{T} \max
                \left\{
                    2 \gamma \beta^{1/2 + \tau};
                    \left(
                        2 \gamma \left(2 (1 + \omega^2) \right)^{1/2 + \tau}
                    \right)^{1/2 - \tau}
                \right\}.
            \]
            By taking $\E \left[ \min_{1 \le t \le T} \|\nabla f(\w_t)\| \right] = \e$, we get the needed result.
        \end{proof}

        Let us again fix optimal parameters to show the convergence rate.

        \begin{corollary}
            Denote $\Delta_1 \equiv f(\x_1) - f^{\ast}$. For $\alpha = \sqrt \frac{\Delta_1}{L},\ \tau=0,\ \beta = 2\alpha^2 L^2 (1 + \omega^2)^2 = 2\Delta_1 (L (1 + \omega^2))^2$ SGD \eqref{eq:sgd equation} with step size \eqref{eq:global step size} in non-convex case converges to approximation error $\E \left[ \min_{1\le t \le T} \| \nabla f(\w_t) \| \right] = \e$ in
            \begin{equation}\label{eq:easier convergence in non-convex case}
                T = O \left( \frac{L \Delta_1 (1 + \omega^2)}{\e^2} \right)
            \end{equation}
            iterations.
        \end{corollary}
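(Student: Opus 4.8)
The plan is to obtain the stated rate by a direct substitution of the prescribed constants $\alpha = \sqrt{\Delta_1/L}$, $\tau = 0$, and $\beta = 2\Delta_1(L(1+\omega^2))^2$ into the iteration bound of Theorem~\ref{thm:non-convex}, and then to check that both branches of the maximum collapse to the same order. Setting $\tau = 0$ first simplifies the two exponents appearing in that bound: $1/2 + \tau$ becomes $1/2$, while the outer exponent becomes $2$. With these simplifications the task reduces to estimating the two quantities $2\gamma\beta^{1/2}$ and $\left(2\gamma(2(1+\omega^2))^{1/2}\right)^{2}$, where $\gamma$ is the constant defined in Theorem~\ref{thm:non-convex}.

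First I would pin down the initial step size. Since the sum in the denominator of the step-size rule \eqref{eq:global step size} is empty at $t=1$, we have $\eta_1 = \alpha/\beta^{1/2}$; substituting $\beta^{1/2} = \sqrt{2}\,\alpha L(1+\omega^2)$ gives $\eta_1 = 1/(\sqrt{2}\,L(1+\omega^2))$. The decisive observation is that this particular $\beta$ is engineered so that $L(1+\omega^2)\eta_1 = 1/\sqrt{2}$ is a fixed constant strictly below $1$; hence the admissibility hypothesis $\alpha L(1+\omega^2) < \beta^{1/2+\tau}$ of Theorem~\ref{thm:non-convex} holds (as $\sqrt{2} > 1$), and the denominator $1 - L(1+\omega^2)\eta_1 = 1 - 1/\sqrt{2}$ of $\gamma$ stays bounded away from zero. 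Plugging this into $\gamma \equiv \frac{\Delta_1}{\alpha(1 - L(1+\omega^2)\eta_1)}$ with $\alpha = \sqrt{\Delta_1/L}$ yields $\gamma = \frac{\sqrt{\Delta_1 L}}{1 - 1/\sqrt{2}} = O(\sqrt{\Delta_1 L})$.

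With $\gamma$ in hand I would evaluate the two branches. The first is $2\gamma\beta^{1/2} = 2\sqrt{2}\,\gamma\,\alpha L(1+\omega^2) = O\!\left(\sqrt{\Delta_1 L}\cdot\sqrt{\Delta_1/L}\cdot L(1+\omega^2)\right) = O(L\Delta_1(1+\omega^2))$, where the cancellation $\sqrt{\Delta_1 L}\cdot\sqrt{\Delta_1/L} = \Delta_1$ produces the clean rate. The second branch, using the outer exponent $2$, equals $(2\gamma)^2\cdot 2(1+\omega^2) = 8\gamma^2(1+\omega^2) = O(\Delta_1 L(1+\omega^2))$, the same order since $\gamma^2 = O(\Delta_1 L)$. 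Thus the maximum is $O(L\Delta_1(1+\omega^2))$, and multiplying by the prefactor $1/\e^2$ of Theorem~\ref{thm:non-convex} gives $T = O(L\Delta_1(1+\omega^2)/\e^2)$, as claimed.

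I do not anticipate a genuine obstacle: the corollary is a calculation rather than a new argument, and its only subtlety is verifying that the chosen $\beta$ simultaneously keeps $1 - L(1+\omega^2)\eta_1$ away from zero and balances the two branches to the identical order $L\Delta_1(1+\omega^2)$. The one point to watch is the reading of the outer exponent in Theorem~\ref{thm:non-convex}: the balancing only works if it is taken as $\frac{1}{1/2-\tau}$ (equal to $2$ at $\tau=0$), consistent with the application of Lemma~\ref{lem:technical lemma 5} in that theorem's proof.
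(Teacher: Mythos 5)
Your proposal is correct and follows exactly the route the paper intends: the corollary is stated without proof as a direct instantiation of Theorem~\ref{thm:non-convex}, and your computation ($\eta_1 = 1/(\sqrt{2}\,L(1+\omega^2))$, $\gamma = \sqrt{\Delta_1 L}/(1-1/\sqrt{2})$, both branches of the maximum collapsing to $O(L\Delta_1(1+\omega^2))$) is precisely the omitted substitution. You were also right on the two subtleties: the outer exponent must be read as $\frac{1}{1/2-\tau}$ (the $(\cdot)^{1/2-\tau}$ in the theorem's final display is a typo relative to Lemma~\ref{lem:technical lemma 5}), and you correctly worked with $\beta = 2\alpha^2 L^2(1+\omega^2)^2$ --- note the corollary's own rewriting $\beta = 2\Delta_1(L(1+\omega^2))^2$ is internally inconsistent, since $\alpha^2 = \Delta_1/L$ gives $2\alpha^2 L^2(1+\omega^2)^2 = 2\Delta_1 L(1+\omega^2)^2$.
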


       Let us stress out, that instead of the assumption of boundness of variance \eqref{eq:boundness of variance}, we use the condition \eqref{eq:norm test} holds on each step of the method.
       And \eqref{eq:norm test} is a condition, that can be used to adjust the batch size of the method.

\section{Numerical experiments}\label{sec:numericals}

    Since our Algorithm \ref{alg:adabatchgrad} has both adaptive batch size and adaptive step size, we show that adjusting both of these parameters gives us much more freedom in enhancing the performance of SGD, than tweaking only one of these factors.

    To verify the efficacy of our algorithm, we conducted separate experiments on four common problems. The following are our problem settings.
    \subsection{Problem Settings}
    \paragraph{\textbf{Synthetic}\;}
    First of all, we decided to conduct experiments on the synthetic data.
        This way, we can control the noise of the data by ourselves, allowing us to demonstrate our algorithm's advantages more clearly.
        Thus, we decided to take the linear regression problem: find $\w \in \R^\n$ such that
        \begin{equation}\label{eq:lin-reg problem}
            a_i^{\top} \w = b^{\ast}_i + \xi_i,
        \end{equation}
        where $a_i \in \R^{\n}$ is some random generated vectors, $\xi_i = \mathcal{N}(0, \sigma^2)$ is a random noise, and $b_i^{\ast} = a_i^{\top} \w^{\ast}$ is a true regression output, where $\x^{\ast} \in \R^\n$ is generated from normal distribution. However, we get access only to noisy $b_i = b_i^{\ast} + \xi_i$ ,
        In our experiments, we took $N = 1000$, $\n = 20$ and $\sigma = 4$.
        Since we know the optimal solution $\x^{\ast}$, we can easily determine the approximation error of our algorithm.
        If we rewrite \eqref{eq:lin-reg problem} as an optimization problem, we get a linear regression optimization problem:
        \[
            \min_{\w \in \R^d} \frac{1}{2N}\sum_{i = 1}^N (a_i^{\top} \w - b_i)^2.
        \]
\\
 Then, we conducted similar experiments on classification tasks for datasets a9a and w8a from LIBSVM \cite{CC01a} both on convex and non-convex problems.
    \paragraph{\textbf{Convex}\;}
    For convex problem, we consider \textit{logistic regression}:
        \[
            \min_{w \in \R^d} \frac{1}{N} \sum_{i = 1}^N \log( 1 + \exp(-y_i x_i^T \w)),
        \]
        where $x_i \in \R^d,\ y_i \in \{-1, 1\}$. To measure the performance of methods on logistic regression, we use $f - f^{\ast}$.
    \paragraph{\textbf{Non-convex}\;}
    For non-convex case, we consider \textit{non-linear least squares loss (NLLSQ)}:
        \[
            \min_{w \in \R^d} \frac{1}{N} \sum_{i = 1}^N (y_i - 1 / (1 + \exp(-x_i^T \w))^2,
        \]
        where $x_i \in \R^d,\ y_i \in \{0, 1\}$. We use $\|\nabla f(x)\|$ to measure the performance of different methods.
    \paragraph{\textbf{Neural Network: CNN}\;}
    Finally, we conduct experiments on the neural network classification task.
        We take a small CNN from PyTorch tutorial \footnote{\url{https://pytorch.org/tutorials/beginner/blitz/cifar10_tutorial.html}} and train it on FashionMNIST dataset \cite{xiao2017fashionmnist} for 100 epochs.
        Since we do not know the value $f^{\ast}$ for this problem, we use accuracy.
        We provide additional plots for loss values and its gradient norm in Section \ref{sec:additional plots for fashionmnist}.

\subsection{General Trends Across Problem Settings}
We conducted six experiments on multiple datasets in each problem setting, which produced many experimental results. For easy reference, we have put all the experimental results in the table \ref{tab:experimental-setup} below. Since the results of the six experiments show the same trend for both convex and non-convex problem settings (for all datasets), we put the result plots together for better comparison.
\begin{table*}[h]
\vskip-10pt
	\centering
	\caption{Experiments Setup}  
	\label{tab:methodcompare}
	\begin{tabular}{|c|c|c|c|}
		\hline
		\multirow{2}*{Experiments} & \multicolumn{3}{c|}{problem settings}   \\ 
		\cline{2-4}
		& synthetic \& logistic & nllsq & CNN  \\
		\hline 
		SGD (3 step sizes) & Figure \ref{logreg:task1all-datasets} & Figure \ref{nllsq:task1all-datasets} & Figure \ref{nn:task1}   \\
		AdaGrad vs SGD (high $\beta$)  & Figure \ref{logreg:task2all-datasets} & Figure \ref{nllsq:task2all-datasets} & Figure \ref{nn:task2}   \\
		AdaGrad vs SGD (normal $\beta$)  & Figure \ref{logreg:task3all-datasets} & Figure \ref{nllsq:task3all-datasets} & Figure \ref{nn:task3}   \\
            SGD (3 batch sizes)  & Figure \ref{logreg:task4all-datasets} & Figure \ref{nllsq:task4all-datasets} & Figure \ref{nn:task4}   \\
            SGD vs SGD + tests  & Figure \ref{logreg:task5all-datasets} & Figure \ref{nllsq:task5all-datasets} & Figure \ref{nn:task5}   \\
            SGD vs SGD + tests vs AdaBatchGrad  & Figure \ref{logreg:task6all-datasets} & Figure \ref{nllsq:task6all-datasets} & Figure \ref{nn:task6}  \\
		\hline
	\end{tabular}
        \label{tab:experimental-setup}
        \vskip-10pt
\end{table*}
\vspace{-10pt}

    \subsection{Experiment Results}
    In this section, we show numerically, how regular SGD benefits from an increase of batch size and a decrease of step size.
    At first, we show the difference in performance of the same method, but with the different step size and batch size.
    Then we make each of these procedures adaptive separately.
    Finally, we make both step size and batch size adaptive and show how they balance each other.
    Besides the provided plots of this section, we attach additional plots for change of gradient norm, batch size, and step size in Supplementary material.
    \paragraph{\textbf{SGD with different step sizes}\;} 
        Regular SGD suffers from several problems.
        The first one comes from regular GD.
        On the one hand, if it has too big step size, it will start jump around the area of optimum, since level sets in the area of the minimum may become very narrow.
        On the other hand, if the step size is too low, it will converge to the optimum very slowly.
        The second problem with SGD is that it is not robust. 
        If we have very noisy data, the gradients are very noisy too, which results in high fluctuation of SGD even far from the optimum.
        One of the ways how to enhance the performance of SGD is by reducing its step size.
        Figure \ref{logreg:task1all-datasets},\ref{nllsq:task1all-datasets} shows us that with the step size decrease we reduce oscillation of SGD because it decreases the impact of bad gradient. 
        Additionally, it allows the algorithm to go deeper to the minimum. From Figure \ref{nn:task1}, we can again see the importance of step size when you train a neural network.
         \begin{figure}[H]
            \centering
            \subf{\includegraphics[width=0.3\textwidth]{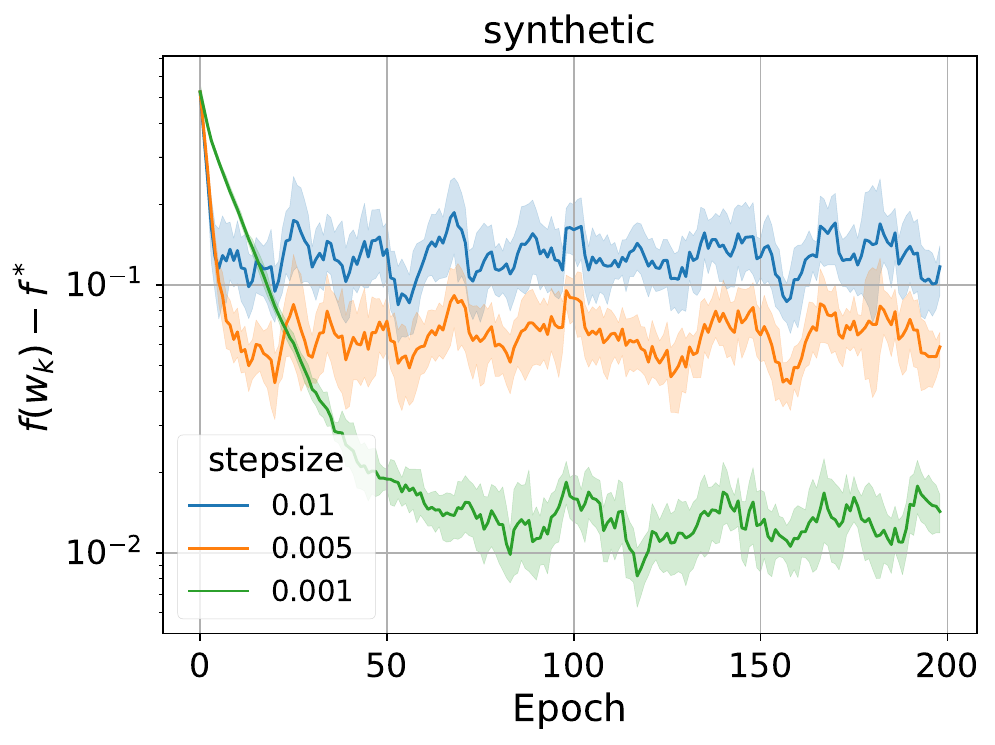}}{}
            \subf{\includegraphics[width=0.3\textwidth]{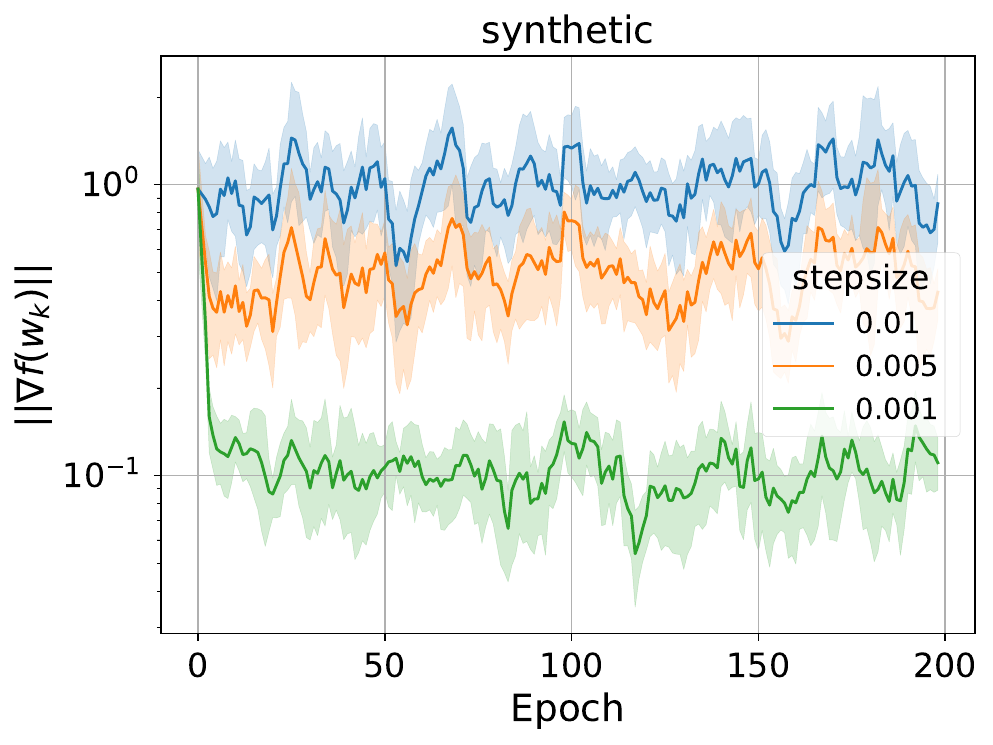}}{}\\
            \vskip-7pt
            \subf{\includegraphics[width=0.3\textwidth]{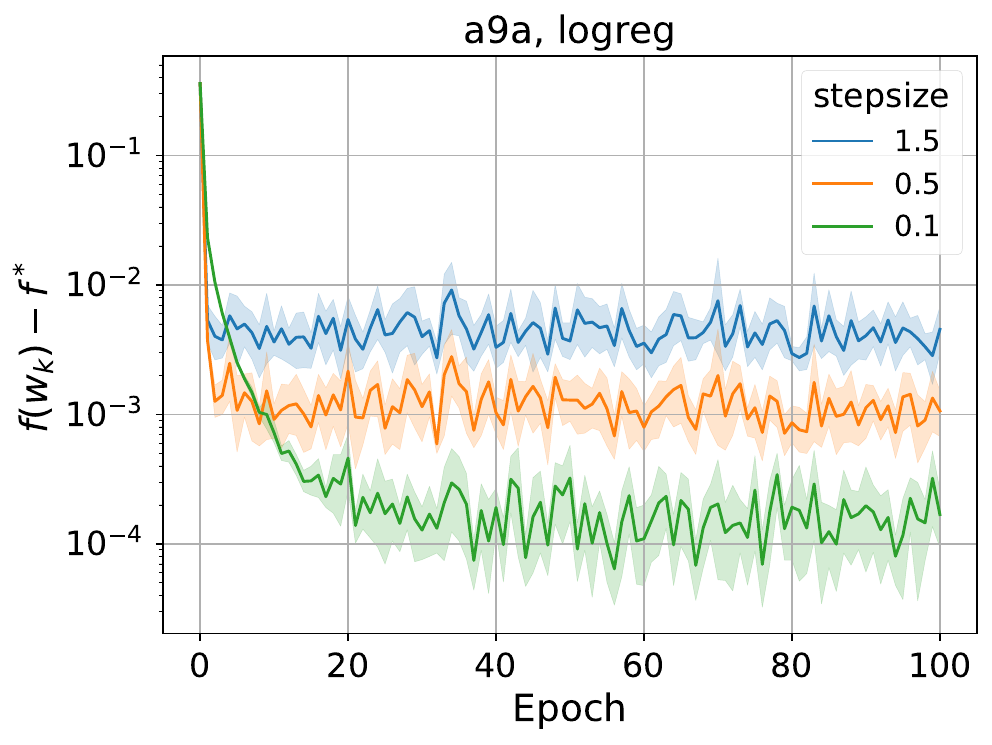}}{}
            \subf{\includegraphics[width=0.3\textwidth]{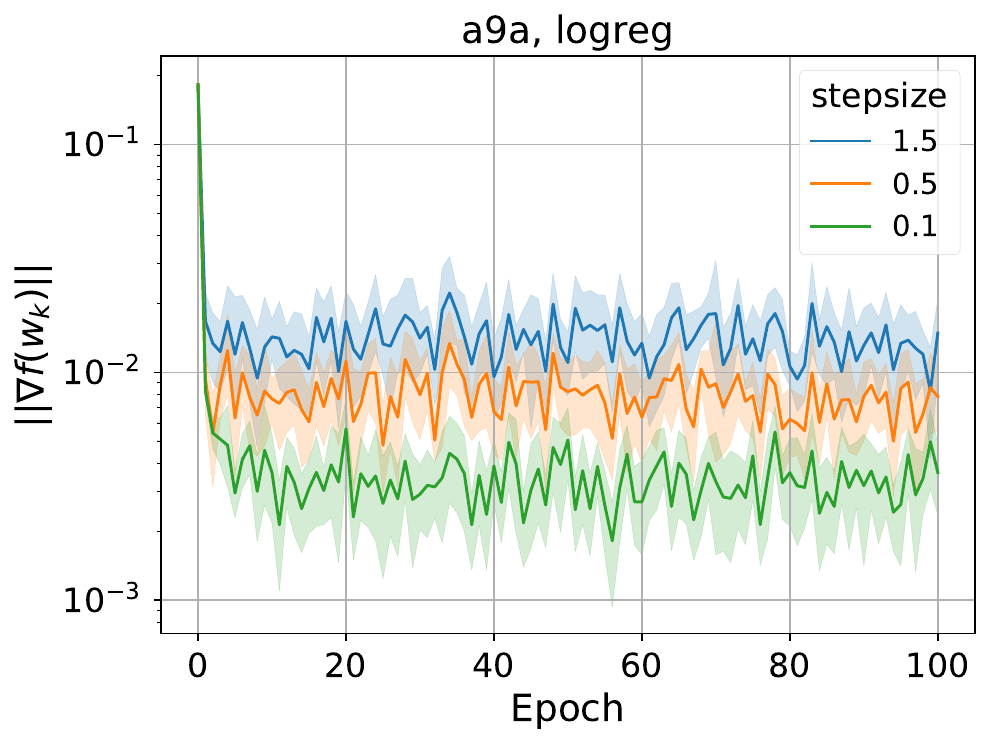}}{} \\
            \vskip-7pt
            \subf{\includegraphics[width=0.3\textwidth]{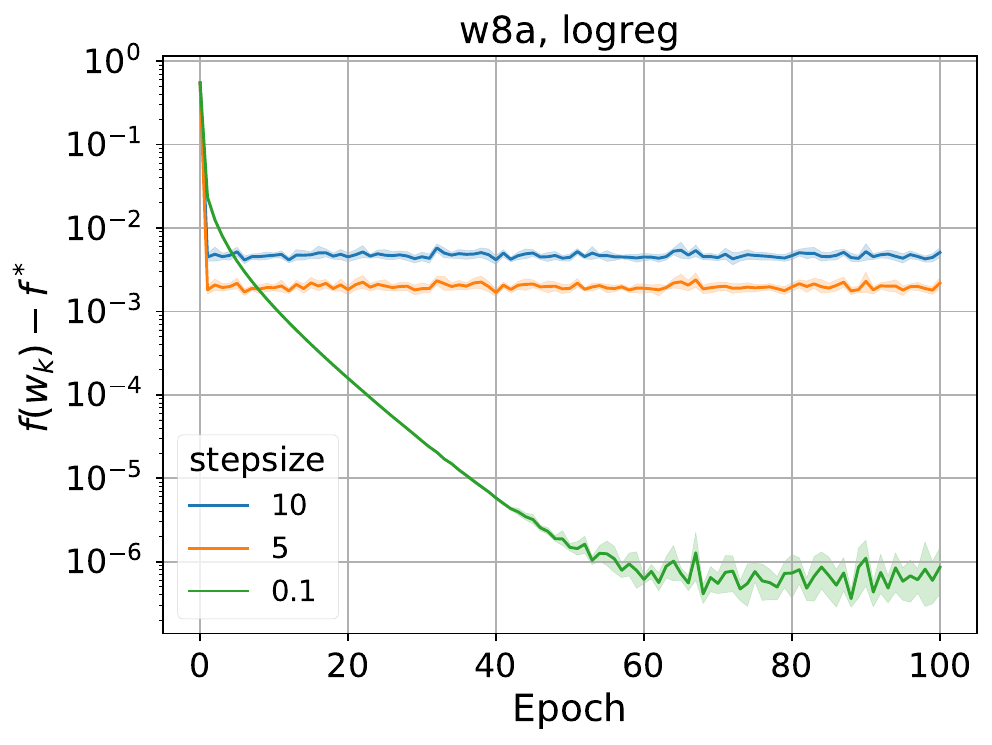}}{}              
            \subf{\includegraphics[width=0.3\textwidth]{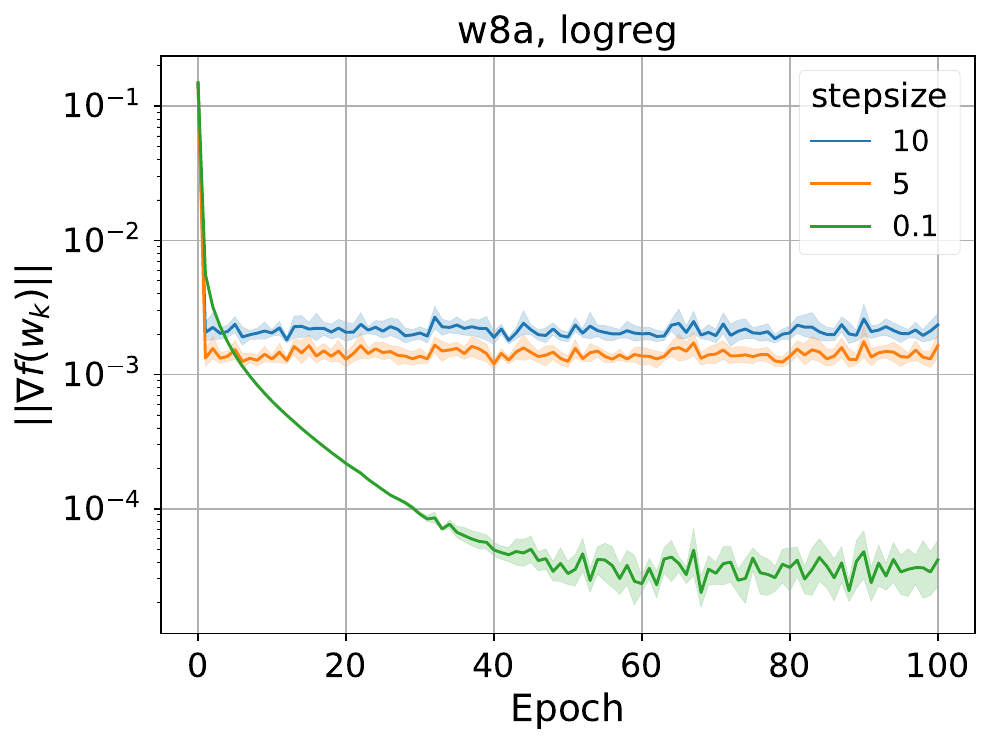}}{}
            \vskip-7pt
            \caption{\label{logreg:task1all-datasets}Results for SGD with different step sizes for synthetic (upper two plots), logistic regression on a9a (middle two plots) and on w8a (lower two plots) datasets. Smaller step size allows the algorithm to converge to a better confusion region.}
            \vskip-30pt
            \label{fig:sgd_3_lrs}
        \end{figure}
    \begin{figure}[H]
            \centering
            \subf{\includegraphics[width=0.32\textwidth]{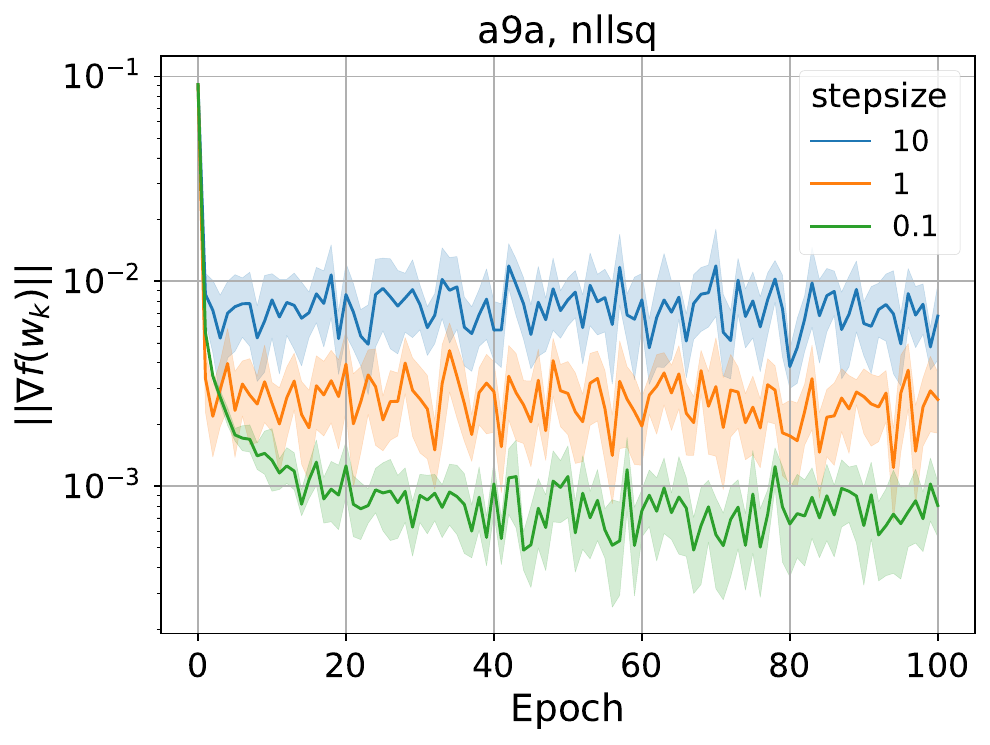}}{}
            \subf{\includegraphics[width=0.32\textwidth]{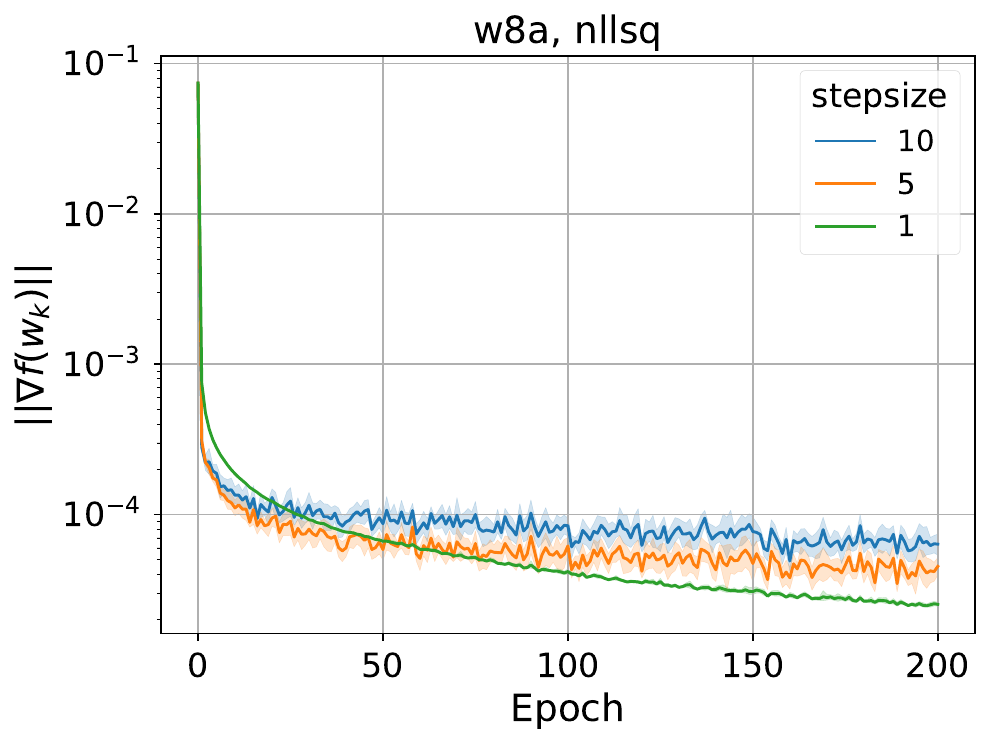}}{}
            \vskip-5pt
            \caption{\label{nllsq:task1all-datasets}Results for SGD with different step size for NLLSQ on a9a (left plot) and w8a (right plot). Smaller step size allows the algorithm to converge to a better confusion region.}
            \vskip-30pt
        \end{figure}
        \begin{figure}[H]
        \centering
            \subf{\includegraphics[width=0.34\textwidth]{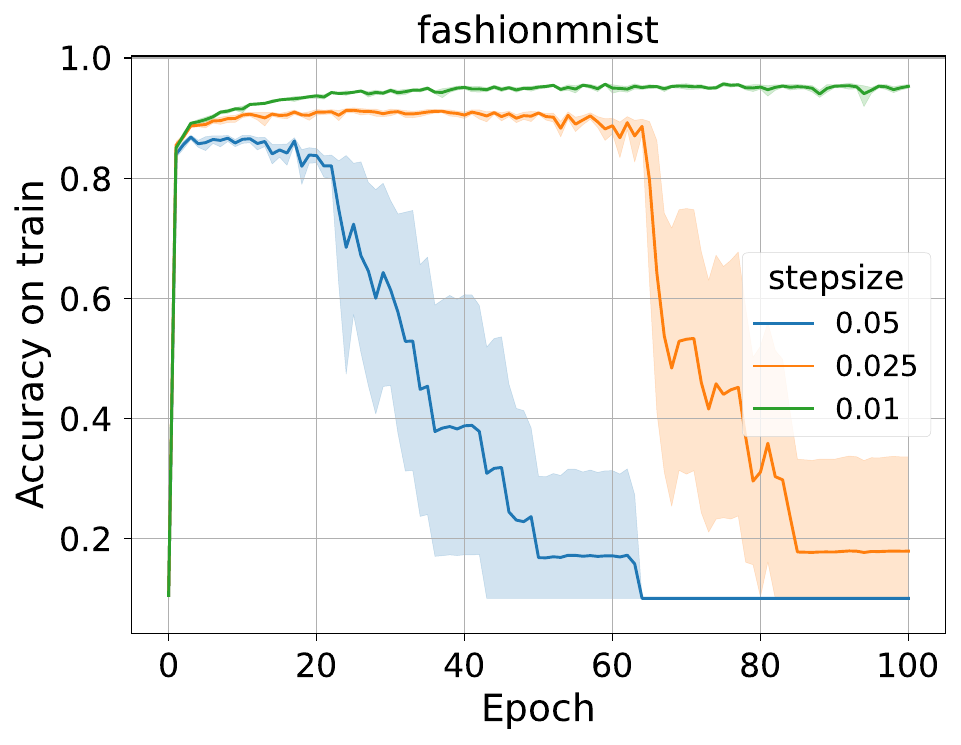}}{}
            \vskip-3pt
            \caption{\label{nn:task1}SGD with different step sizes for CNN on FashionMNIST. Smaller step size allows algorithm converge to better solution.}
            \vskip-5pt
        \end{figure}

    \paragraph{\textbf{AdaGrad vs SGD with high $\beta$}\;}
    But instead of adjusting the step size by ourselves, it is better to do it adaptively. 
        Here goes AdaGrad.
        As you can see from the AdaGrad step \eqref{eq:global step size}, it decreases in time, due to the sum in the denominator.
        However, with $\alpha$ and $\beta$, we can control the impact of this summation factor.
        To demonstrate the advantage of AdaGrad over regular SGD we fix the step size of SGD to 0.01, choose $\tau=0$ in \eqref{eq:global step size} and choose such $\alpha$ and $\beta$, that $\alpha / \sqrt{\beta} = 0.01$, which equals to step size of SGD.
        At first, we take big $\beta$.
        Since in this case $\beta$ in the denominator majorizes the sum of gradient norms, we get the similar performance of AdaGrad and SGD.
        It can be seen on Figures \ref{logreg:task2all-datasets}, \ref{nllsq:task2all-datasets}. For experiments on neural networks, as you can see in Figure \ref{nn:task2}, SGD and AdaGrad share similar behavior patterns.

\begin{figure}[H]
\vskip-40pt
            \centering
            \subf{\includegraphics[width=0.32\textwidth]{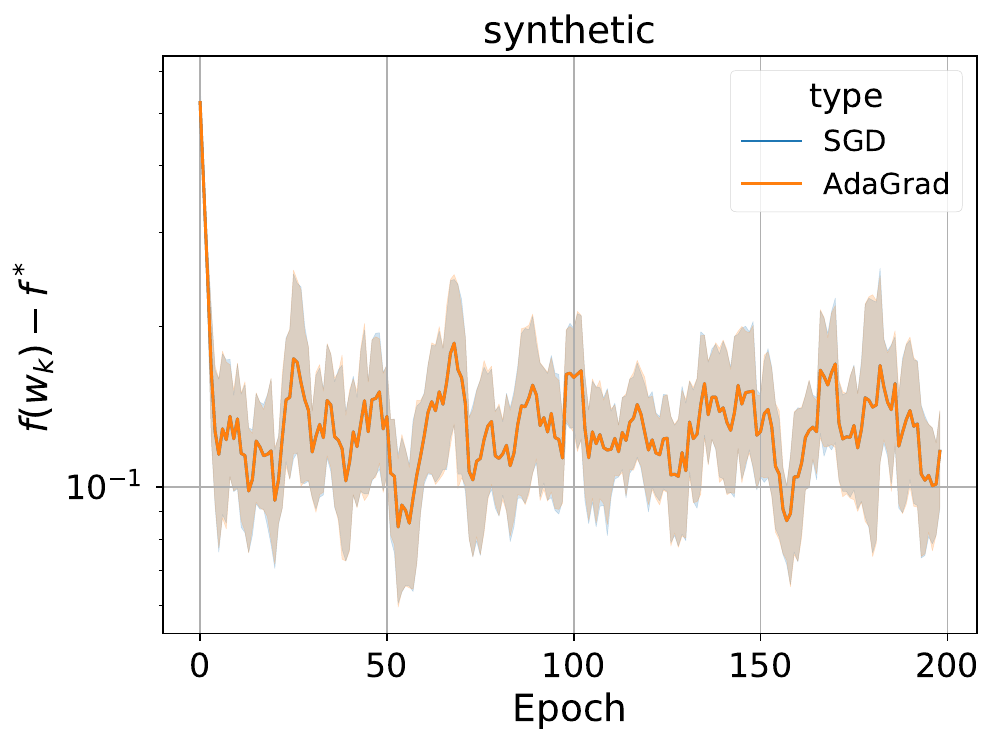}}{}  
            \subf{\includegraphics[width=0.32\textwidth]{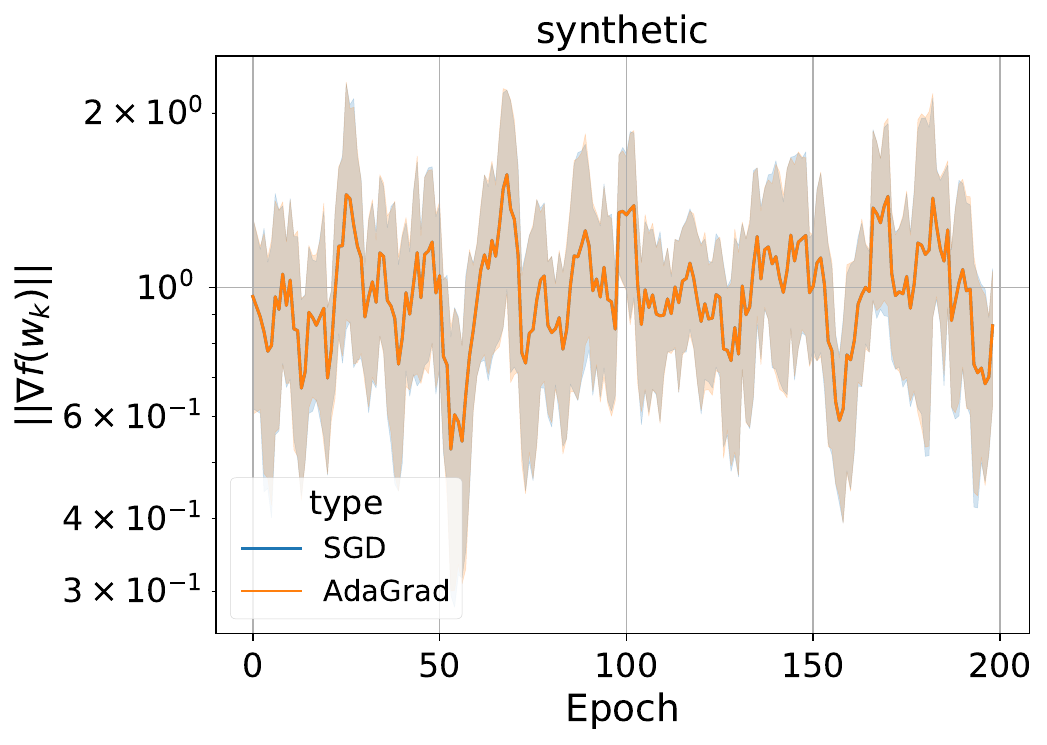}}{} 
            \subf{\includegraphics[width=0.32\textwidth]{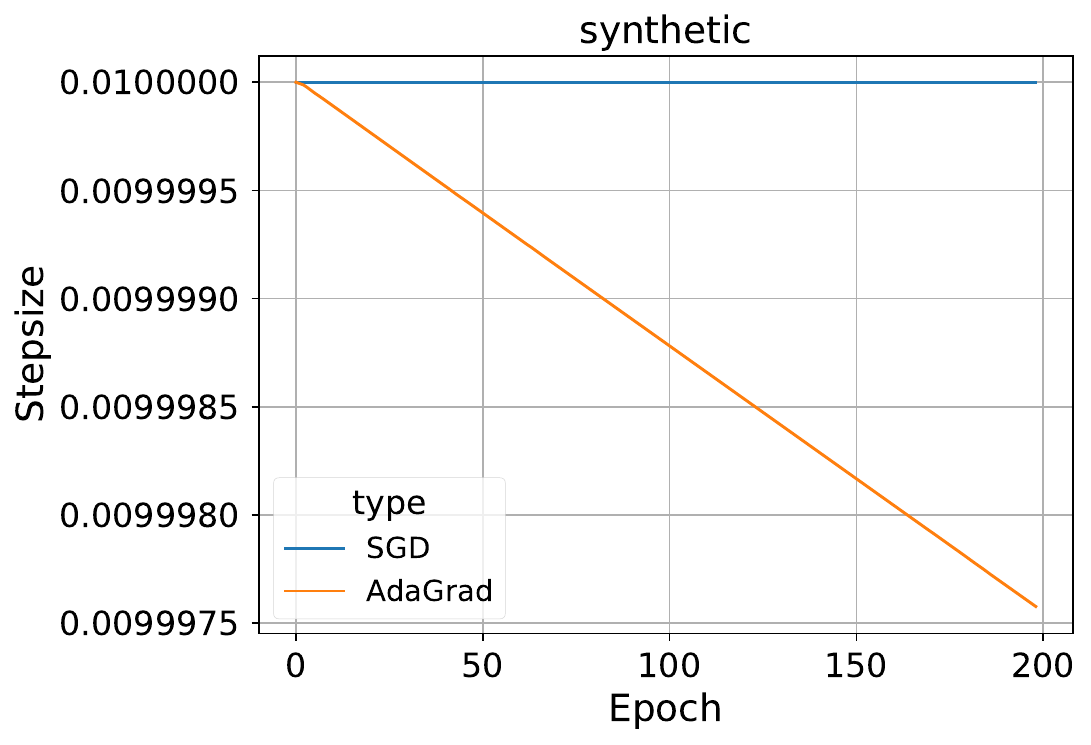}}{}
            \vskip-3pt
            \subf{\includegraphics[width=0.32\textwidth]{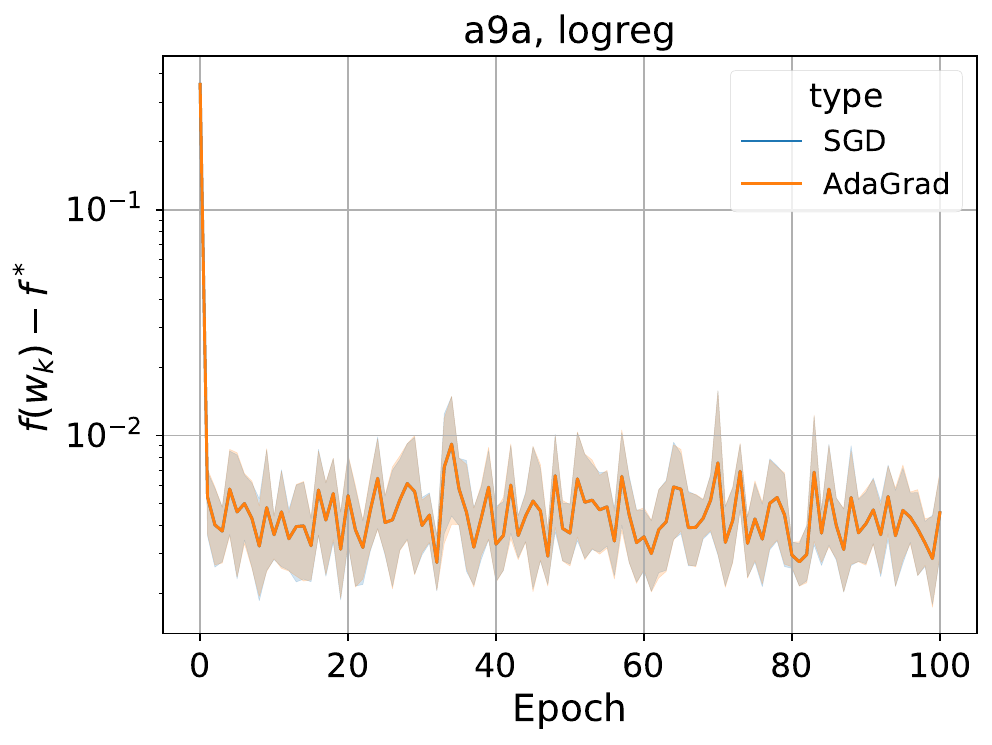}}{}  
            \subf{\includegraphics[width=0.32\textwidth]{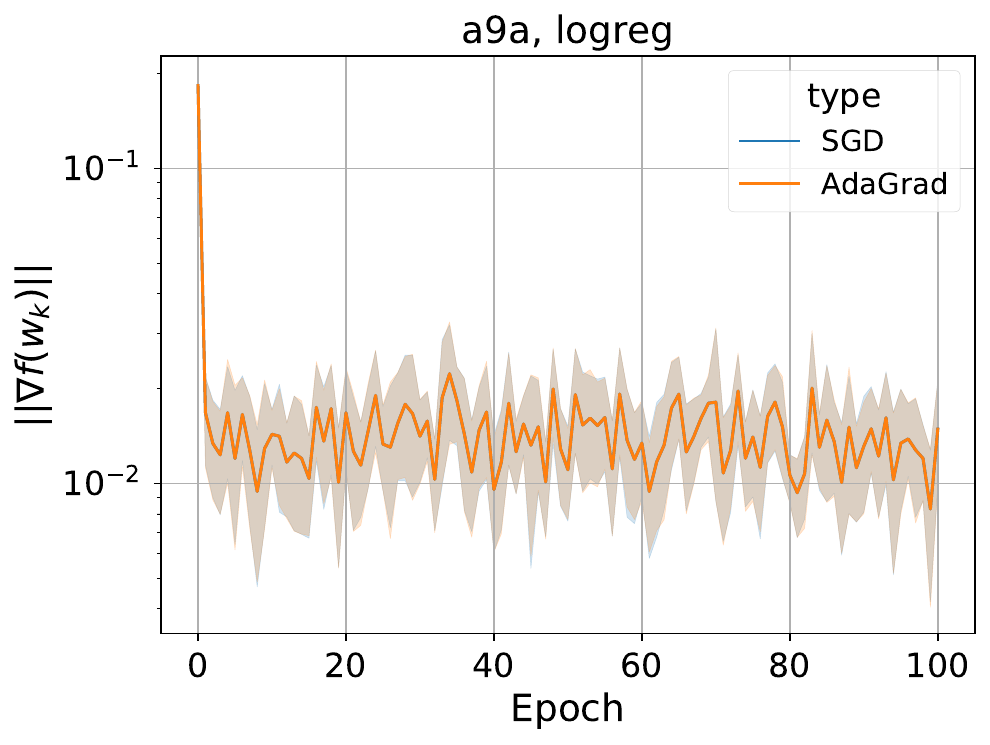}}{}  
            \subf{\includegraphics[width=0.32\textwidth]{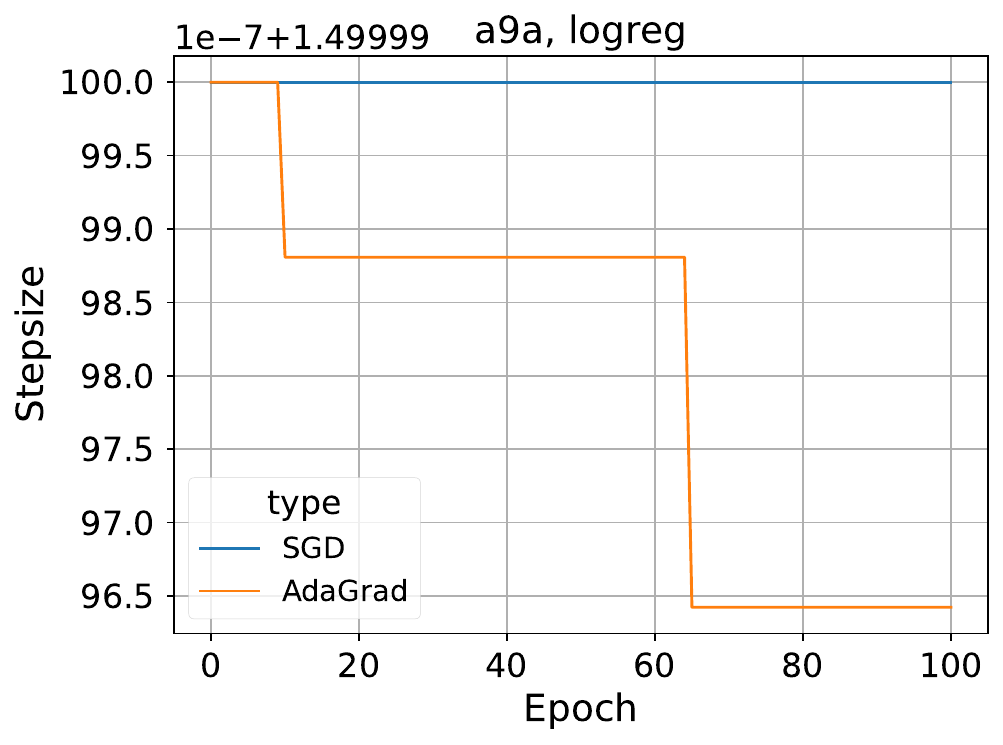}}{}  
            \vskip-3pt
            \subf{\includegraphics[width=0.32\textwidth]{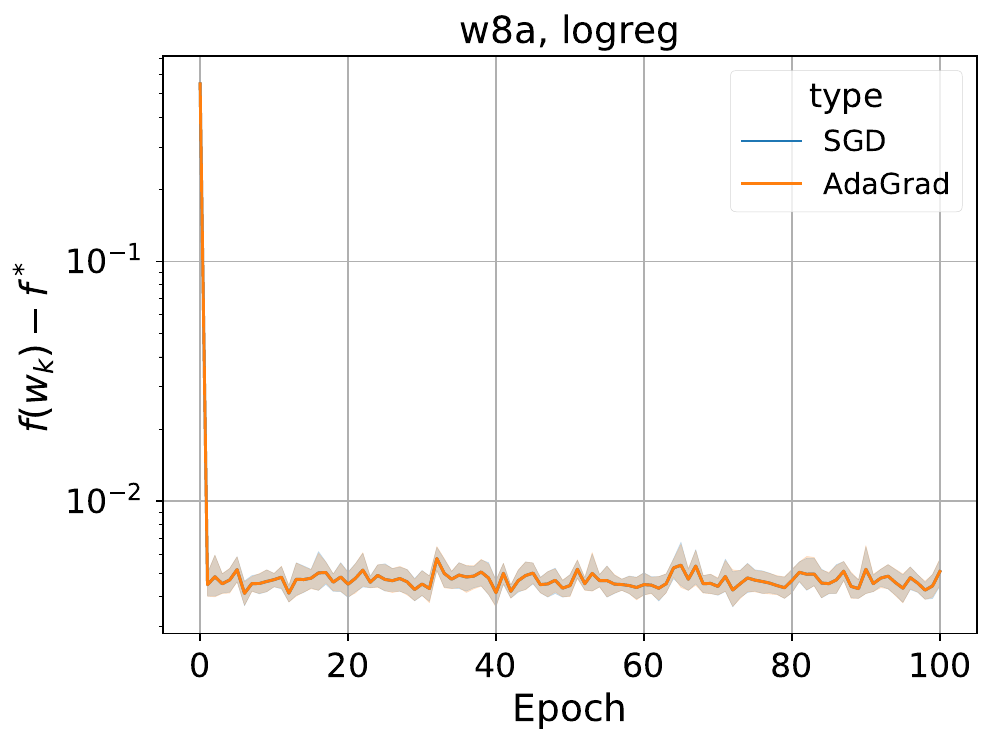}}{}  
            \subf{\includegraphics[width=0.32\textwidth]{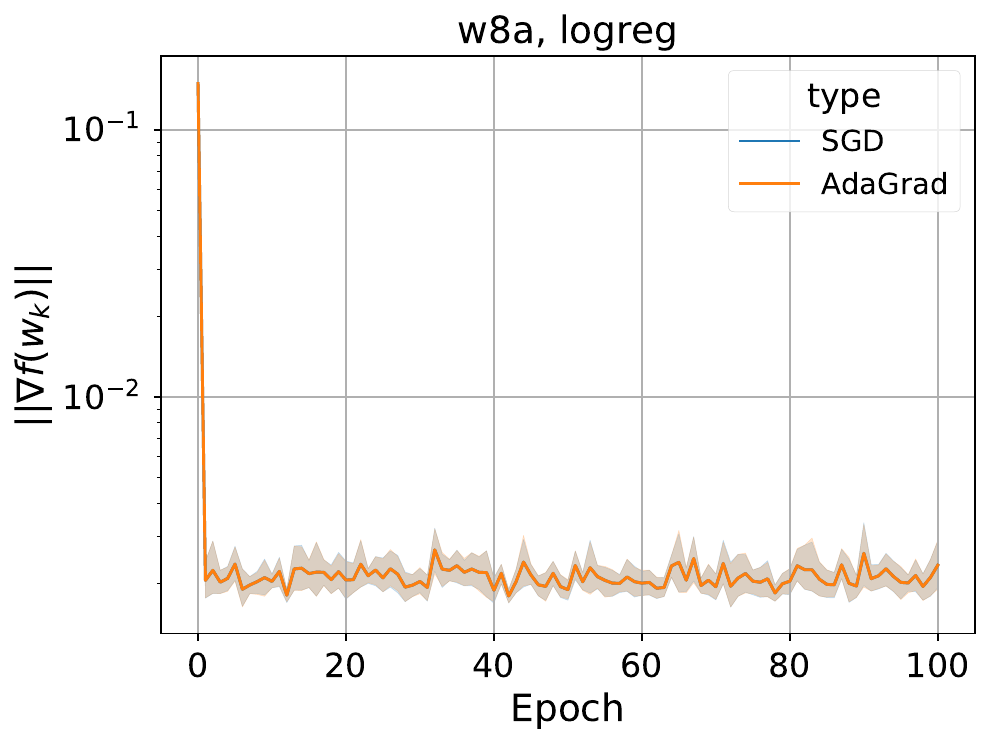}}{}  
            \subf{\includegraphics[width=0.32\textwidth]{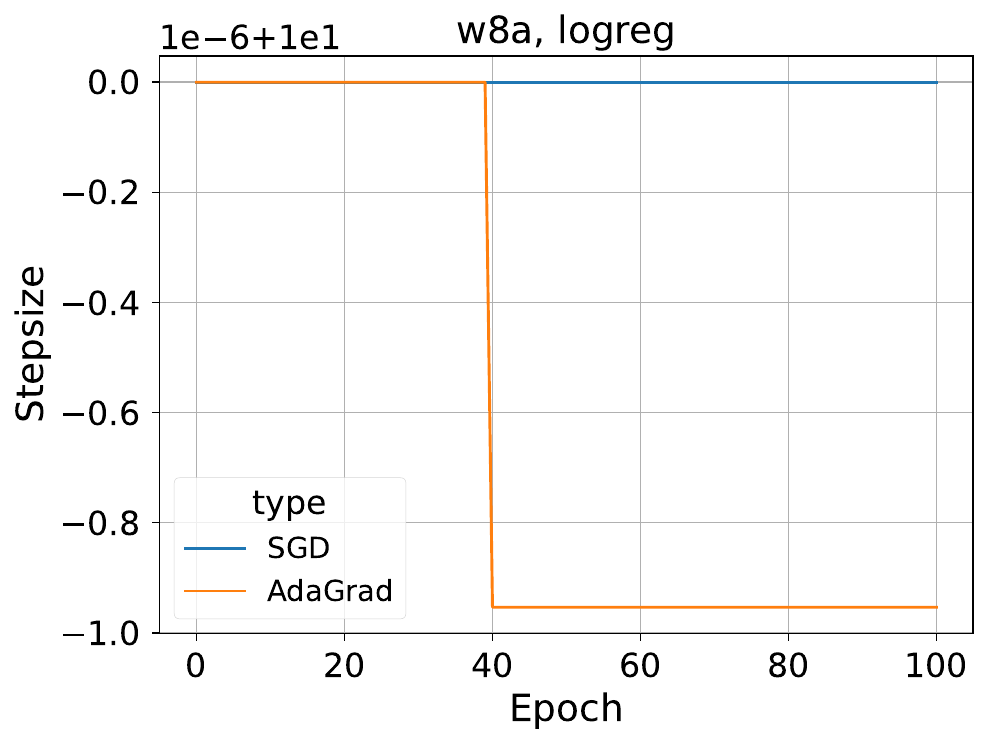}}{}
            \vskip-4pt
            \caption{\label{logreg:task2all-datasets}AdaGrad vs SGD with large $\beta$ for synthetic (upper plots), logistic regression on a9a (middle plots) and on w8a (lower plots). Here we choose $\beta$ so big that it majorizes the sum of gradient norms in the denominator of \eqref{eq:global step size}, and we have the same behavior as regular SGD. The step size is almost constant.}
            \vskip-30pt
            \end{figure}
            \begin{figure}[H]
                \centering
                \vskip-30pt
            \subf{\includegraphics[width=0.24\textwidth]{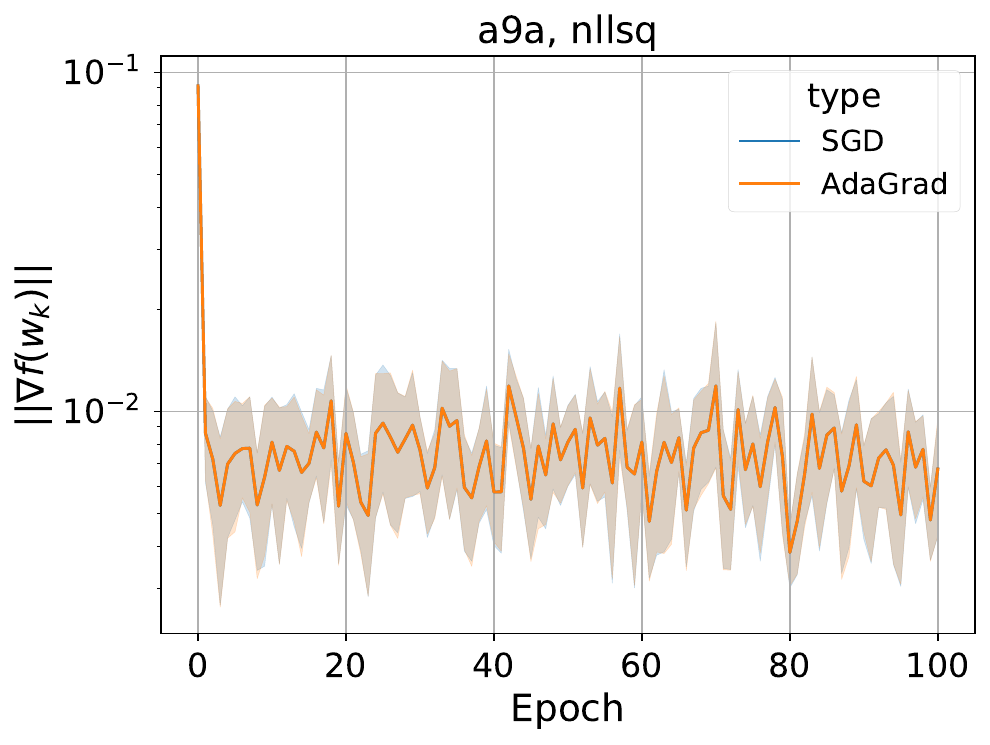}}{}  
            \subf{\includegraphics[width=0.24\textwidth]{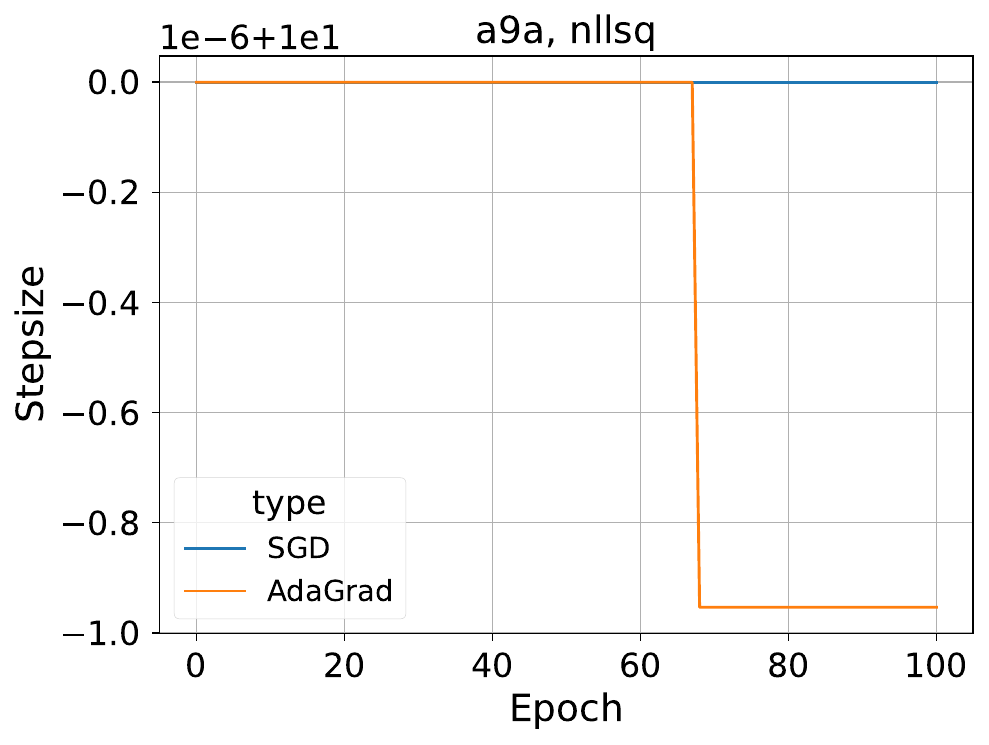}}{}   
            \subf{\includegraphics[width=0.24\textwidth]{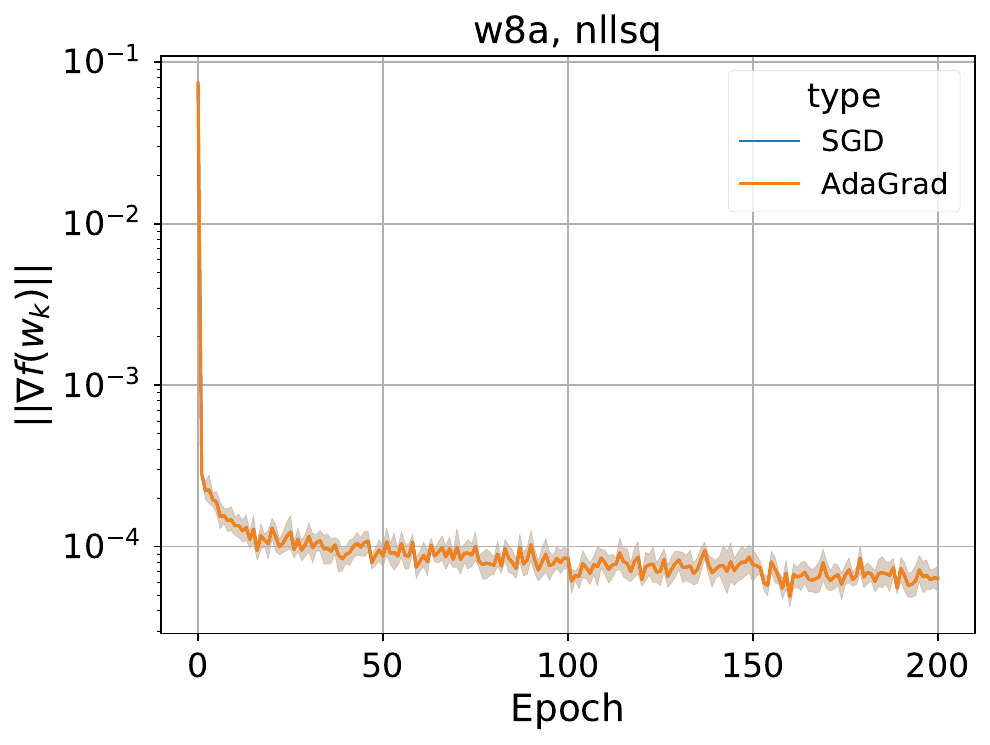}}{}  
            \subf{\includegraphics[width=0.24\textwidth]{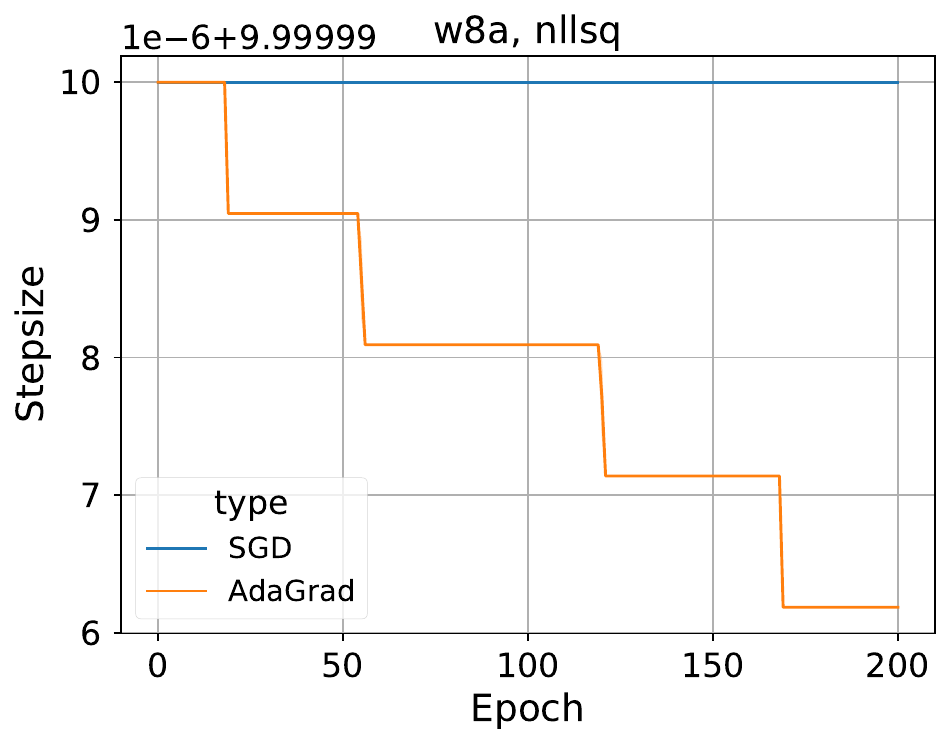}}{}    
            \vskip-3pt
            \caption{\label{nllsq:task2all-datasets}AdaGrad vs SGD with large $\beta$ for NLLSQ on a9a (left two plots) and on w8a (right two plots). Here we choose $\beta$ so big that it majorizes the sum of gradient norms in the denominator of \eqref{eq:global step size}, and we have the same behavior as regular SGD. The step size is almost constant.}
            \vskip-20pt
            \end{figure}
            
            \begin{figure}[H]
                \centering
            \subf{\includegraphics[width=0.32\textwidth]{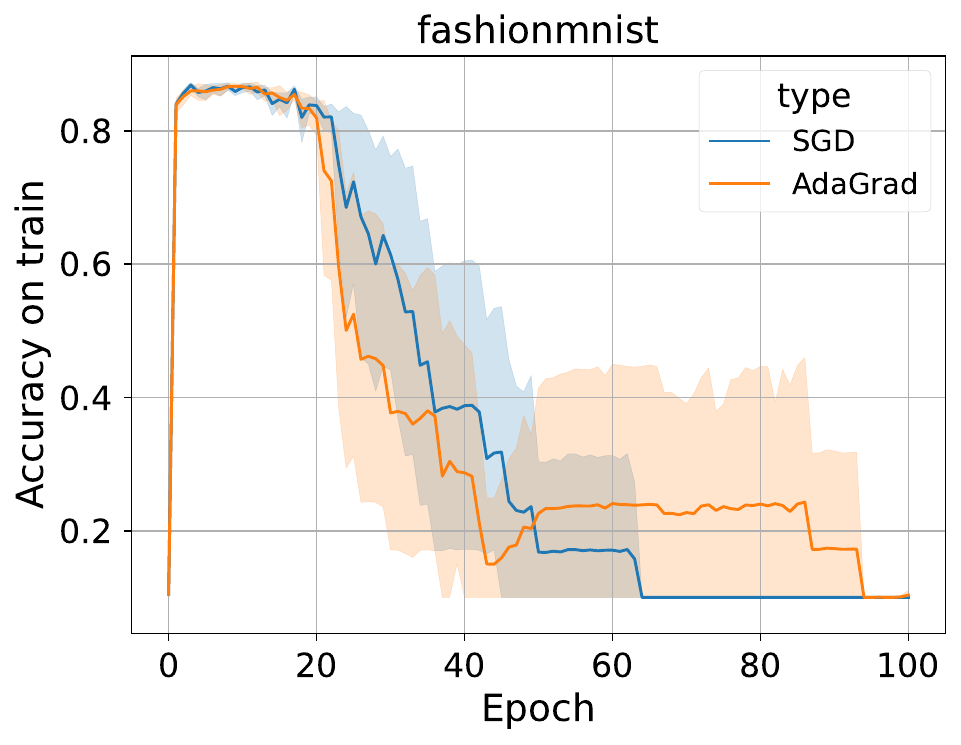}}{}
            \subf{\includegraphics[width=0.32\textwidth]{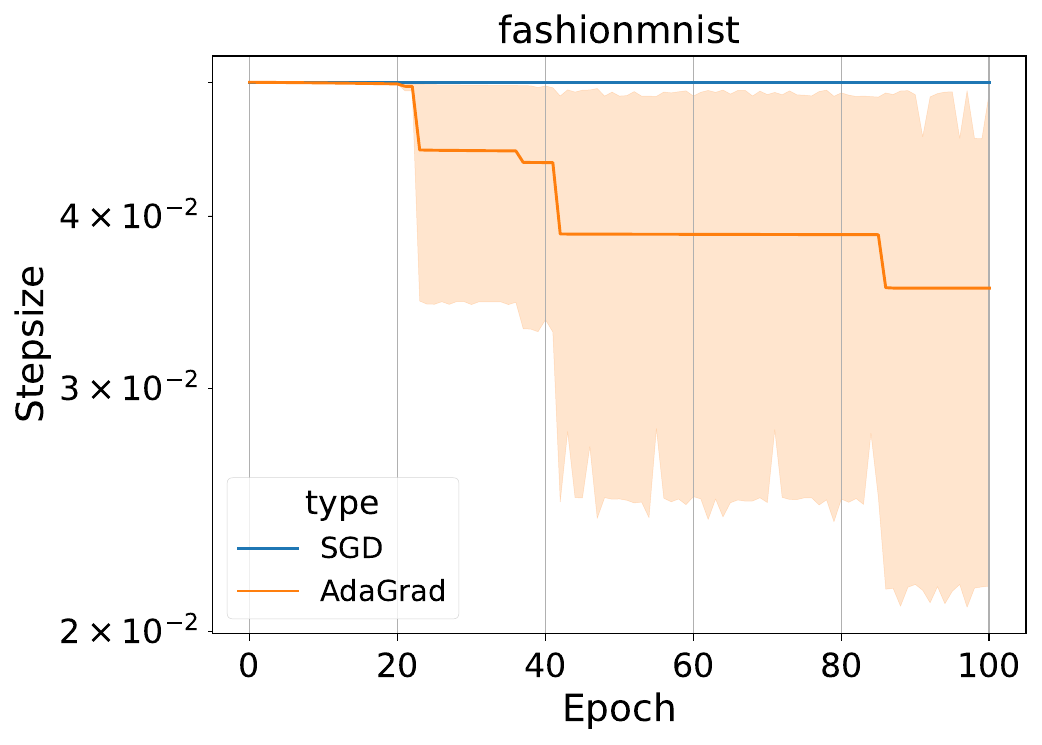}}{}
            \caption{\label{nn:task2}AdaGrad vs SGD with large $\beta$ for CNN on FashionMNIST. Here we choose $\beta$ so big that it majorizes the sum of gradient norms in the denominator of \eqref{eq:global step size}, and we have the same behavior as regular SGD. The step size is almost constant.}
            \vskip-30pt
        \end{figure}

    \paragraph{\textbf{AdaGrad vs SGD with normal $\beta$}\;}
    Then, we decrease the value of $\beta$ and get the results in Figures \ref{logreg:task3all-datasets},\ref{nllsq:task3all-datasets}.
        Now AdaGrad performs much better than SGD.
        The sum of the gradient norms has more influence on the step size, which results in deeper convergence and less stochasticity of values. For the experiment on neural network, as you can see in Figure \ref{nn:task3}, AdaGrad outperforms SGD.
        \begin{figure}[H]
        \vskip-20pt
            \centering
            \subf{\includegraphics[width=0.32\textwidth]{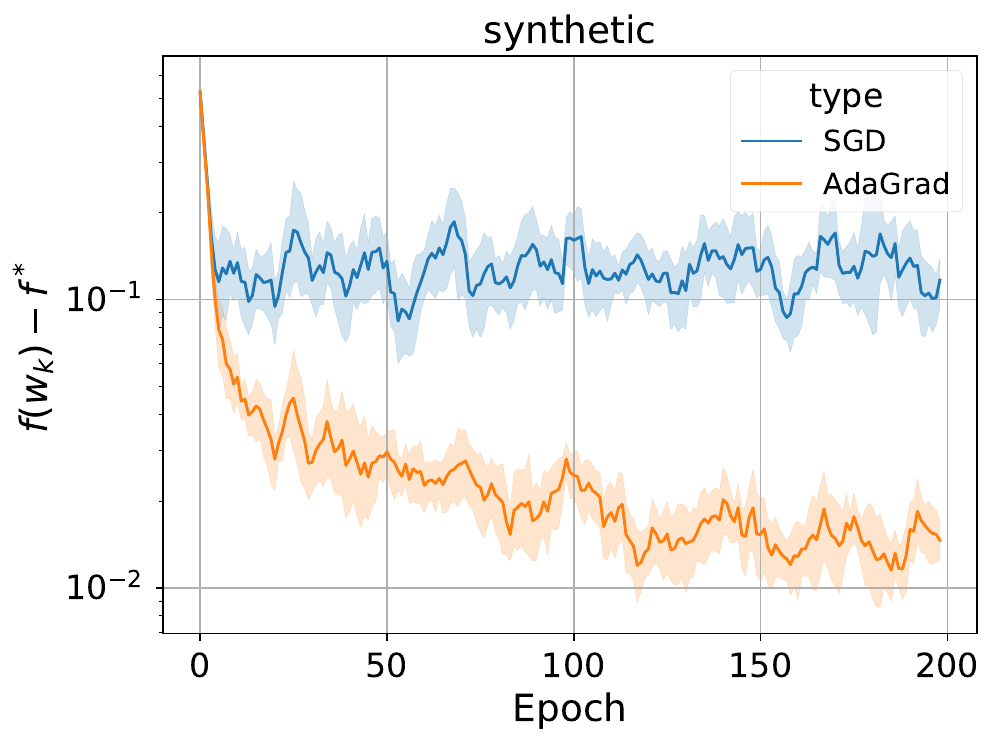}}{}  
            \subf{\includegraphics[width=0.32\textwidth]{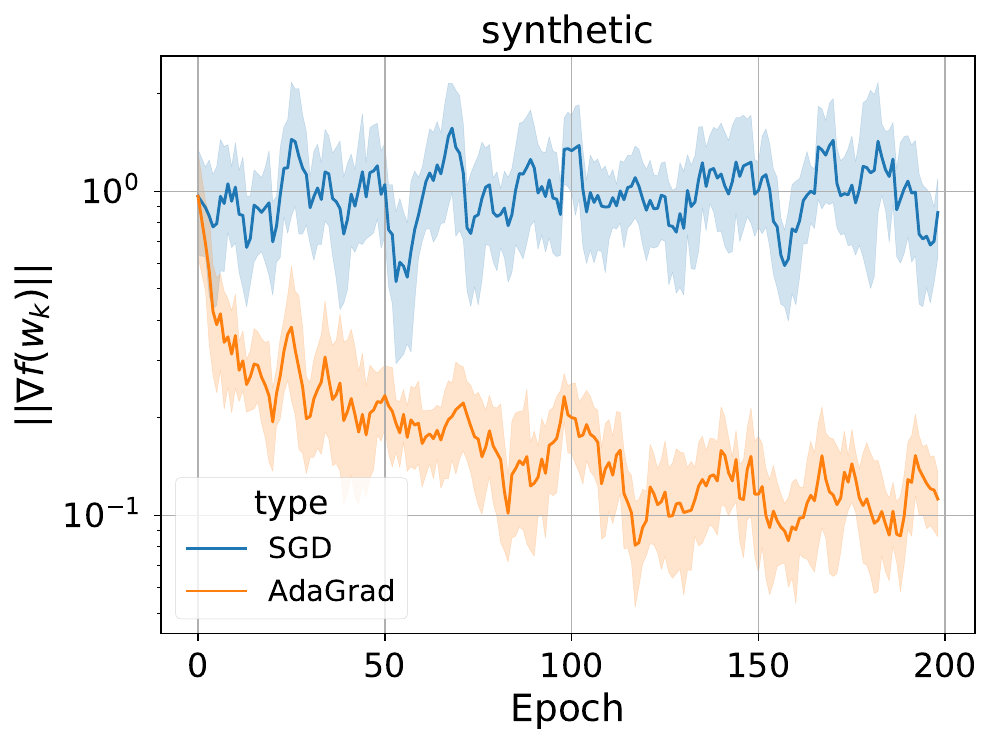}}{}  
            \subf{\includegraphics[width=0.32\textwidth]{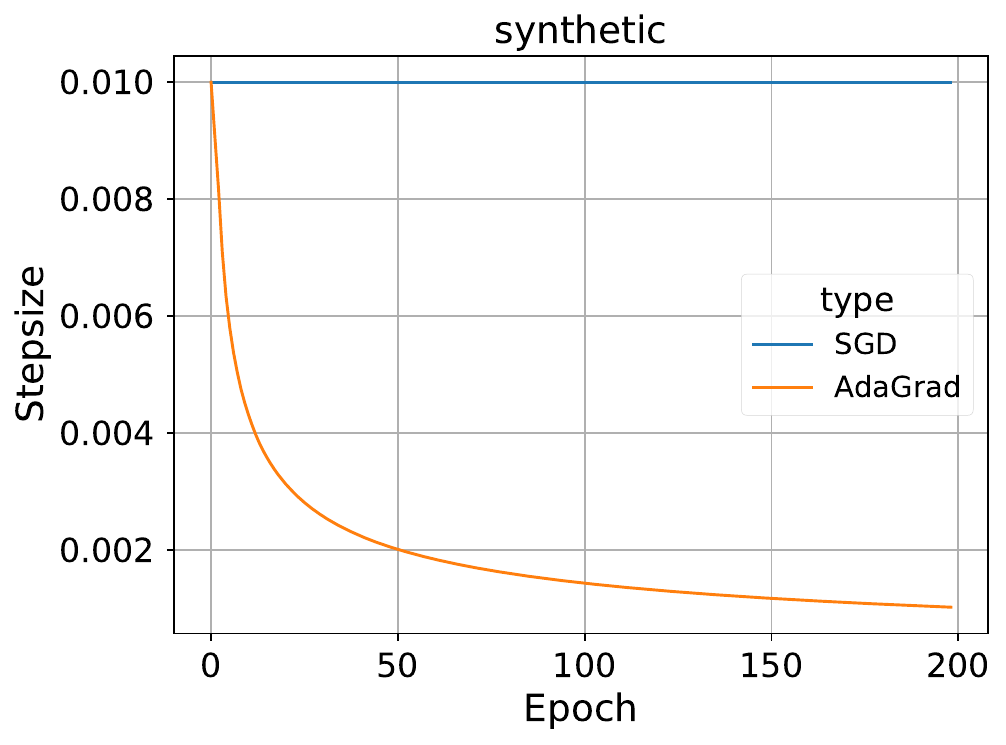}}{}
            \subf{\includegraphics[width=0.32\textwidth]{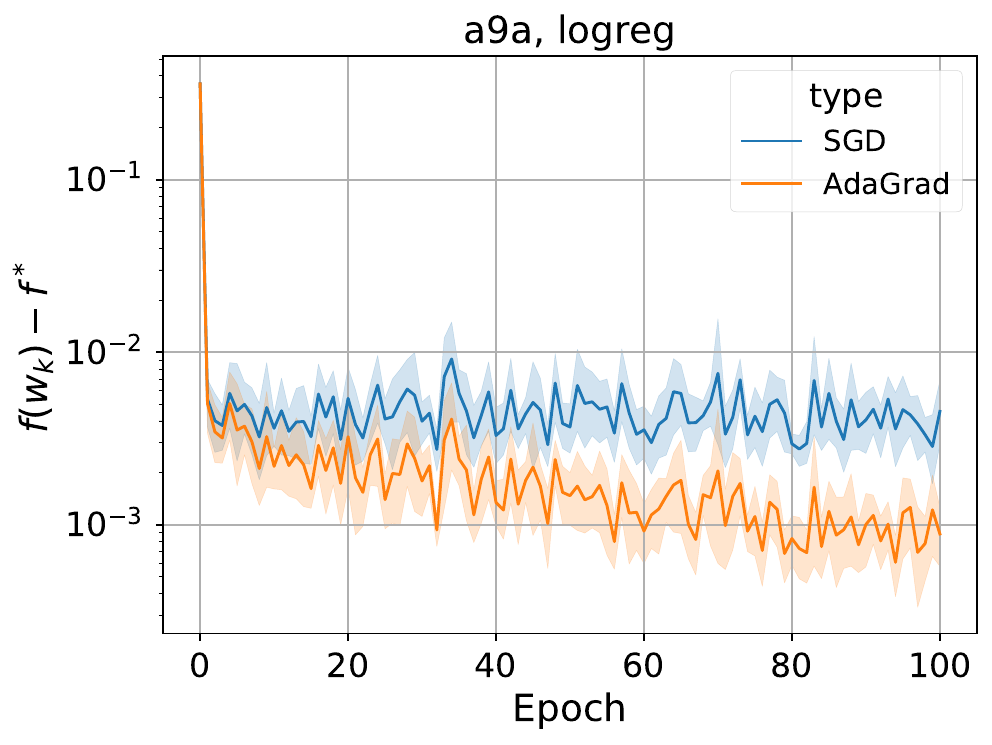}}{}  
            \subf{\includegraphics[width=0.32\textwidth]{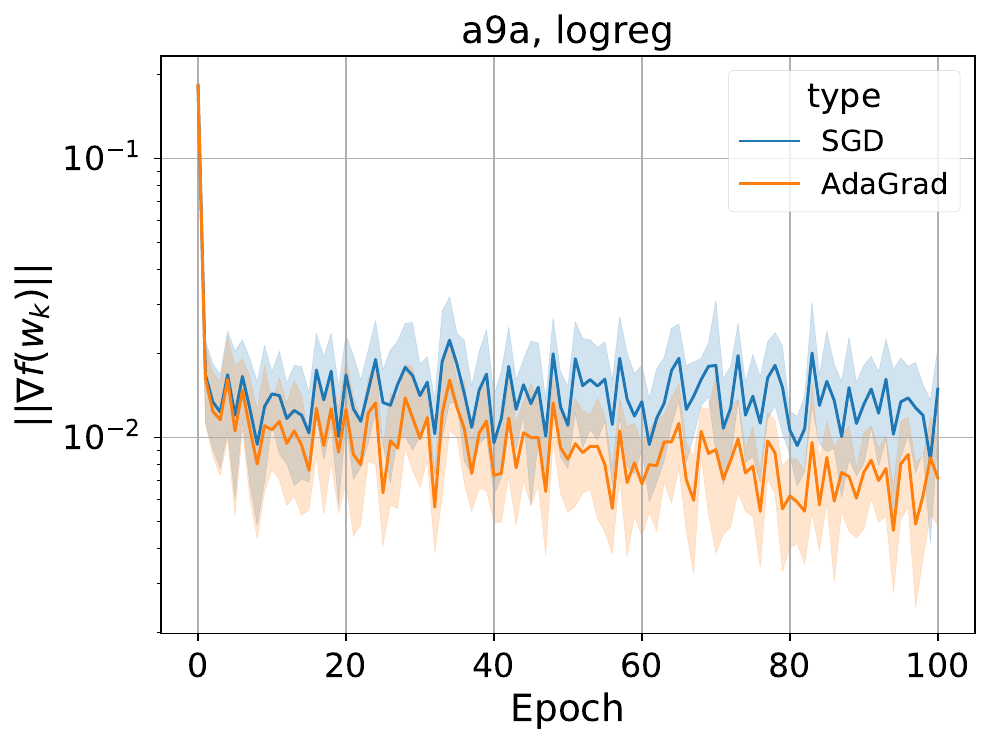}}{}  
            \subf{\includegraphics[width=0.32\textwidth]{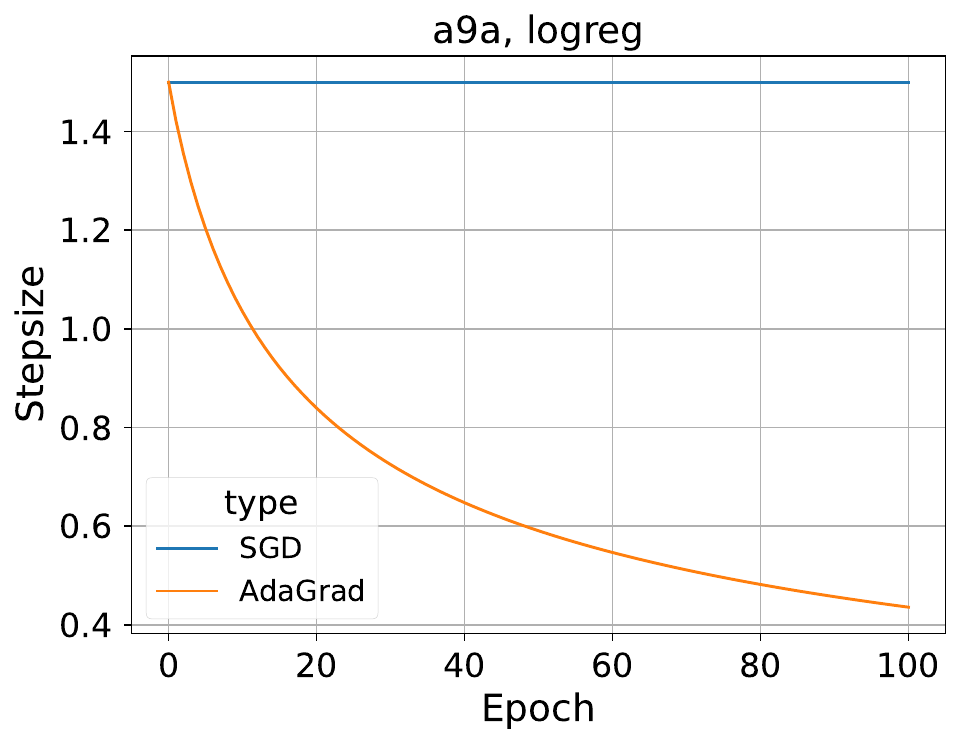}}{}  
            \subf{\includegraphics[width=0.32\textwidth]{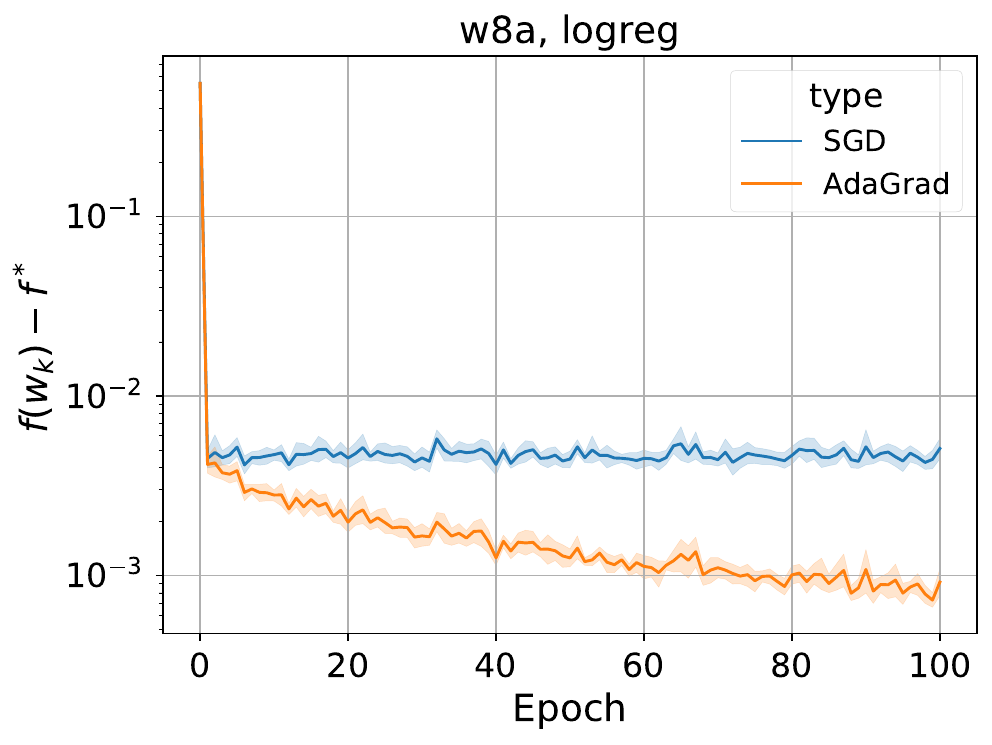}}{}  
            \subf{\includegraphics[width=0.32\textwidth]{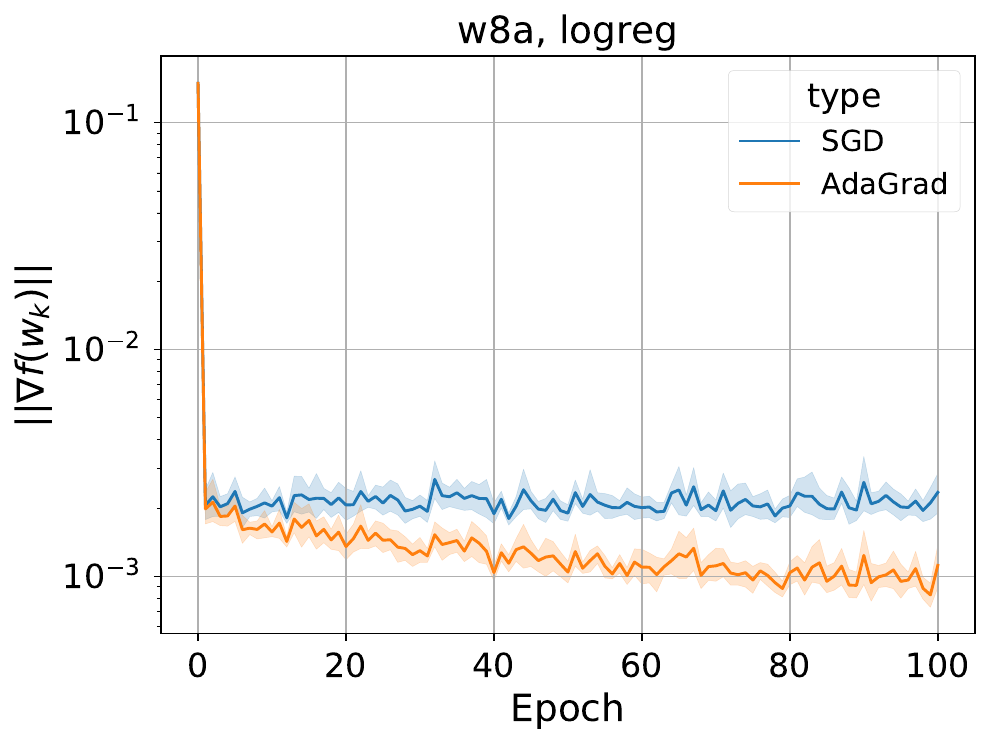}}{}  
            \subf{\includegraphics[width=0.32\textwidth]{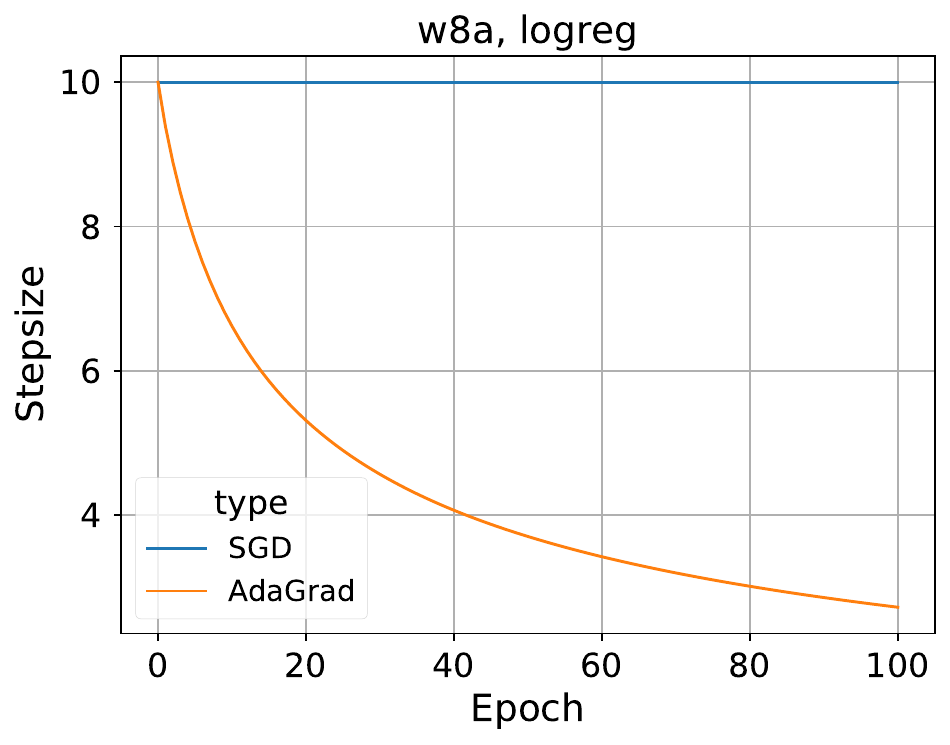}}{}
            \vskip-2pt
            \caption{\label{logreg:task3all-datasets}AdaGrad vs SGD with small $\beta$ for synthetic (upper plots), logistic regression on a9a (middle plots) and on w8a (lower plots). Here we choose little $\beta$ so that it allows the sum of gradient norms in the denominator of \eqref{eq:global step size} to change step size with time. As you can see, now step size gradually decreases, while approaching the area of solution.}
            \vskip-20pt
        \end{figure}
    \begin{figure}[H]
    \vskip-10pt
        \centering
        \subf{\includegraphics[width=0.32\textwidth]{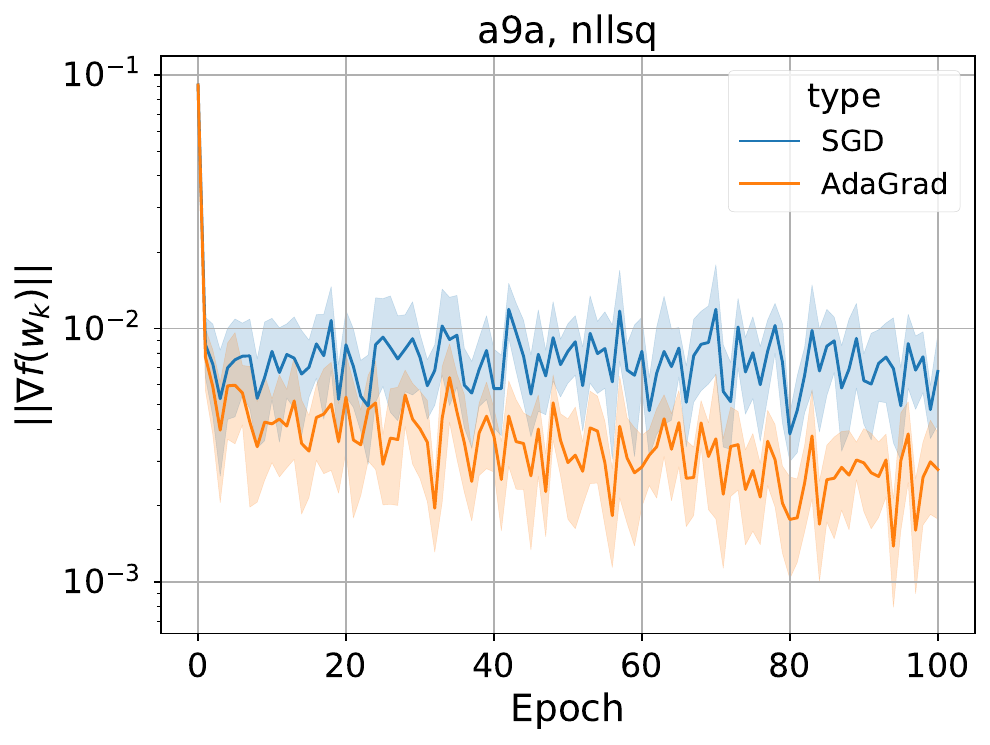}}{}  
        \subf{\includegraphics[width=0.32\textwidth]{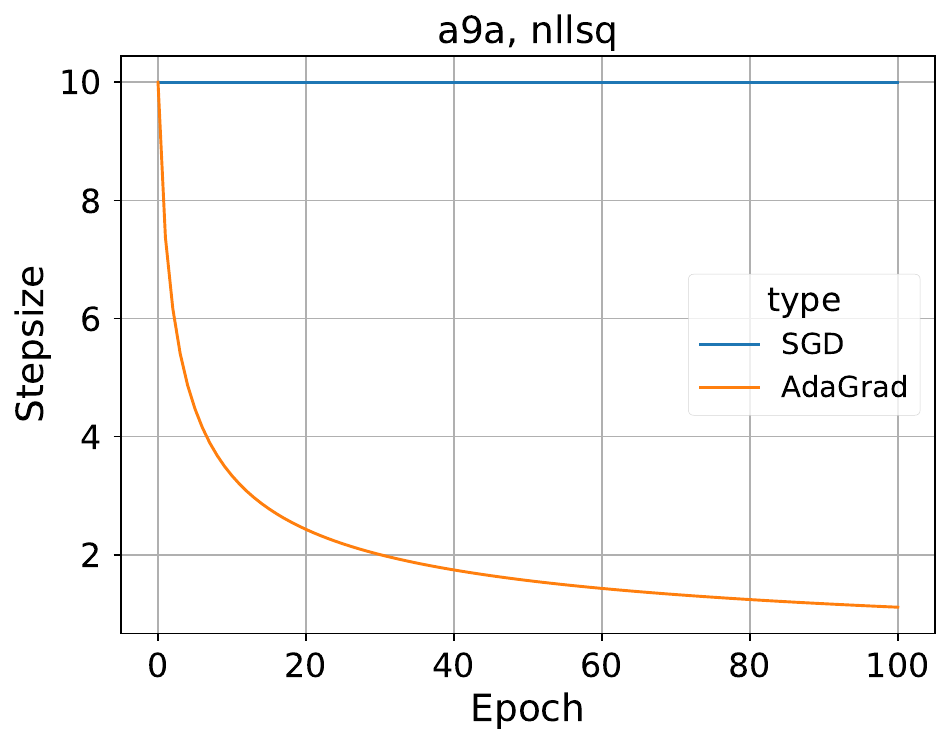}}{}\\  
        \vskip-4pt
        \subf{\includegraphics[width=0.32\textwidth]{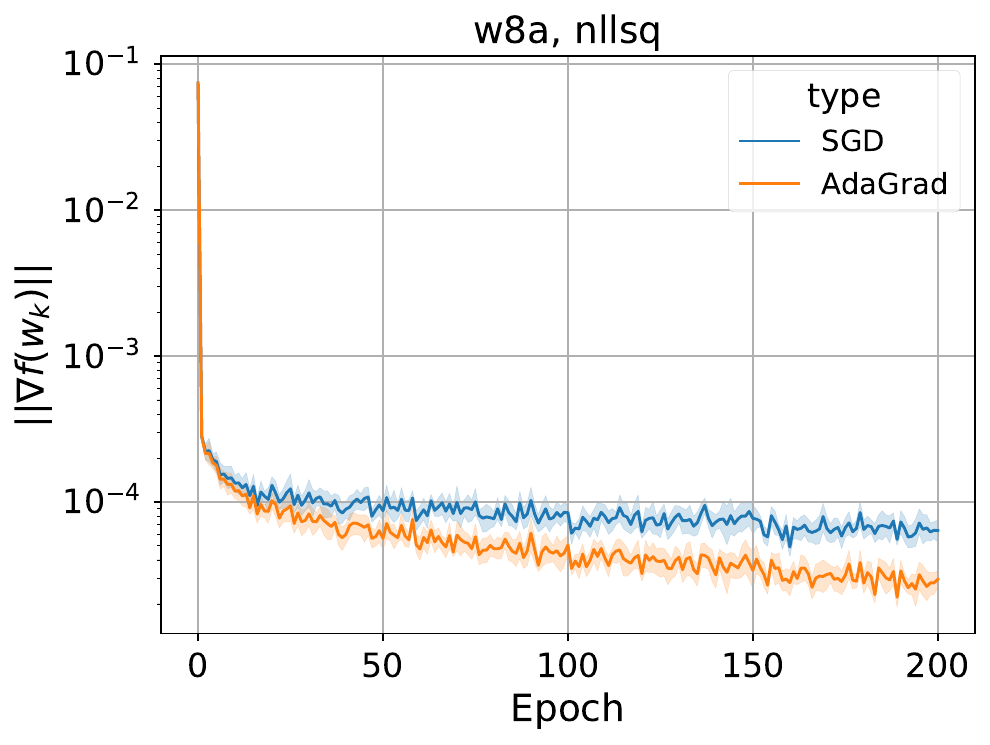}}{}  
        \subf{\includegraphics[width=0.32\textwidth]{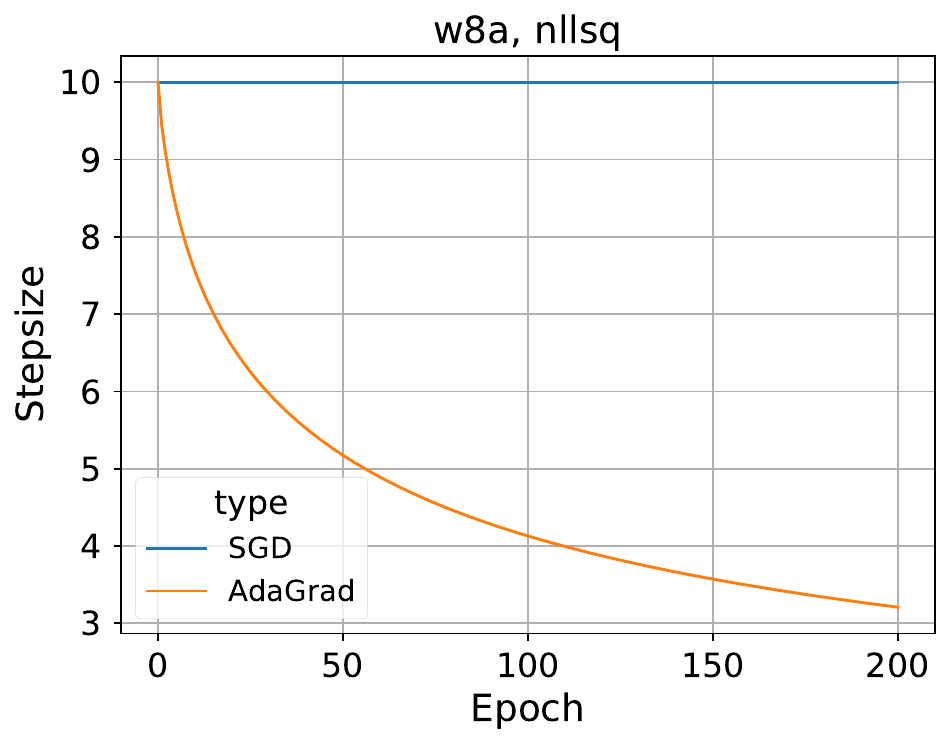}}{}       \vskip-4pt      
        \caption{\label{nllsq:task3all-datasets}AdaGrad vs SGD with small $\beta$ for NLLSQ on a9a (left two plots) and on w8a (right two plots). Here we choose little $\beta$ so that it allows the sum of gradient norms in the denominator of \eqref{eq:global step size} to change step size with time. As you can see, now step size gradually decreases, while approaching the area of solution.
        }
        \vskip-20pt
            \end{figure}
            
            \begin{figure}[H]
            \vskip-20pt
                \centering
        \subf{\includegraphics[width=0.34\textwidth]{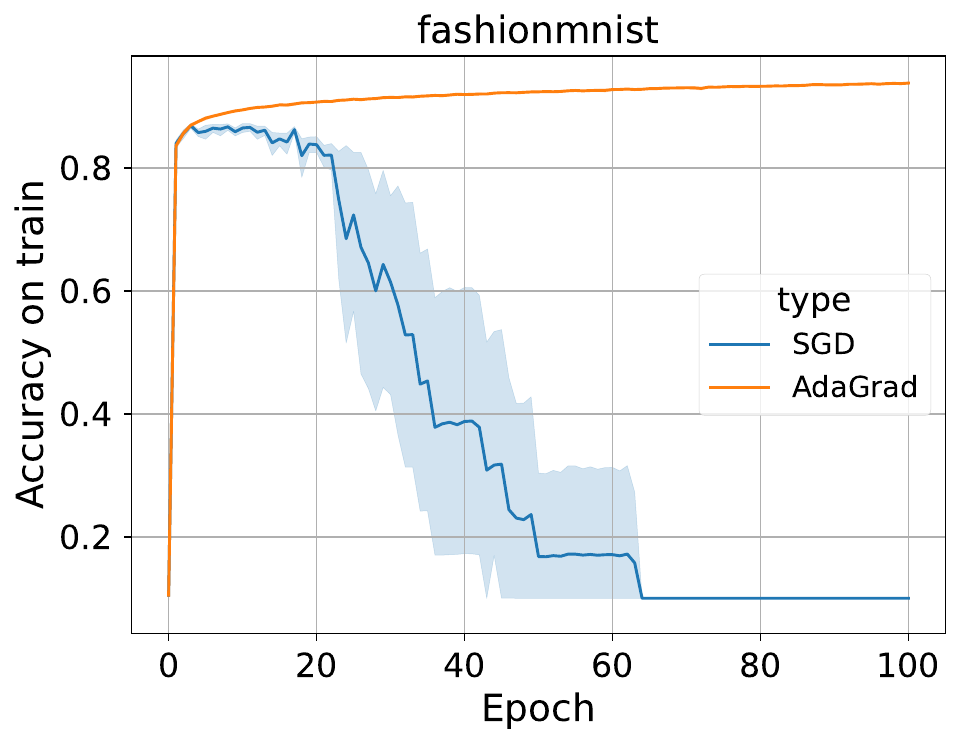}}{}
        \subf{\includegraphics[width=0.34\textwidth]{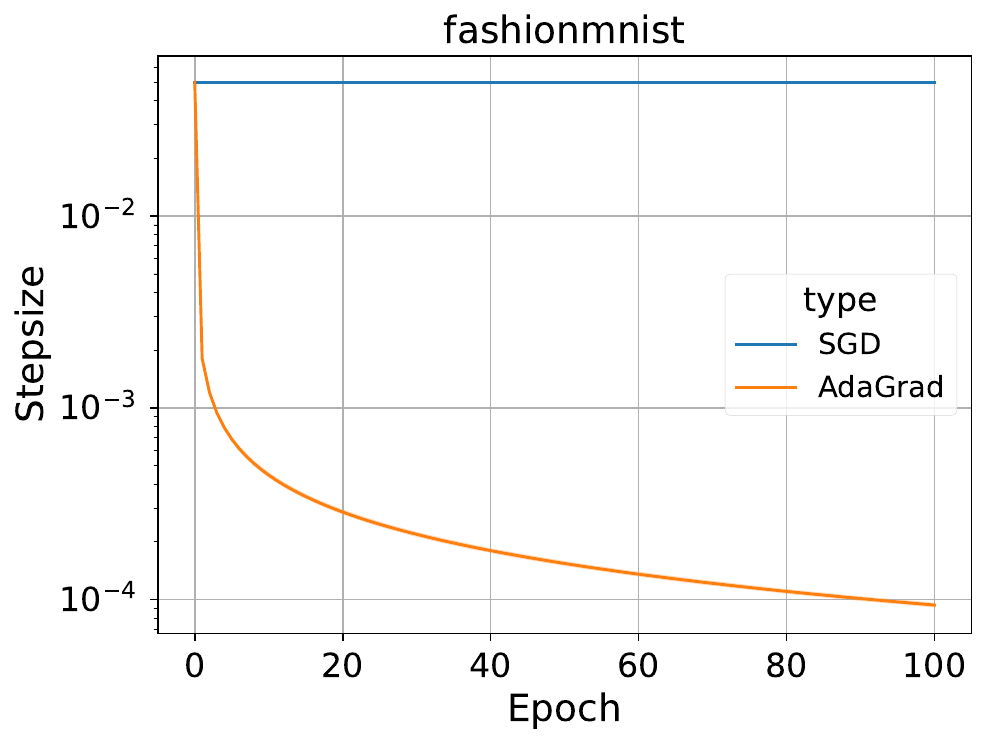}}{}
        \vskip-2pt
        \caption{\label{nn:task3}AdaGrad vs SGD with small $\beta$ for CNN on FashionMNIST. Here we choose little $\beta$ so that it allows the sum of gradient norms in the denominator of \eqref{eq:global step size} to change step size with time. As you can see, now step size gradually decreases, while approaching better accuracy.}
        \vskip-20pt
    \end{figure}
    
    \paragraph{\textbf{SGD with different batch sizes}\;}
        The second way to improve SGD is by increasing batch size
        This procedure reduces the variance of the stochastic gradient.
        However, it also makes computation of one stochastic gradient more expensive, because we have to operate with bigger matrices.
        In Figure \ref{logreg:task4all-datasets},\ref{nllsq:task4all-datasets} we show, how batch size influences the performance of the regular SGD.
        As you can see, the higher the batch size, the lower the variance of the gradient approximation, which leads to a better convergence rate. For the experiment on neural network, Figure \ref{nn:task4} shows the ideal batch size might lie between 64 and 1024. 
        \begin{figure}[H]
            \centering
            \subf{\includegraphics[width=0.3\textwidth]{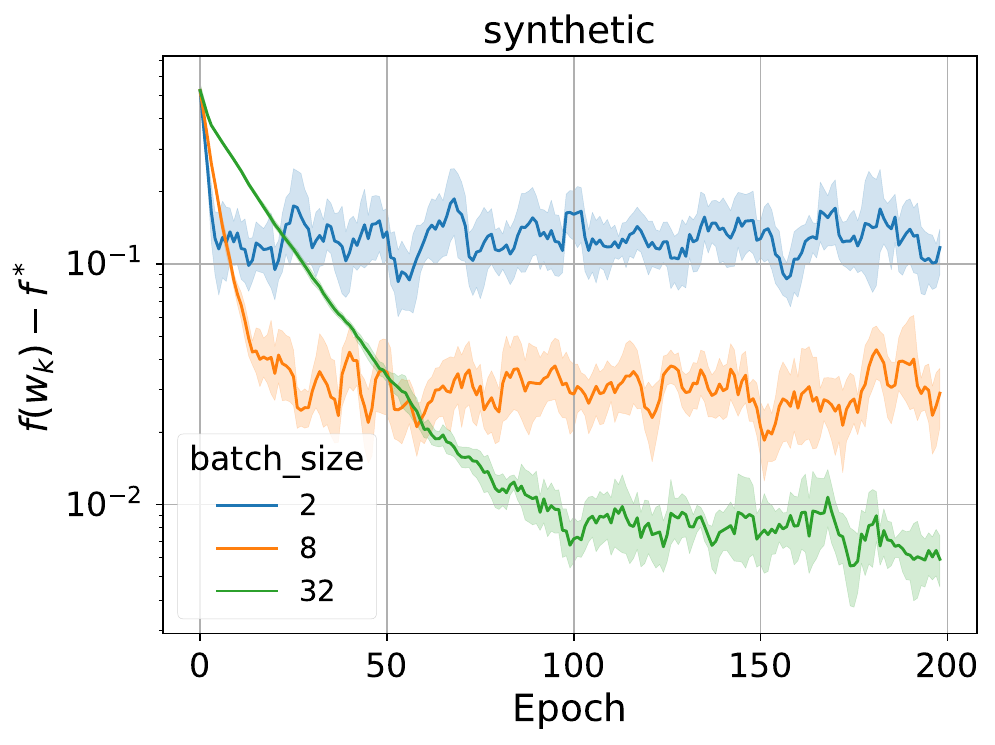}}{}
            \subf{\includegraphics[width=0.3\textwidth]{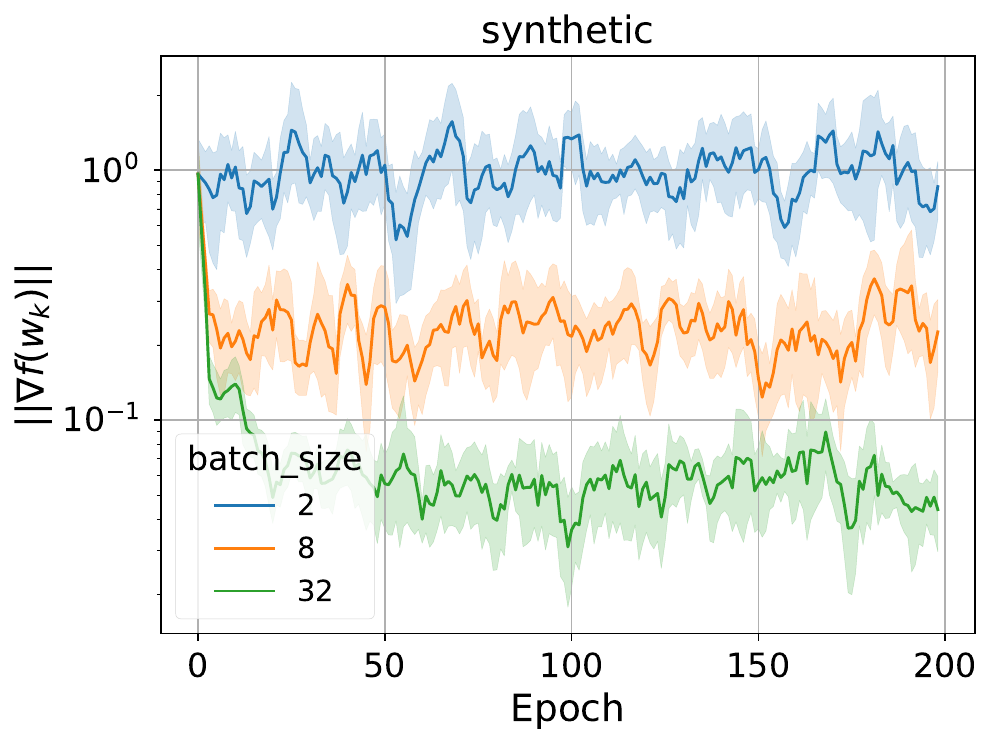}}{}\\
            \subf{\includegraphics[width=0.3\textwidth]{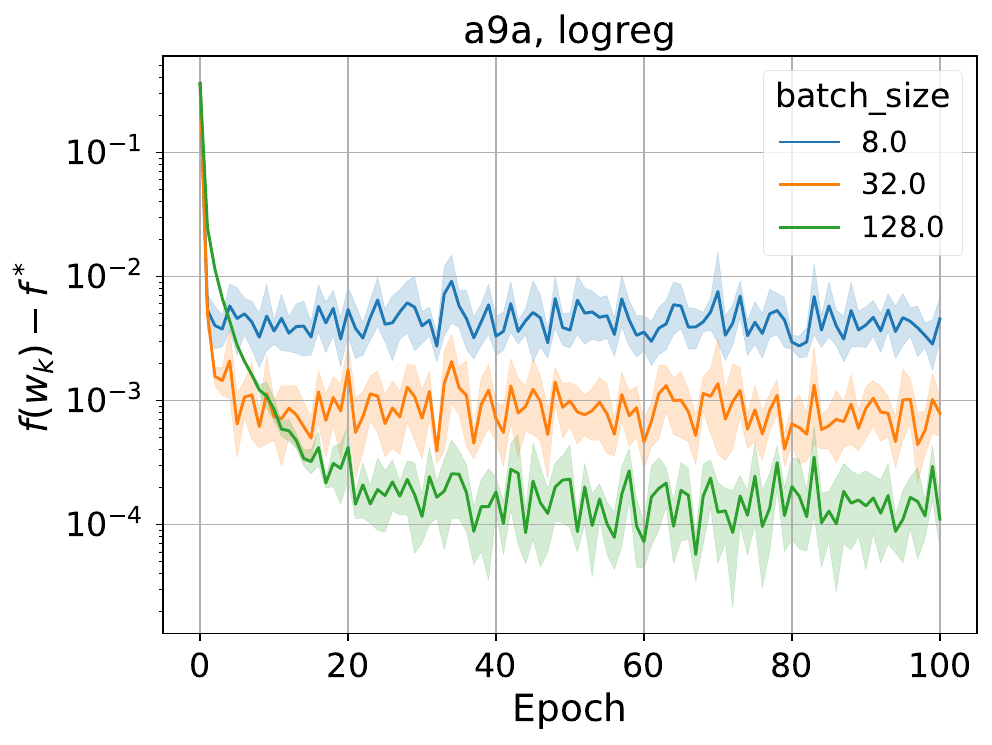}}{}
            \subf{\includegraphics[width=0.3\textwidth]{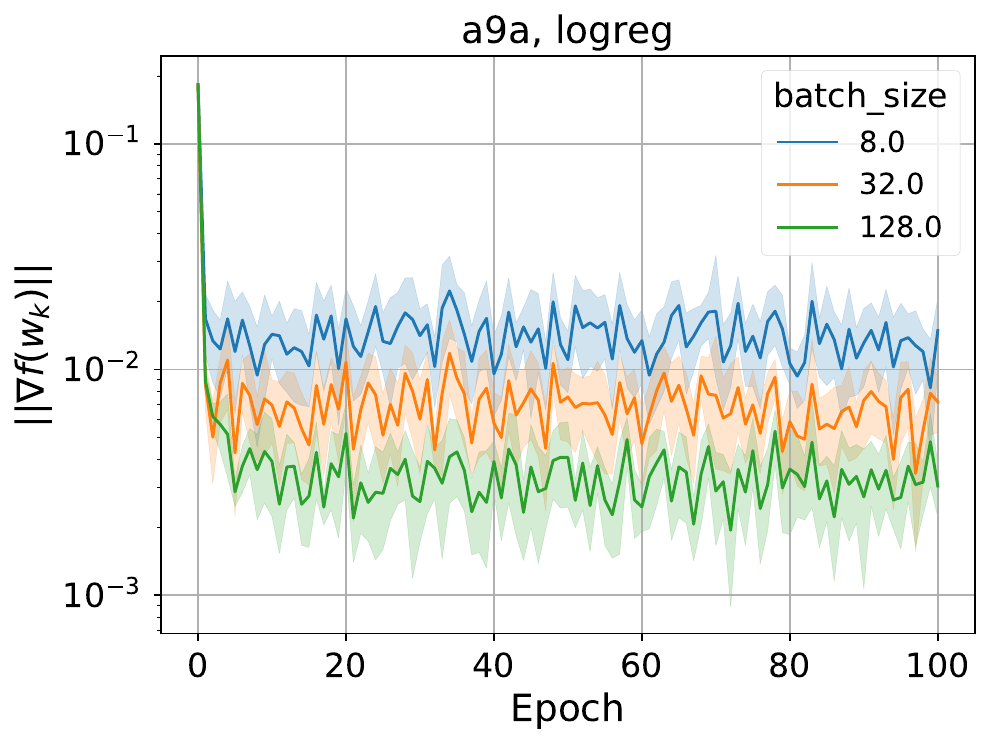}}{} \\
            \subf{\includegraphics[width=0.3\textwidth]{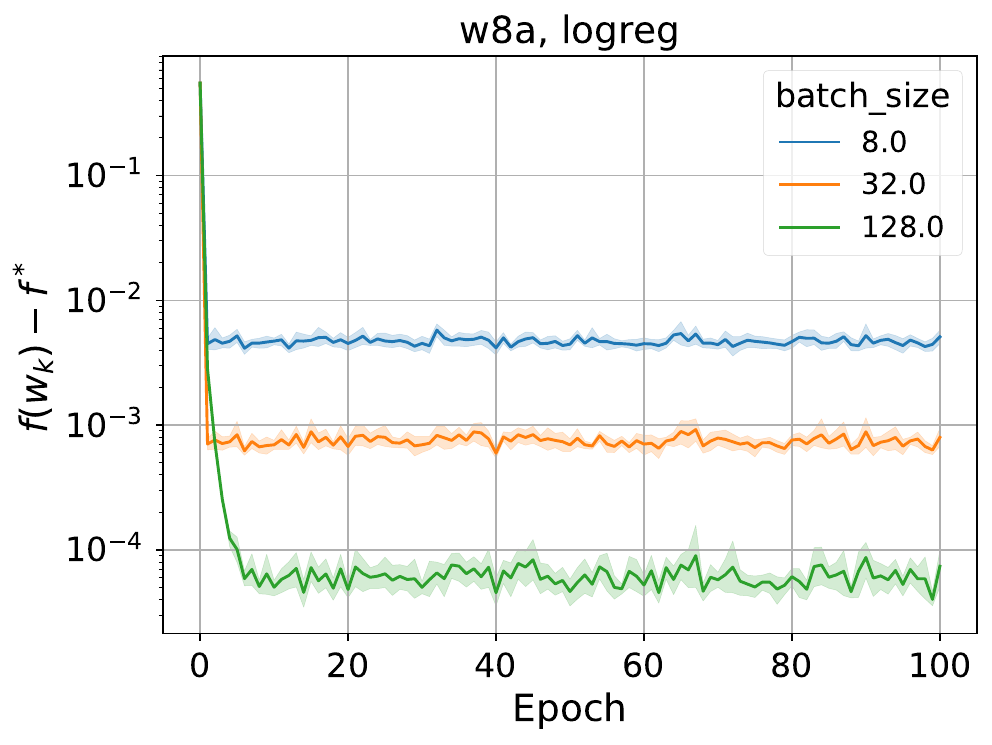}}{}            
            \subf{\includegraphics[width=0.3\textwidth]{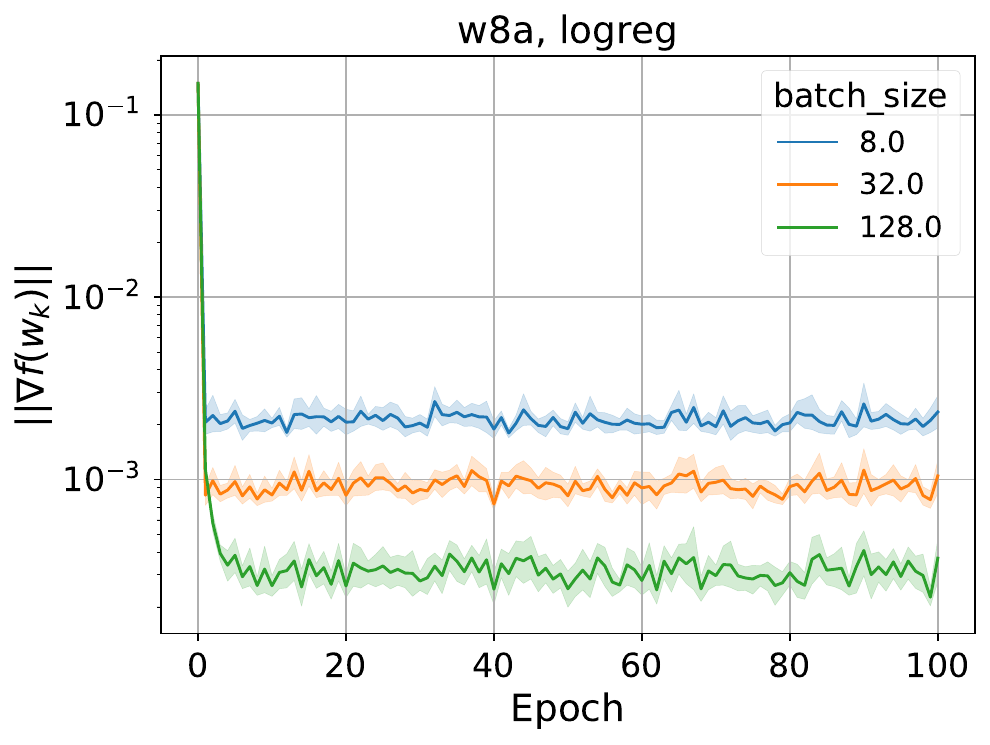}}{}
            \caption{\label{logreg:task4all-datasets}Results for SGD with different batch sizes for synthetic (upper two plots), logistic regression on a9a (middle two plots) and on w8a (lower two plots) datasets.
            Larger batch size allows the algorithm to converge to better confusion region by reducing stochastic gradient approximation variance.}
            \subf{\includegraphics[width=0.3\textwidth]{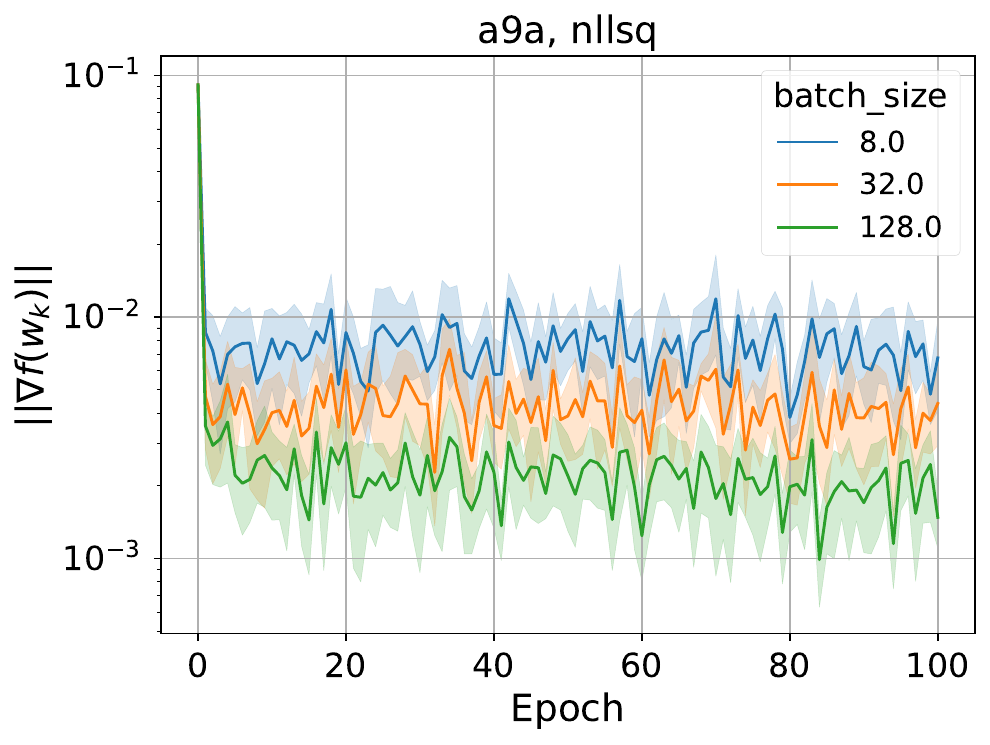}}{}
            \subf{\includegraphics[width=0.3\textwidth]{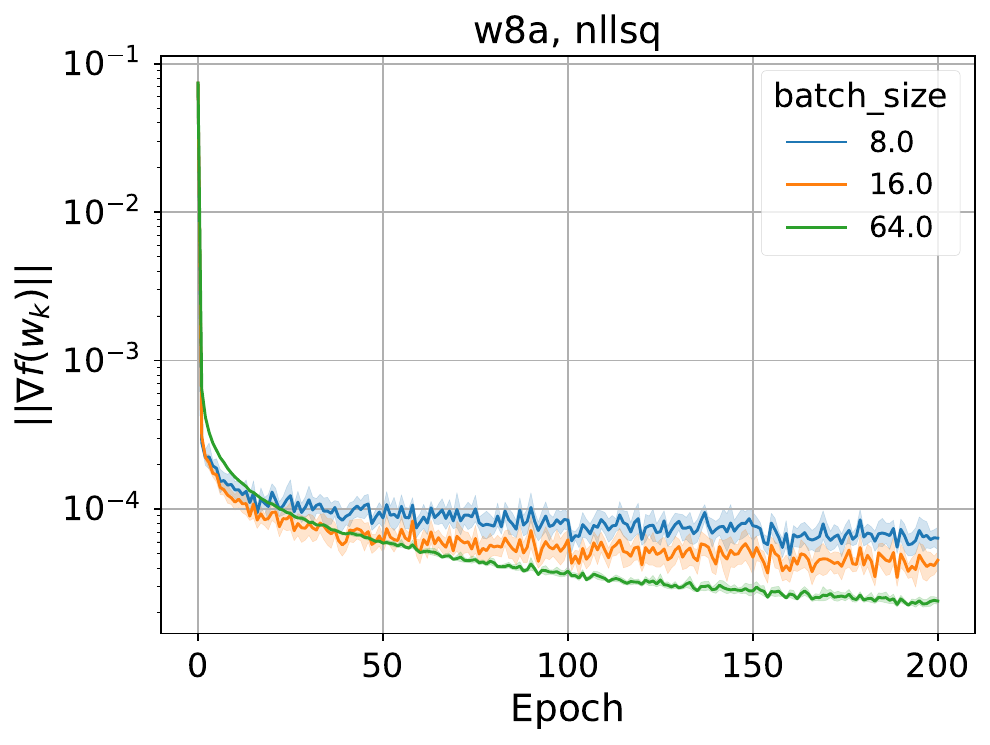}}{}
            \caption{\label{nllsq:task4all-datasets}Results for SGD with different batch size for NLLSQ on a9a (left plot) and w8a (right plot).
            A larger batch size allows the algorithm to converge to a better confusion region by reducing stochastic gradient approximation variance.}
            \subf{\includegraphics[width=0.3\textwidth]{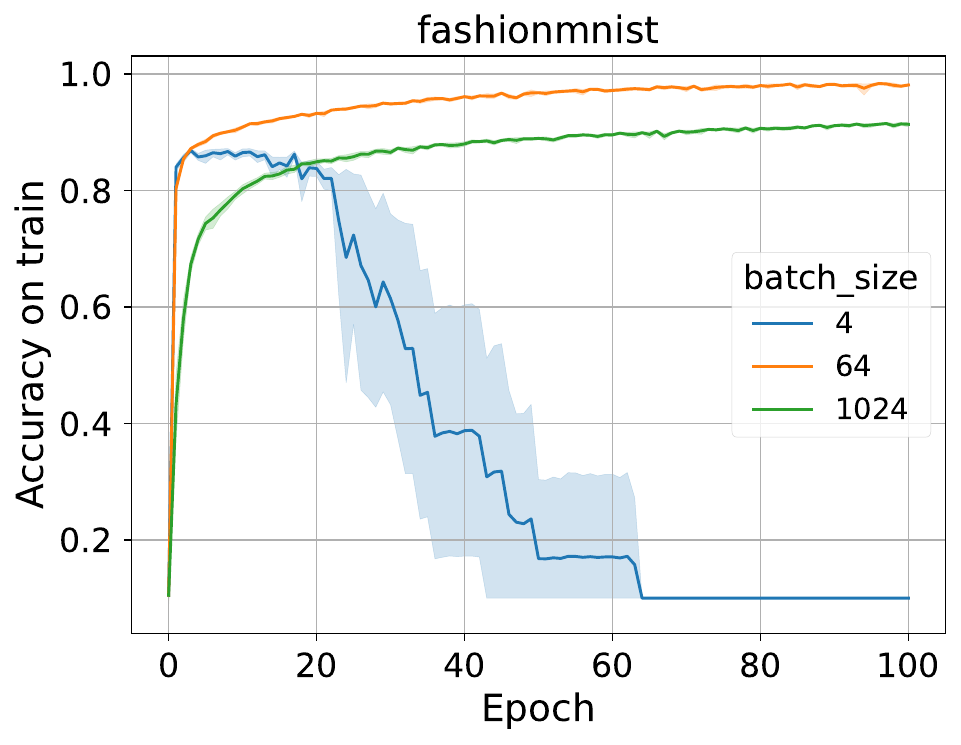}}{}
            \caption{\label{nn:task4}Accuracy results for SGD with different batch sizes for CNN on FashionMNIST.
            A larger batch size allows the algorithm to converge to a better confusion region by reducing stochastic gradient approximation variance.}
        \end{figure}
    
    \paragraph{\textbf{SGD vs SGD + tests}\;}
        Now we can make the procedure of batch size adjustment adaptive.
        For this, we can use approximated batch size tests \eqref{eq:approximated inner product test}, \eqref{eq:approximated orthogonality test}, that we have mentioned earlier.
        Despite the fact that in our theoretical results, we used norm test \eqref{eq:norm test}, in our experiments we adjusted batch size with approximated inner product test \eqref{eq:approximated inner product test} and \eqref{eq:orthogonality test}. 
        We fixed hyperparameters $\theta = 1.5$ and $\nu = 7$.
        The results are presented in Figure \ref{logreg:task5all-datasets},\ref{nllsq:task5all-datasets}.
        As you can see, now SGD has a much lower variance and it converges faster. The same result holds for the neural network in Figure \ref{nn:task5}.
         \begin{figure}[H]
         \vskip-20pt
            \centering
            \subf{\includegraphics[width=0.32\textwidth]{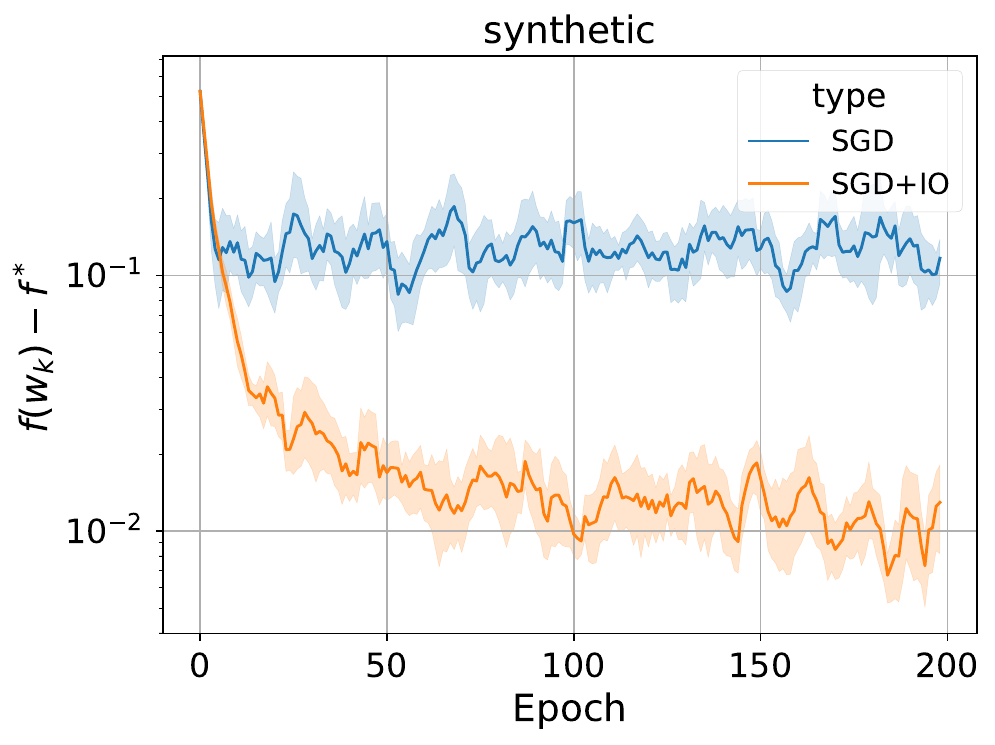}}{}  
            \subf{\includegraphics[width=0.32\textwidth]{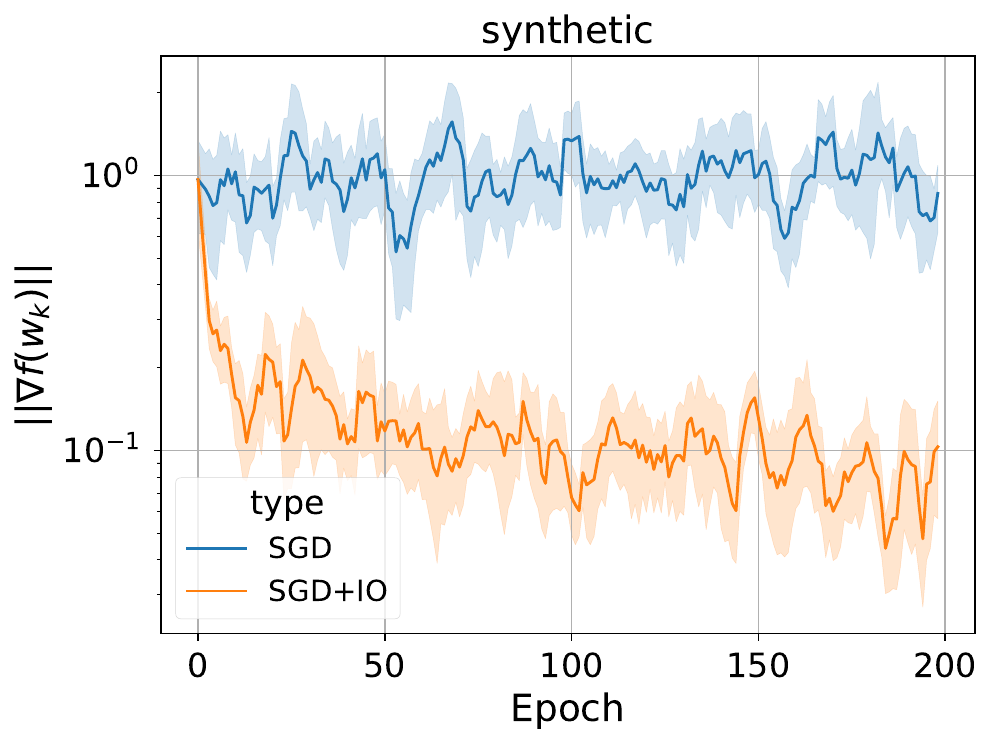}}{}  
            \subf{\includegraphics[width=0.32\textwidth]{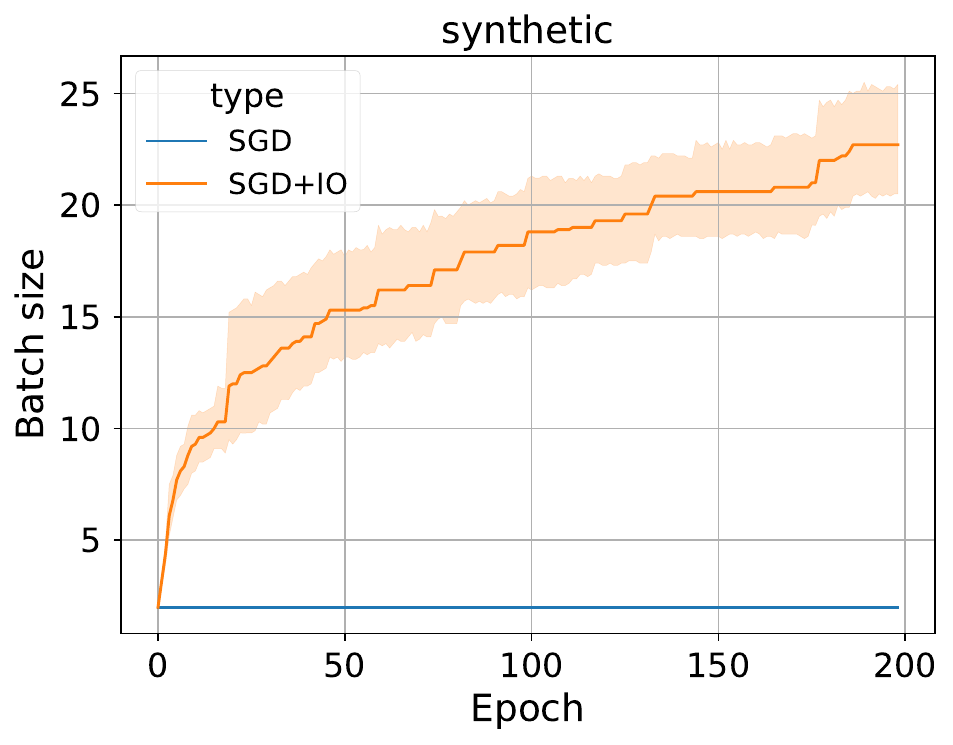}}{}
            \vskip-5pt
            \subf{\includegraphics[width=0.32\textwidth]{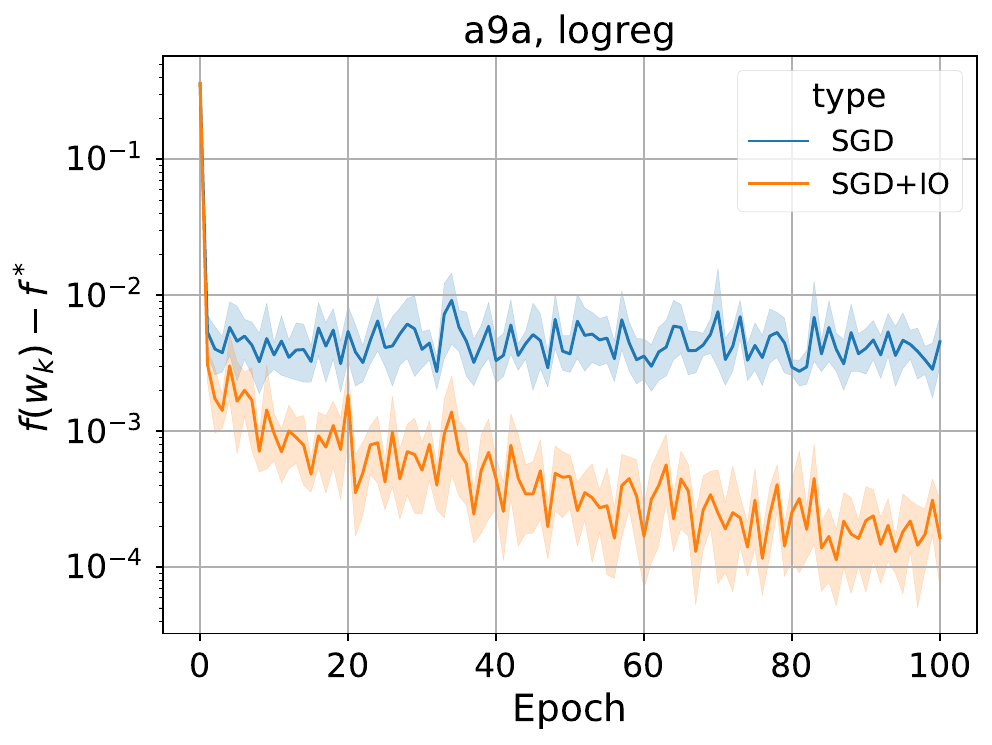}}{}  
            \subf{\includegraphics[width=0.32\textwidth]{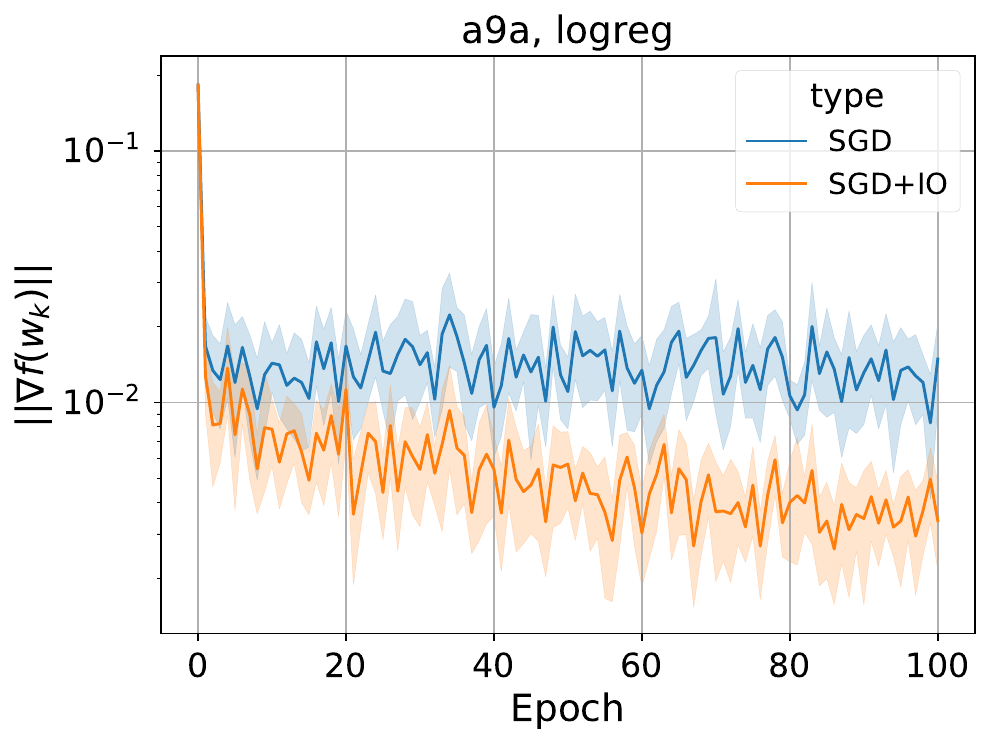}}{}  
            \subf{\includegraphics[width=0.32\textwidth]{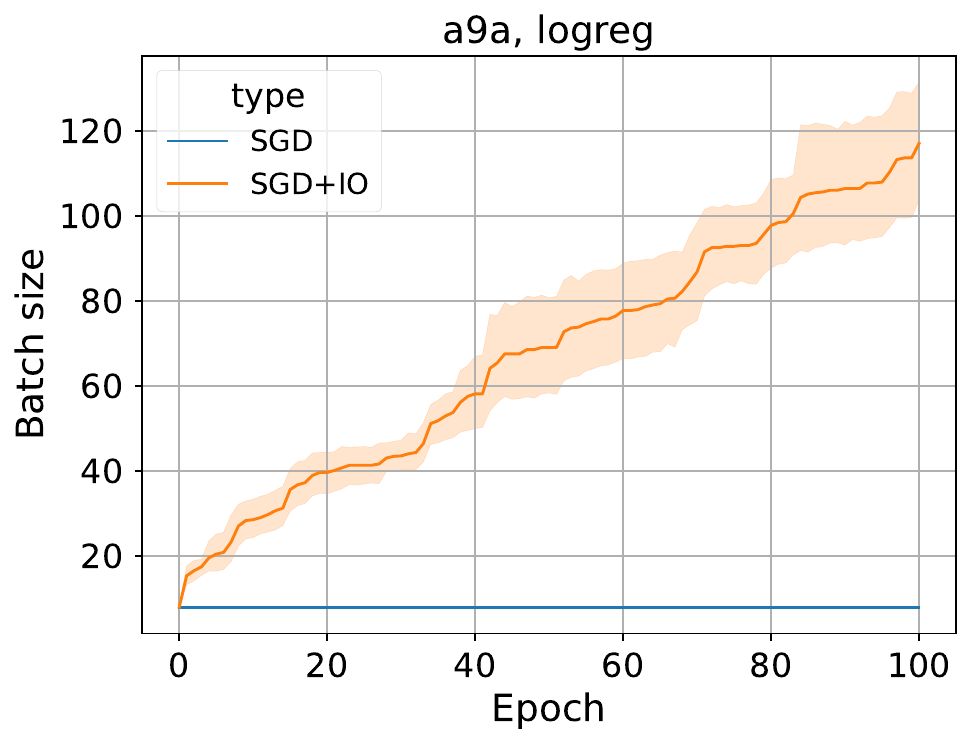}}{}  
            \vskip-5pt
            \subf{\includegraphics[width=0.32\textwidth]{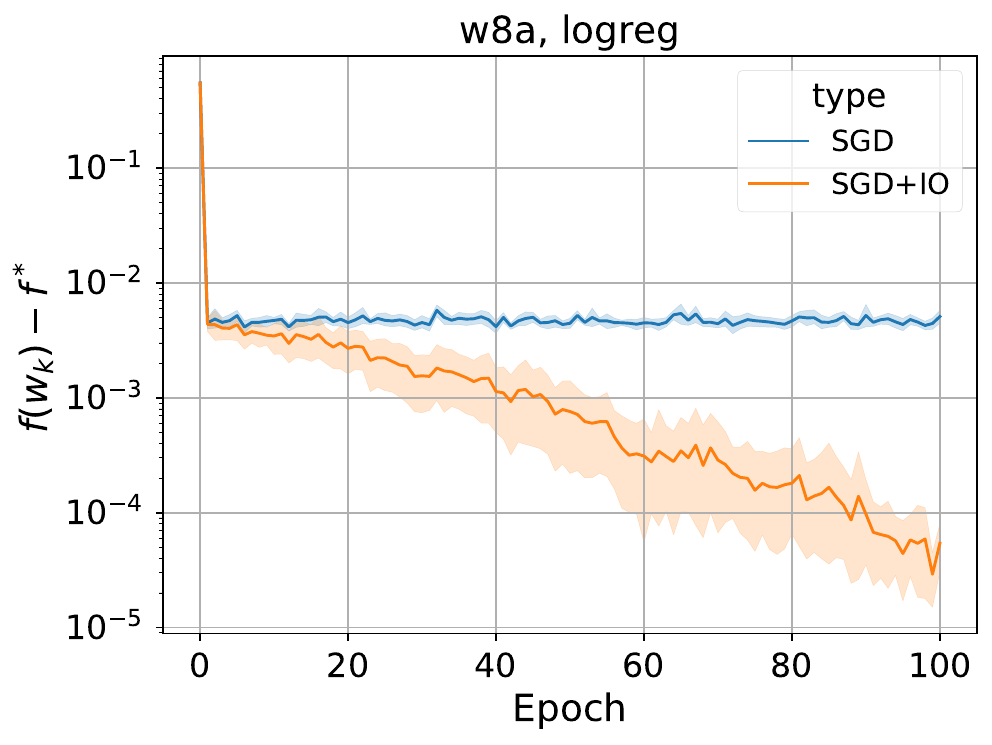}}{}  
            \subf{\includegraphics[width=0.32\textwidth]{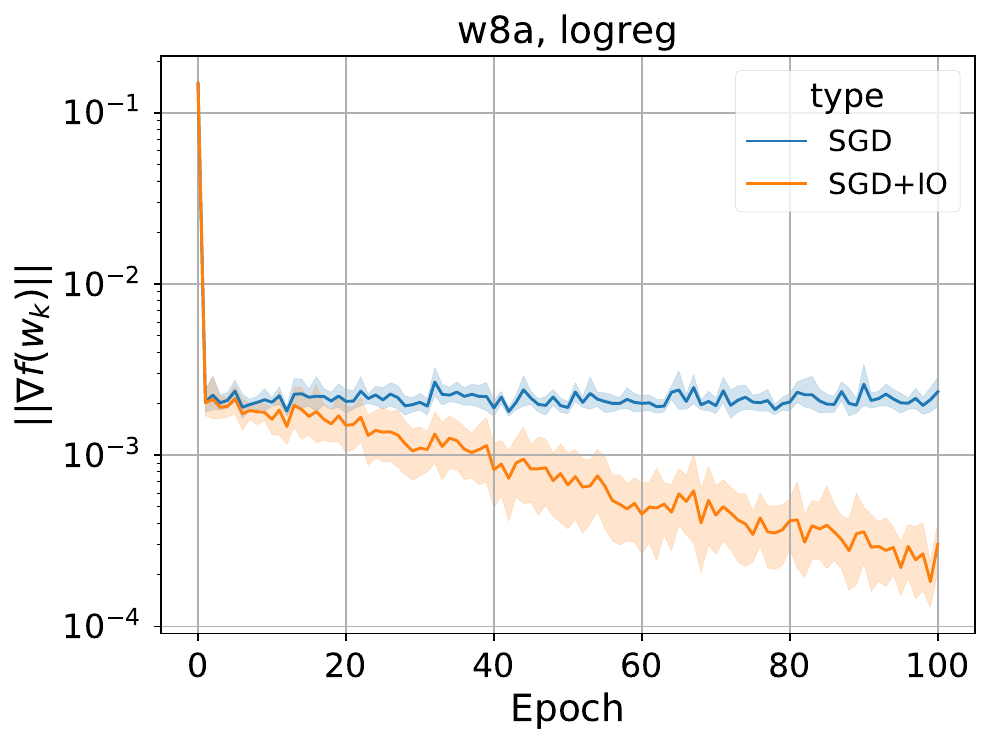}}{}  
            \subf{\includegraphics[width=0.32\textwidth]{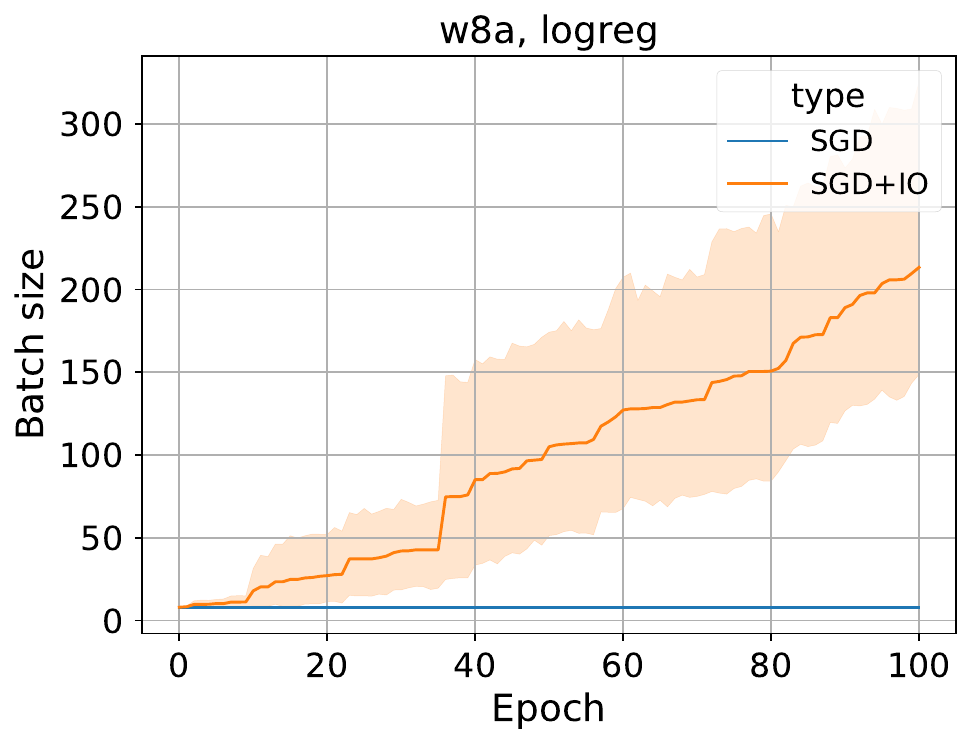}}{}
            \vskip-3pt
            \caption{\label{logreg:task5all-datasets}SGD vs SGD + Inner test + Ortho test (IO) for synthetic (upper plots), logistic regression on a9a (middle plots) and on w8a (lower plots). Adaptivity in batch size makes the algorithm adjust batch size automatically as long as it converges to a minimum. Despite the fact that batch size grows up to a very small part of the whole dataset, it allows us to significantly improve the results.}
            \vskip-20pt
        \end{figure}
        \begin{figure}[H]
        \vskip-10pt
            \centering
            \subf{\includegraphics[width=0.32\textwidth]{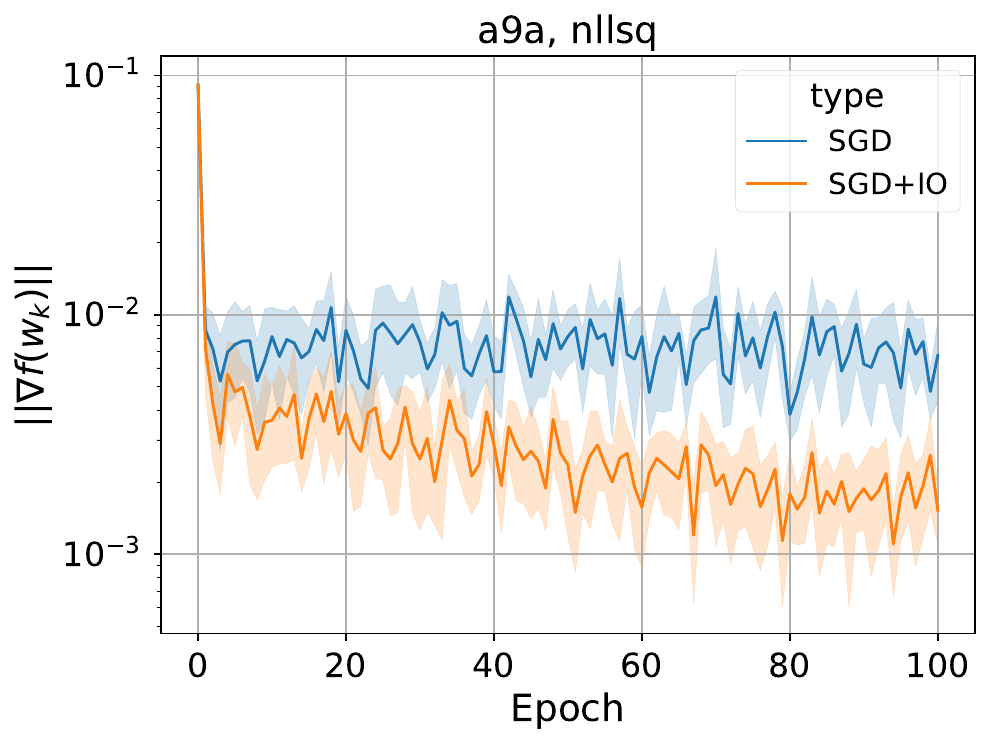}}{}  
            \subf{\includegraphics[width=0.32\textwidth]{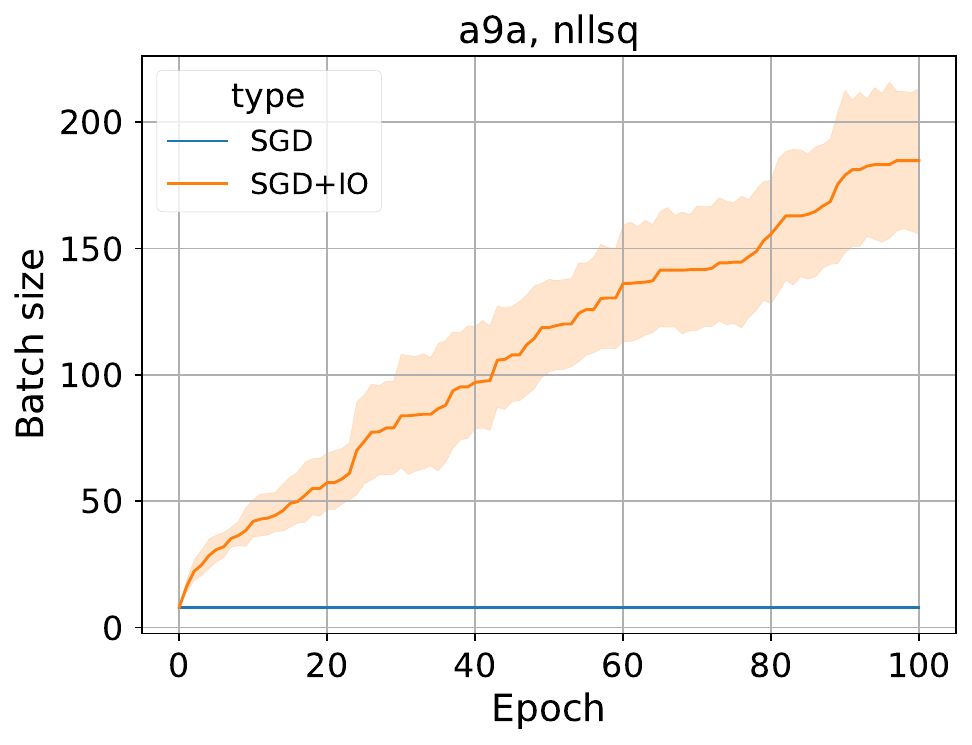}}{}   \\
            \vskip-5pt
            \subf{\includegraphics[width=0.32\textwidth]{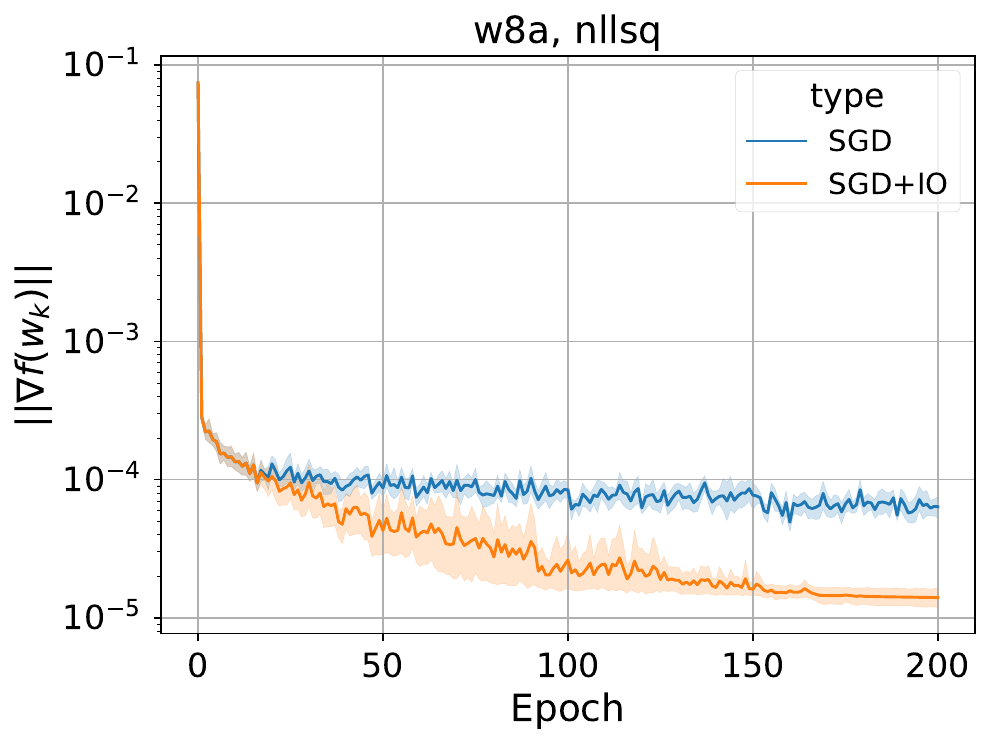}}{}  
            \subf{\includegraphics[width=0.32\textwidth]{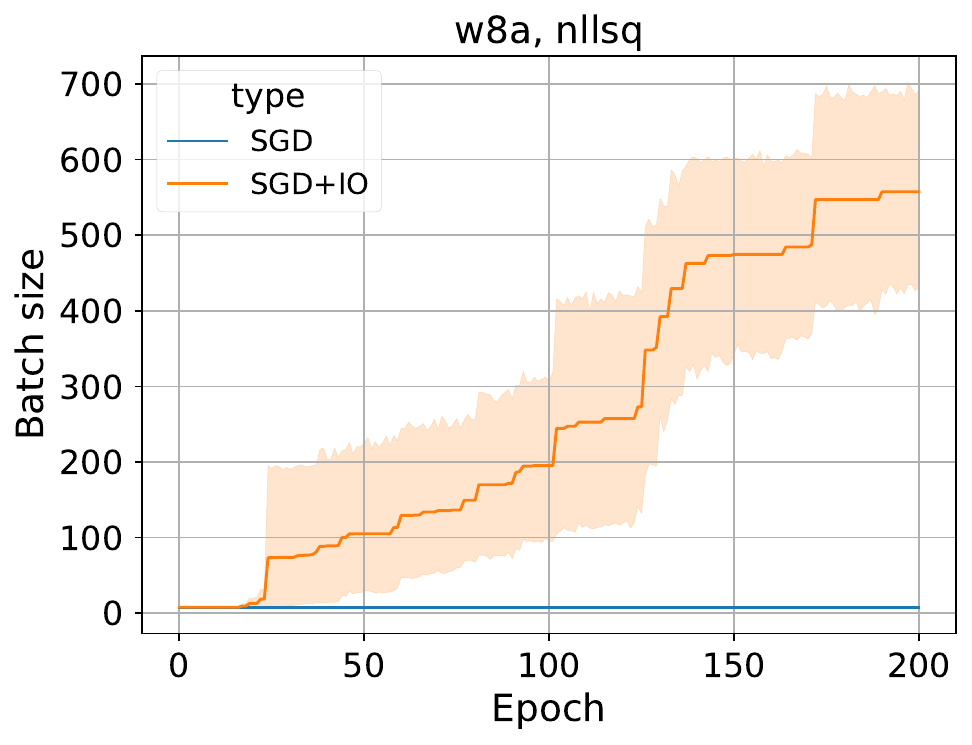}}{}
            \vskip-3pt
            \caption{\label{nllsq:task5all-datasets}SGD vs SGD + Inner test + Ortho test (IO) for NLLSQ on a9a (left two plots) and on w8a (right two plots). Adaptivity in batch size makes the algorithm adjust batch size automatically as long as it converges to a minimum. Despite the fact that batch size grows up to a very small part of the whole dataset, it allows us to significantly improve the results.}
            \vskip-20pt
            \end{figure}
        \begin{figure}[H]
            \vskip-10pt
            \centering
            \subf{\includegraphics[width=0.34\textwidth]{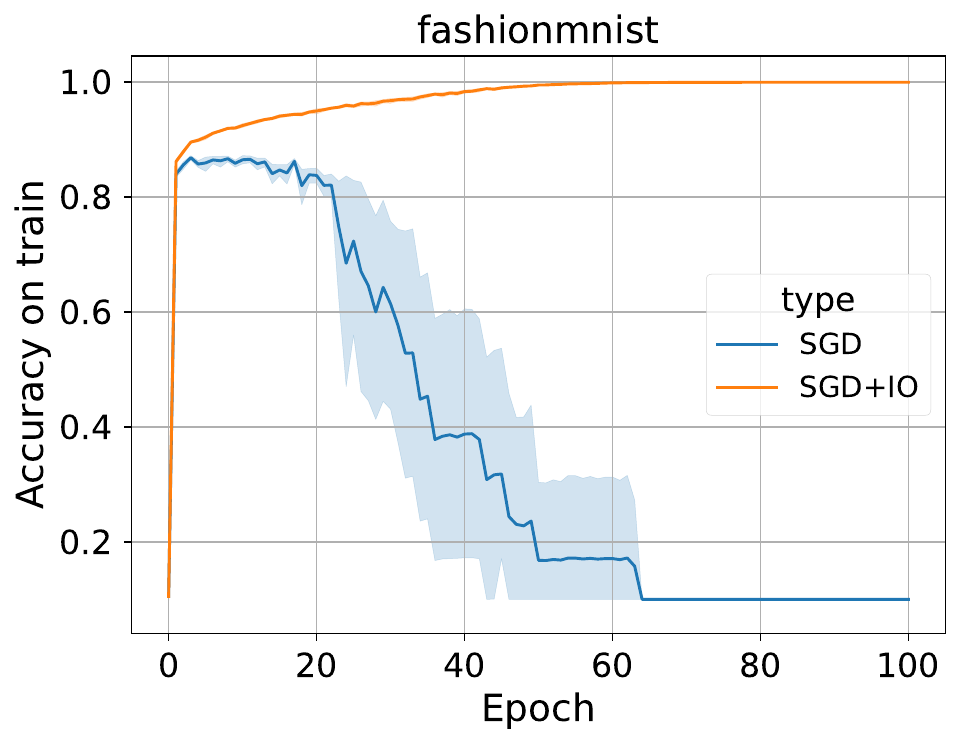}}{}
            \subf{\includegraphics[width=0.34\textwidth]{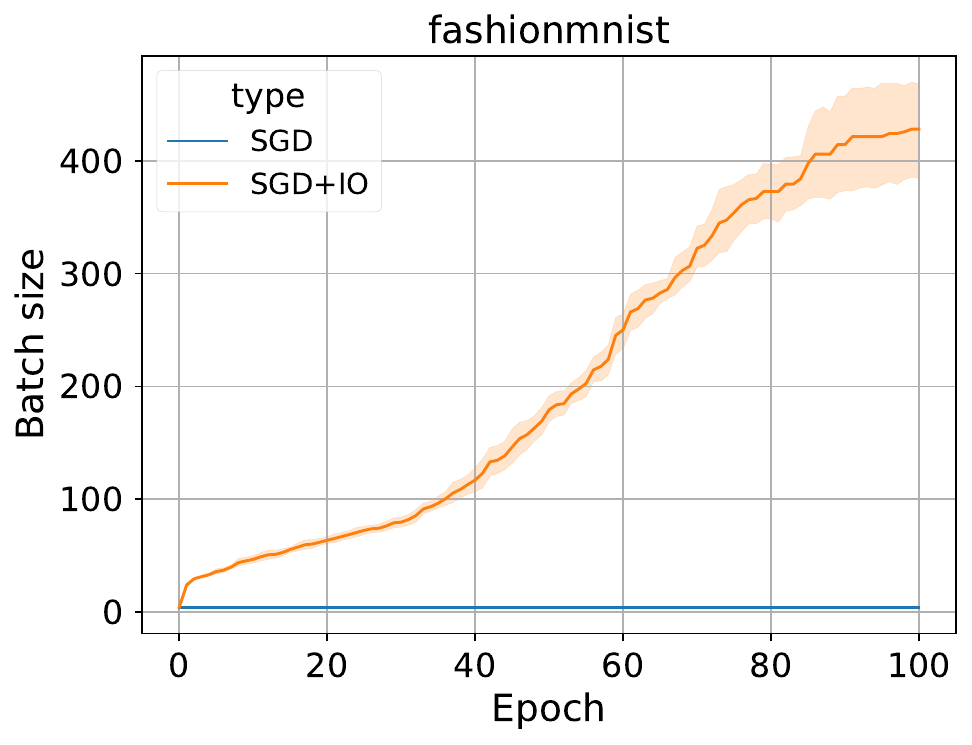  }}{}
            \vskip-3pt
            \caption{\label{nn:task5}SGD vs SGD + Inner test + Ortho test (IO) for CNN on FashionMNIST. Adaptivity in batch size makes the algorithm adjust batch size automatically as long as it converges to a minimum. Despite the fact that batch size grows up to a very small part of the whole dataset, it allows us to significantly improve the results.}
            \vskip-20pt
        \end{figure}
        
    \paragraph{\textbf{SGD vs SGD + tests vs AdaBatchGrad}\;}
        Finally, we can get the best of the two worlds: use both adaptive step size from AdaGrad \eqref{eq:global step size} and approximated tests for adaptive batch size \eqref{eq:approximated inner product test}, \eqref{eq:approximated orthogonality test}. 
        The results of Algorithm \ref{alg:adabatchgrad}, are presented in Figure \ref{logreg:task6all-datasets},\ref{nllsq:task6all-datasets}.
        Here for both AdaGrad and Algorithm \ref{alg:adabatchgrad} we used $\beta = 5 \cdot 10^4$ so that this value dominates the sum of the norms of the gradients only during the first epochs.
        For batch size tests we used the same hyperparameters as before.
        You can see, that during the first epochs Algorithm \ref{alg:adabatchgrad} performs almost the same way as AdaGrad.
        But as far as its batch size grows, the variance of the stochastic gradient reduces, and Algorithm \ref{alg:adabatchgrad} converges better.
        And we need to point out, that batch size grows only from 2 to 100 after 50 epochs, which is far from the size of the full dataset. On the neural network, we can see AdaGrad and AdaBatchGrad performed equally well in Figure \ref{nn:task6}. But AdaBatchGrad increases the batch size adaptly, which gives more freedom in choosing parameters.

    \begin{figure}[htbp]
            \centering
            \subf{\includegraphics[width=0.24\textwidth]{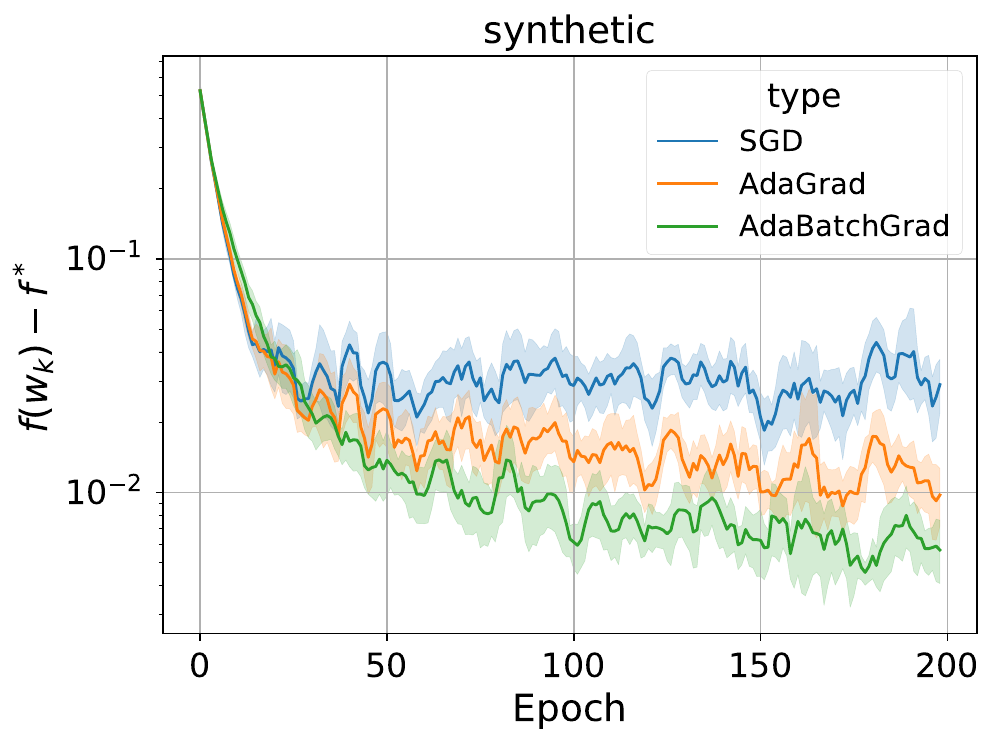}}{}  
            \subf{\includegraphics[width=0.24\textwidth]{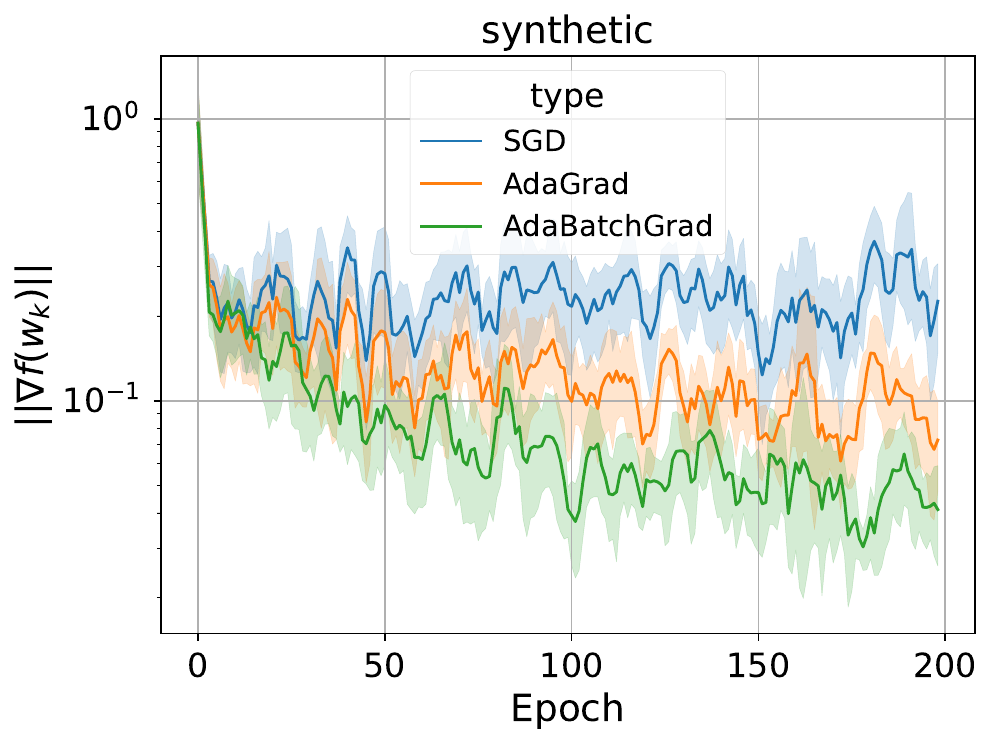}}{}  
            \subf{\includegraphics[width=0.24\textwidth]{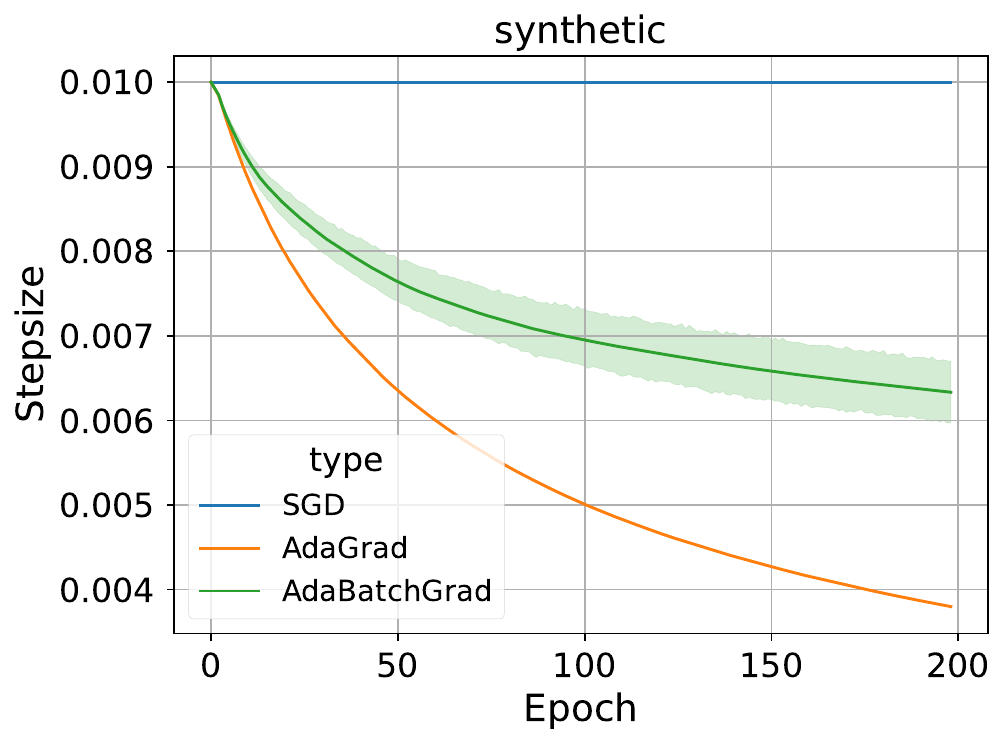}}{}  
            \subf{\includegraphics[width=0.24\textwidth]{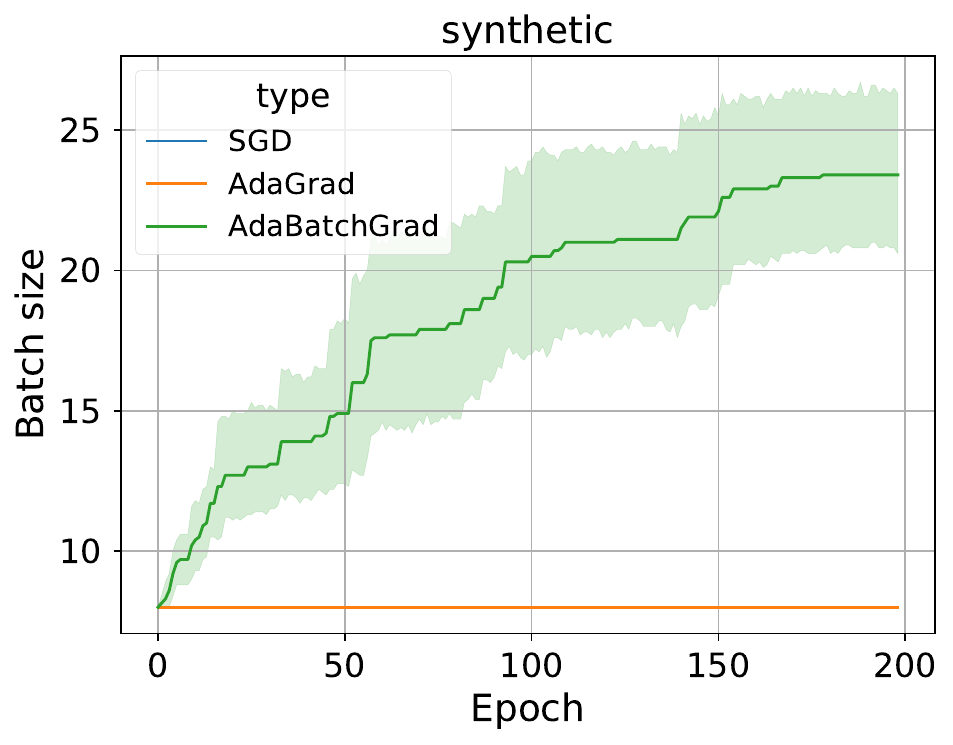}}{}
            \subf{\includegraphics[width=0.24\textwidth]{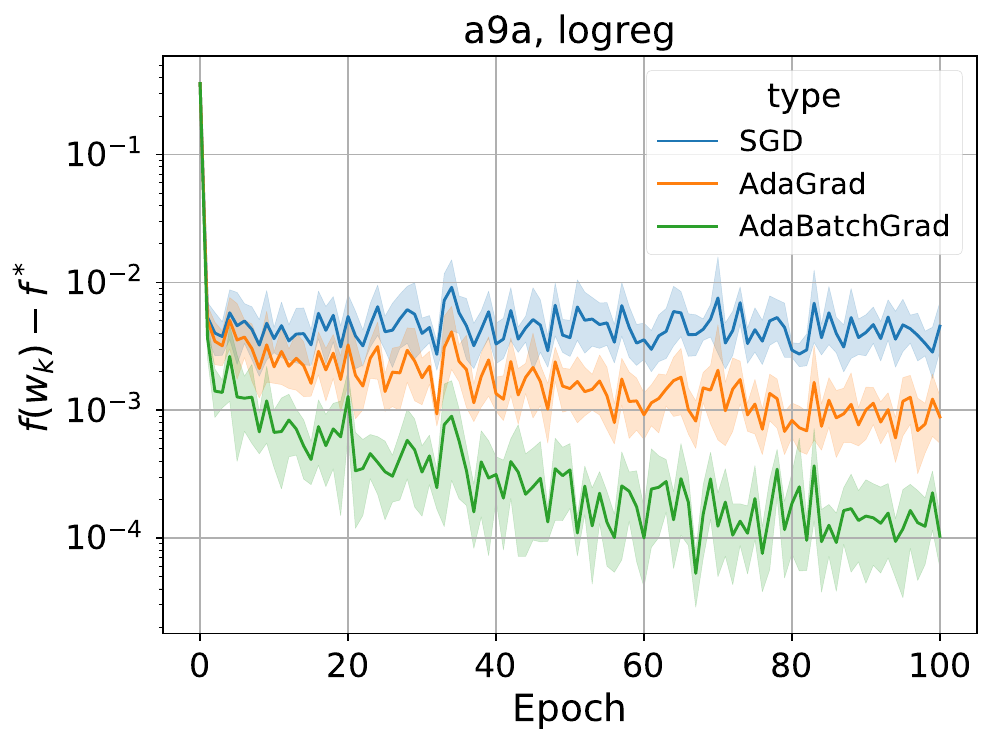}}{}  
            \subf{\includegraphics[width=0.24\textwidth]{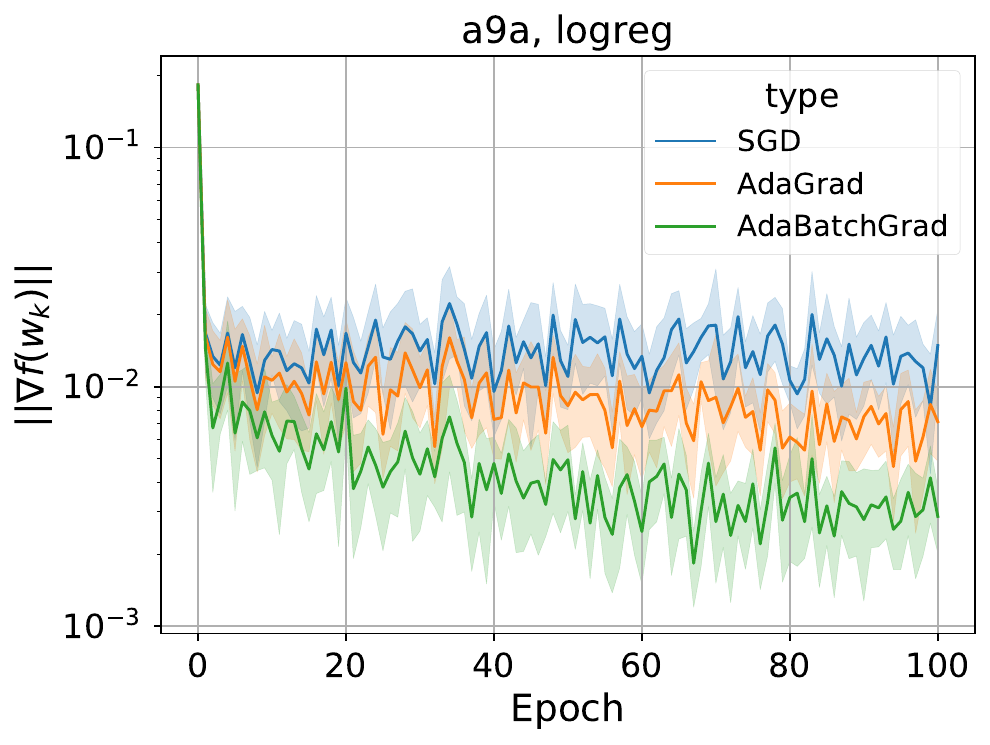}}{}  
            \subf{\includegraphics[width=0.24\textwidth]{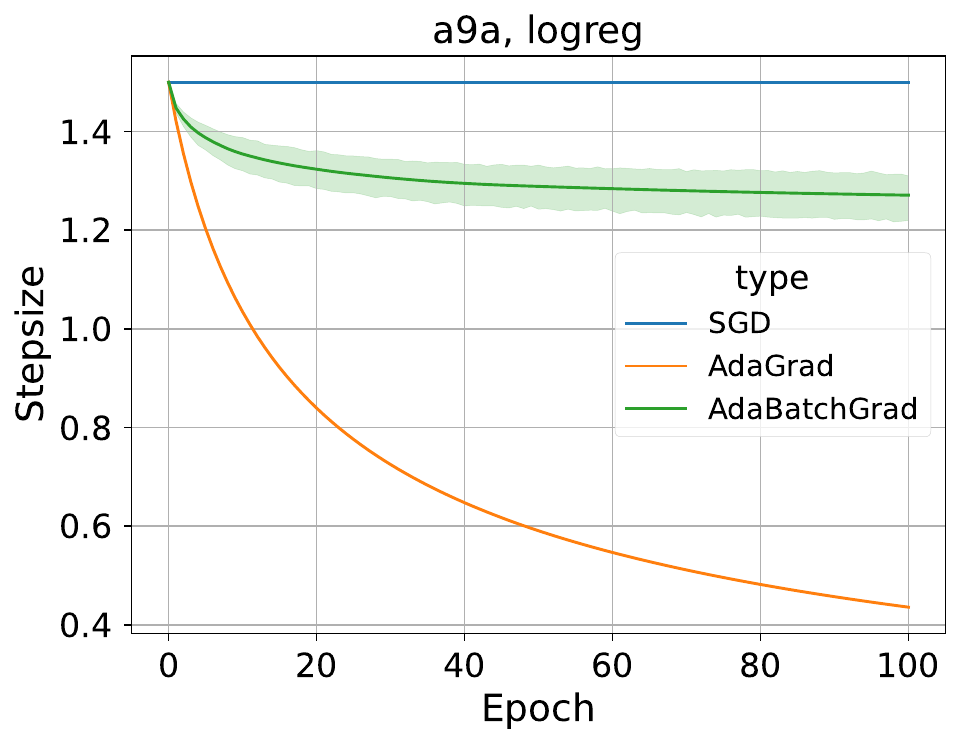}}{}  
            \subf{\includegraphics[width=0.24\textwidth]{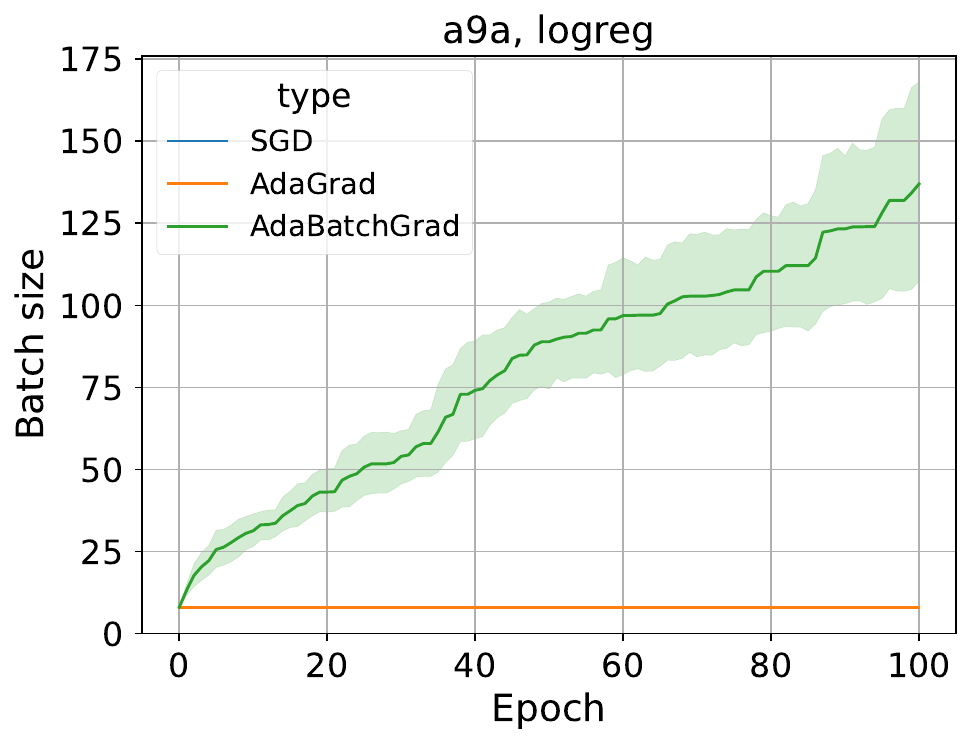}}{}  
            \subf{\includegraphics[width=0.24\textwidth]{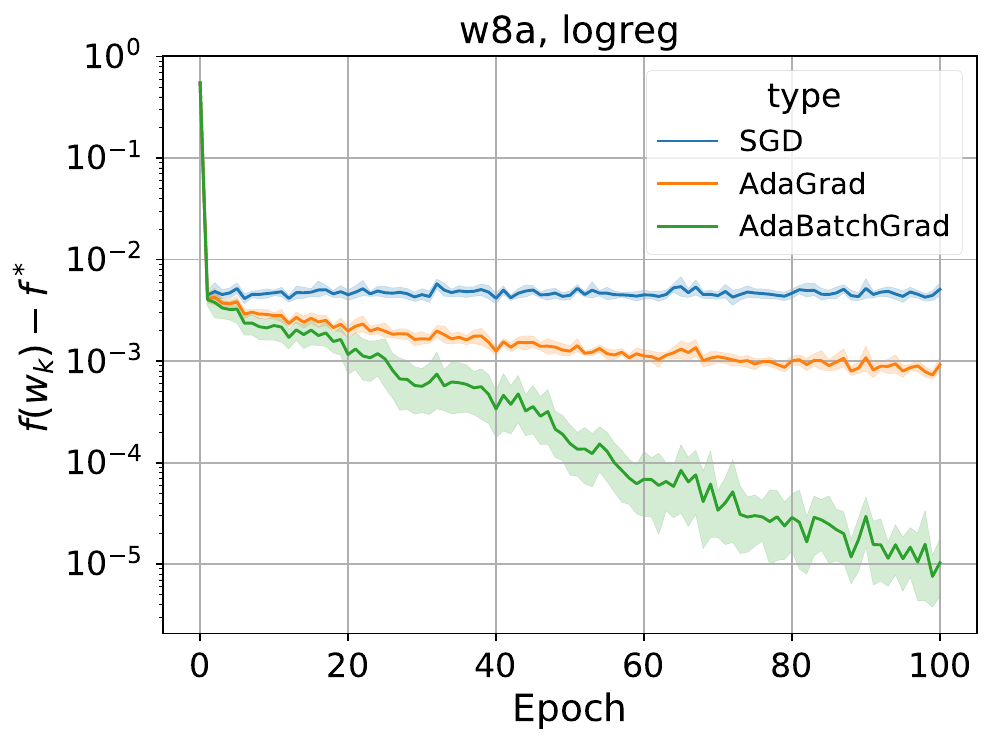}}{}  
            \subf{\includegraphics[width=0.24\textwidth]{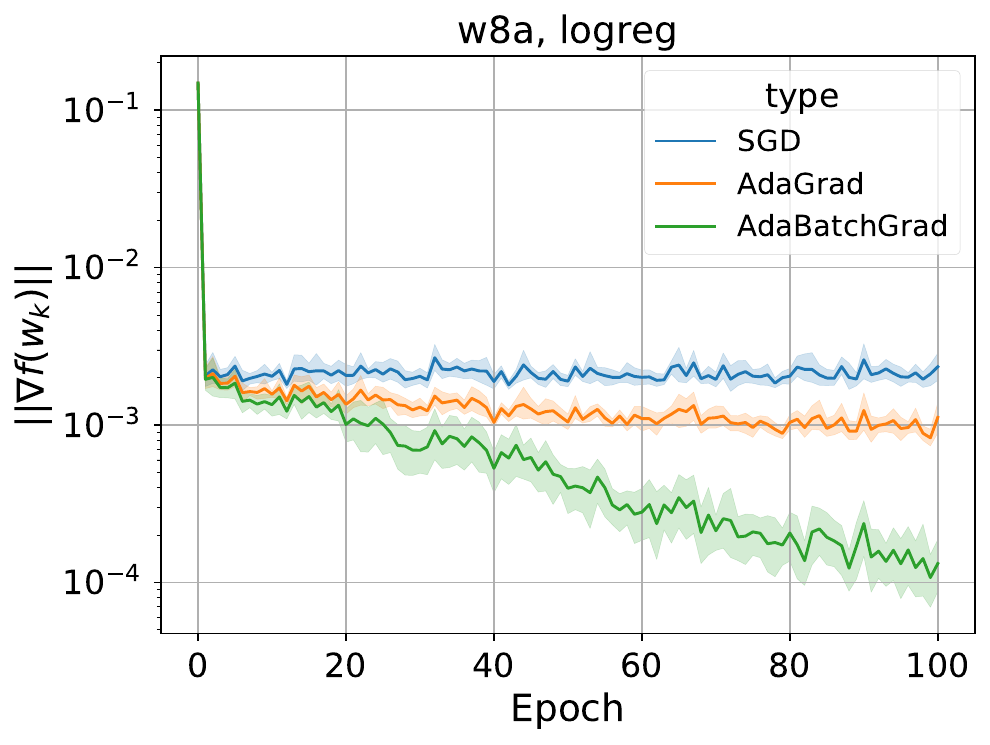}}{}  
            \subf{\includegraphics[width=0.24\textwidth]{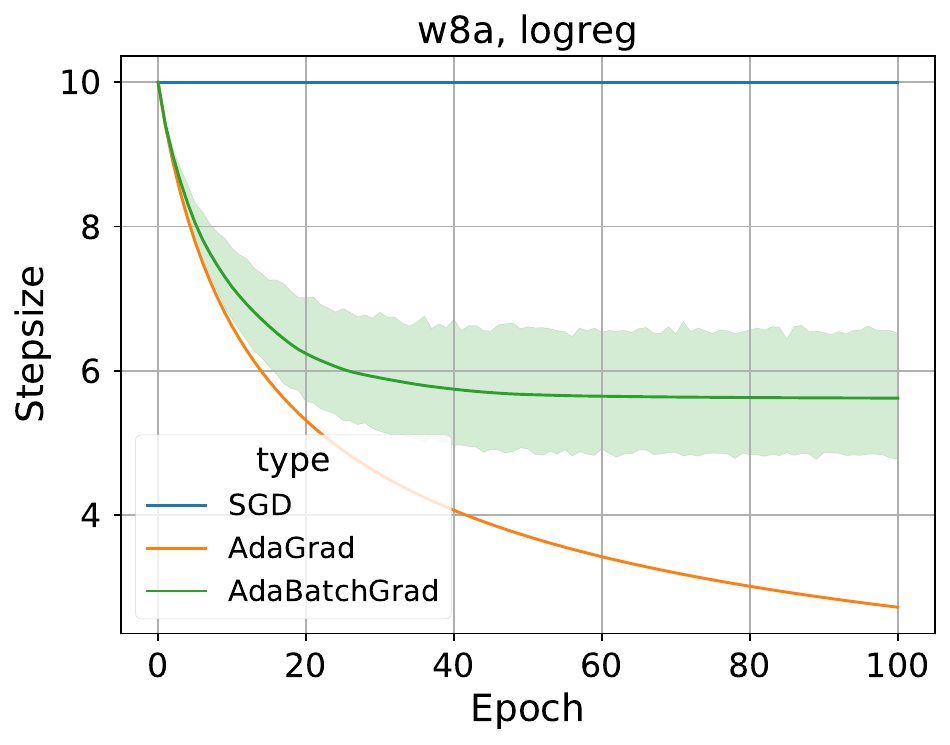}}{}
            \subf{\includegraphics[width=0.24\textwidth]{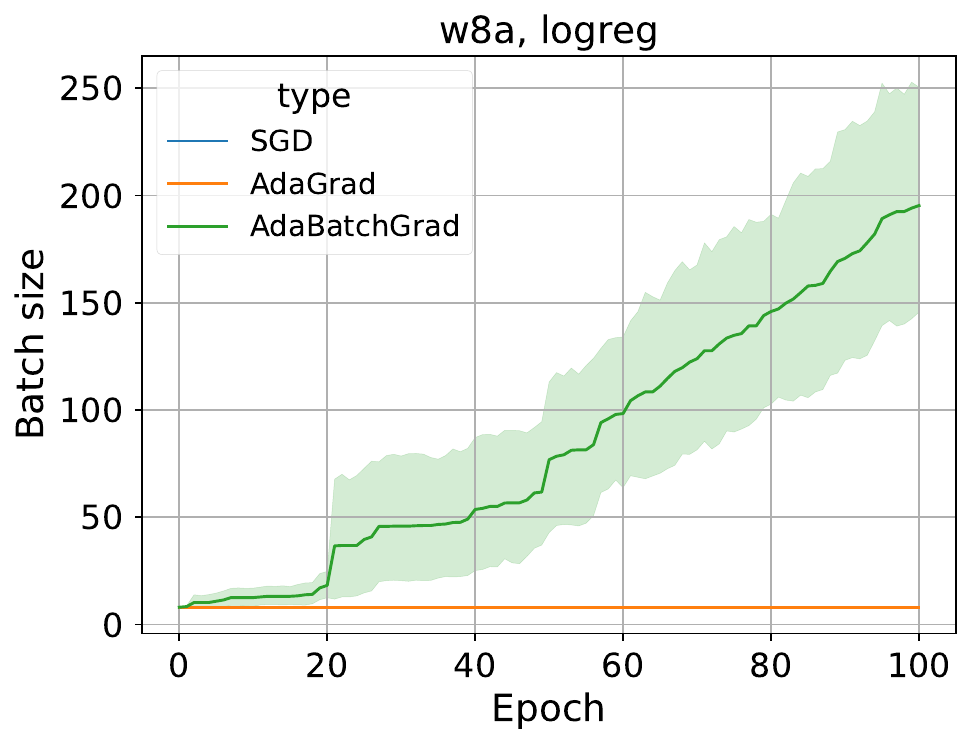}}{}
            \caption{\label{logreg:task6all-datasets}\AdaBatchGrad \  vs AdaGrad vs SGD for synthetic (upper plots), logistic regression on a9a (middle plots) and on w8a (lower plots). 
            We use both techniques and get an algorithm, that is adaptive both in step size and in batch size. This gives us more freedom in choosing the parameters of the algorithm and lets us find a tradeoff between step size and batch size.}
            \subf{\includegraphics[width=0.24\textwidth]{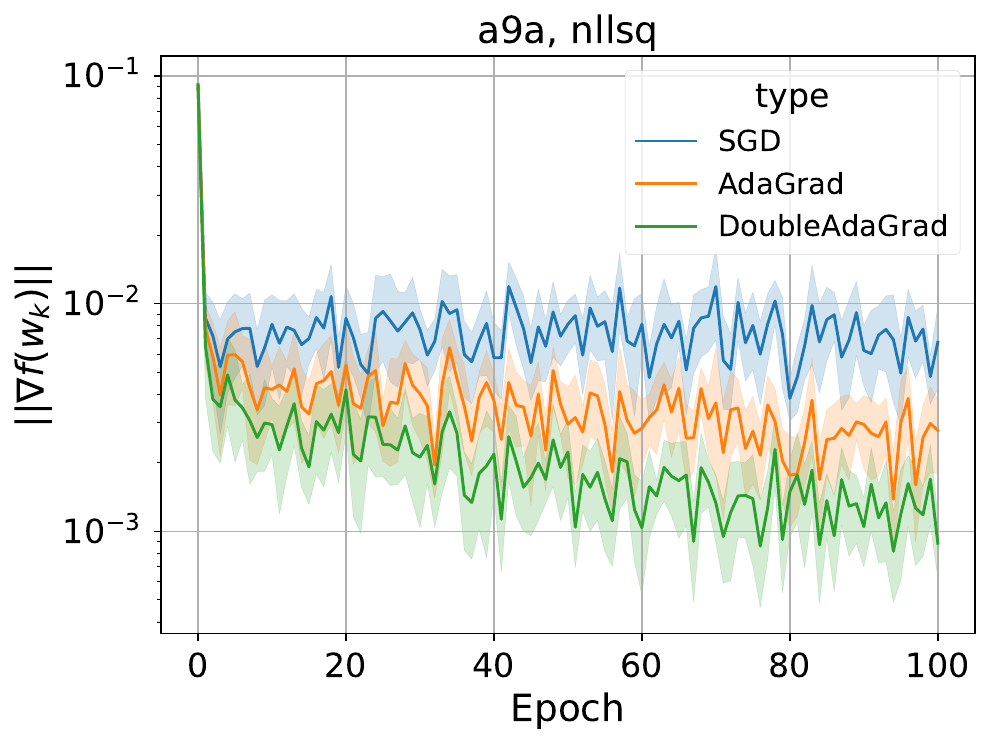}}{}  
            \subf{\includegraphics[width=0.24\textwidth]{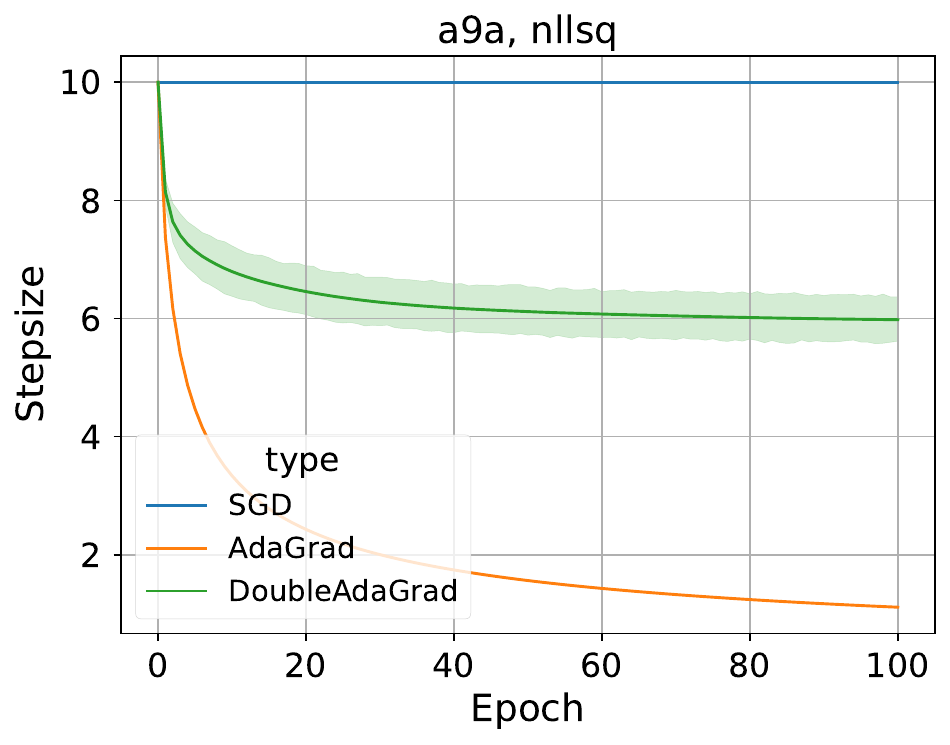}}{}   
            \subf{\includegraphics[width=0.24\textwidth]{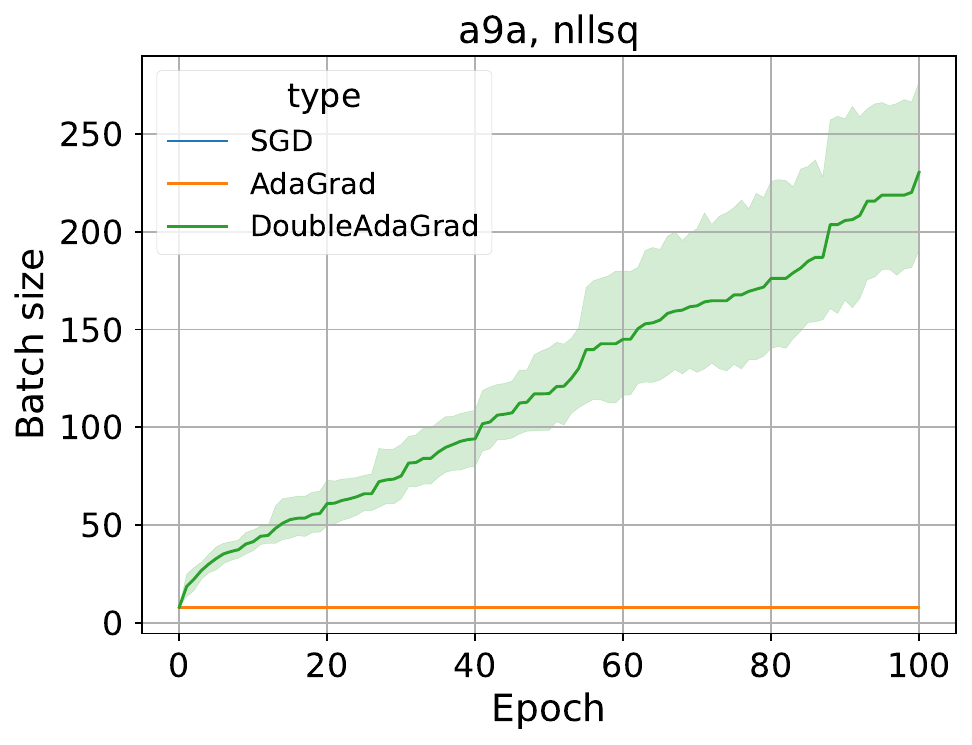}}{}   
            \\
            \subf{\includegraphics[width=0.24\textwidth]{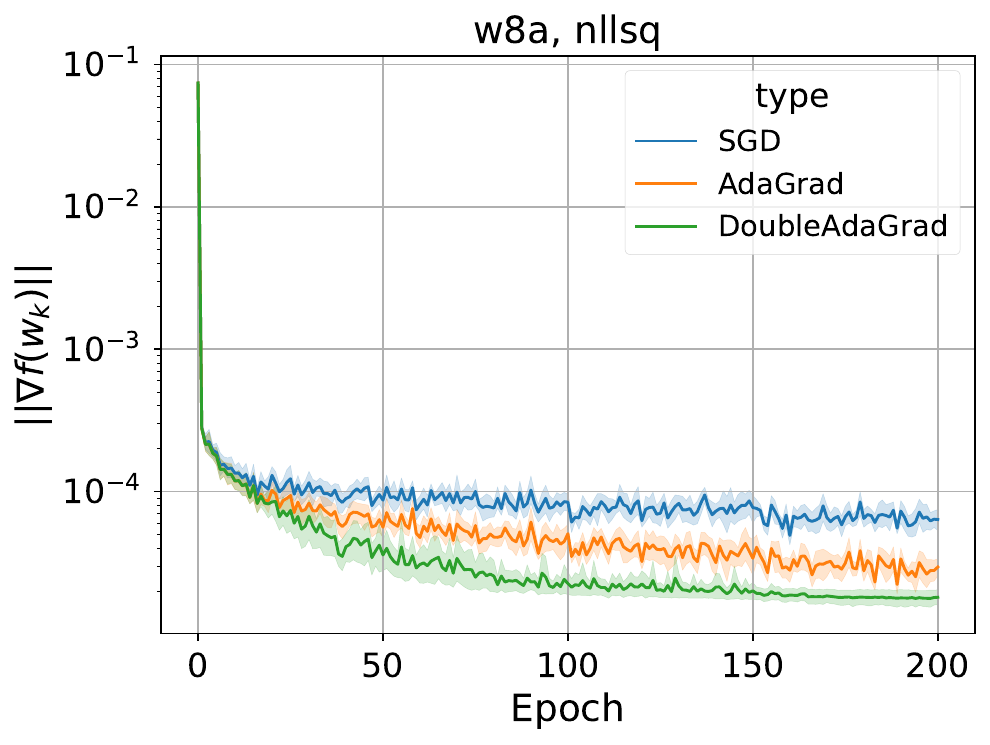}}{}  
            \subf{\includegraphics[width=0.24\textwidth]{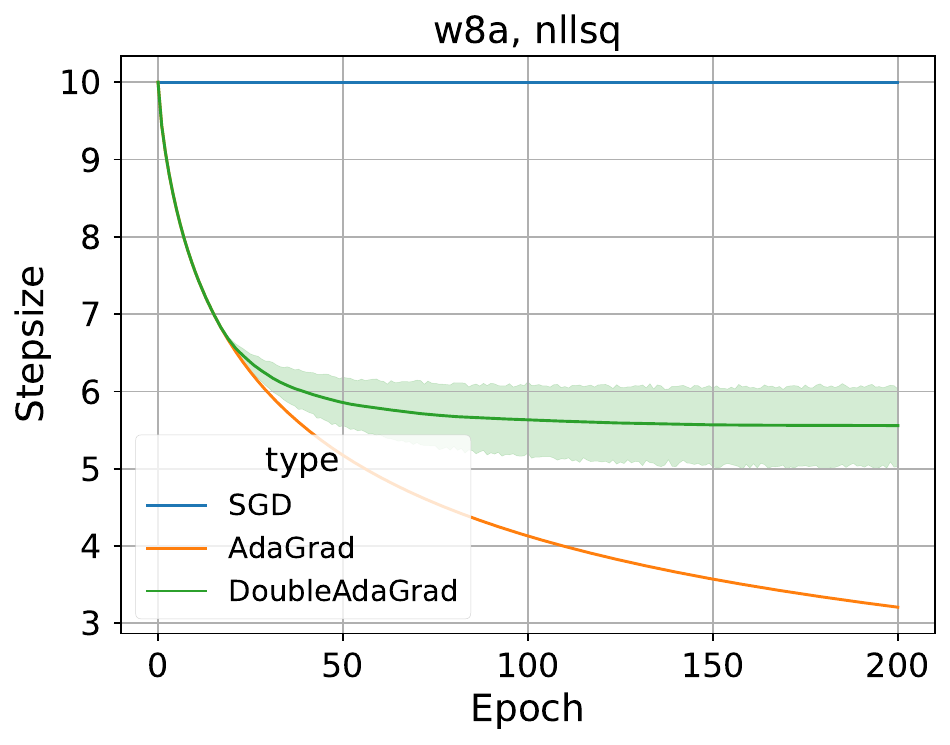}}{}               
            \subf{\includegraphics[width=0.24\textwidth]{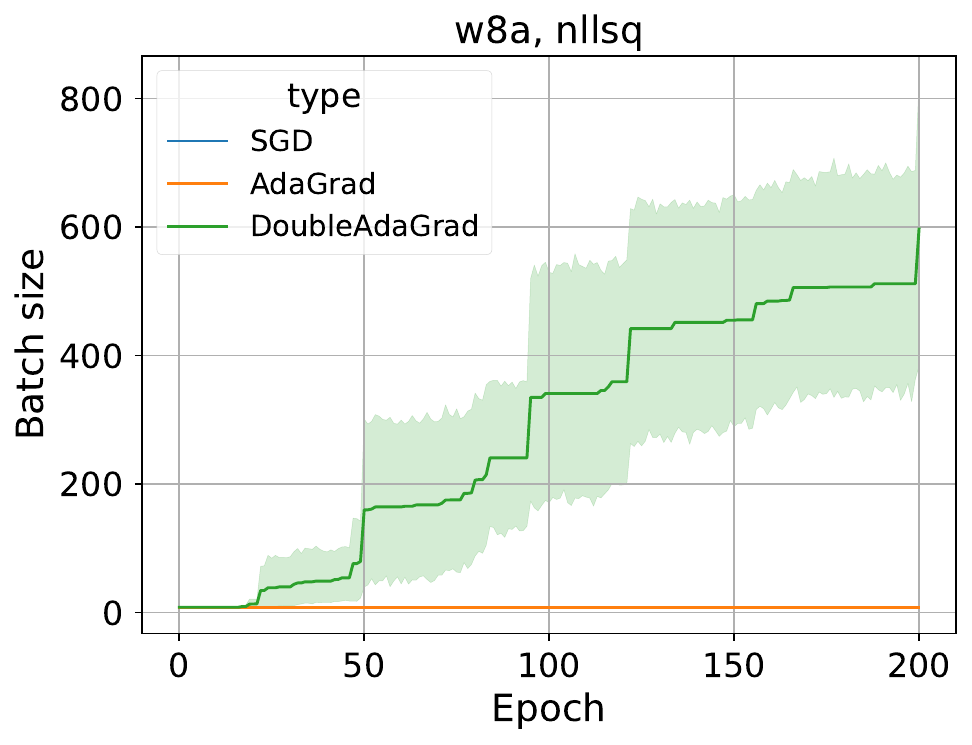}}{}                          
            \caption{\label{nllsq:task6all-datasets}\AdaBatchGrad \  vs AdaGrad vs SGD for NLLSQ on a9a (upper plots) and on w8a (lower plots). 
            We use both techniques and get an algorithm, that is adaptive both in step size and in batch size. This gives us more freedom in choosing the parameters of the algorithm and lets us find a tradeoff between step size and batch size.}
            \subf{\includegraphics[width=0.24\textwidth]{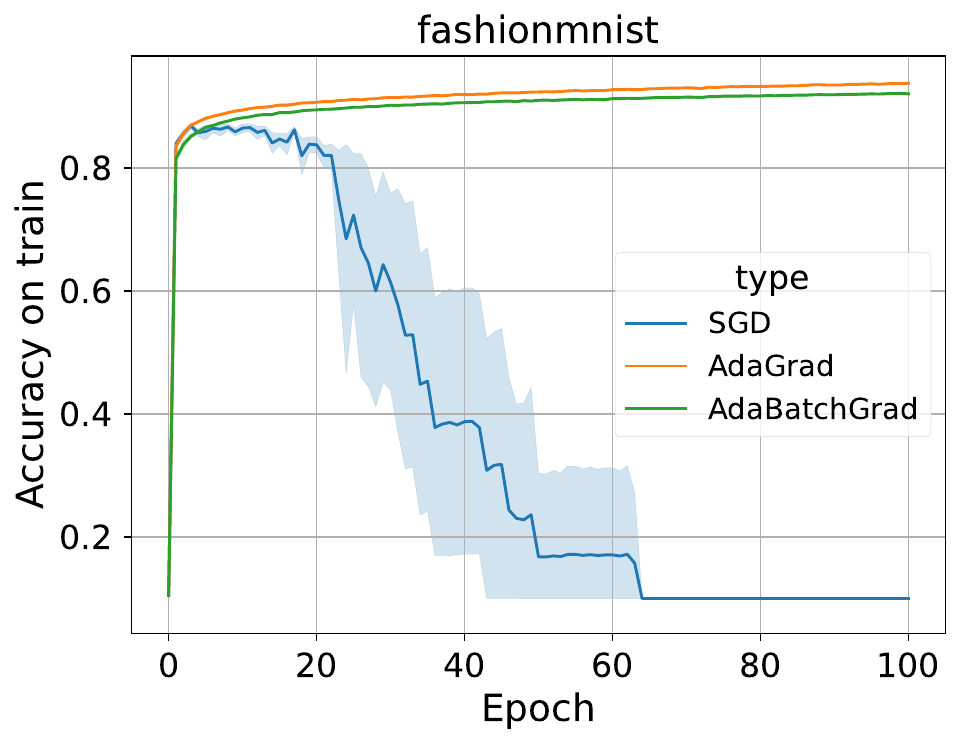}}{}
            \subf{\includegraphics[width=0.24\textwidth]{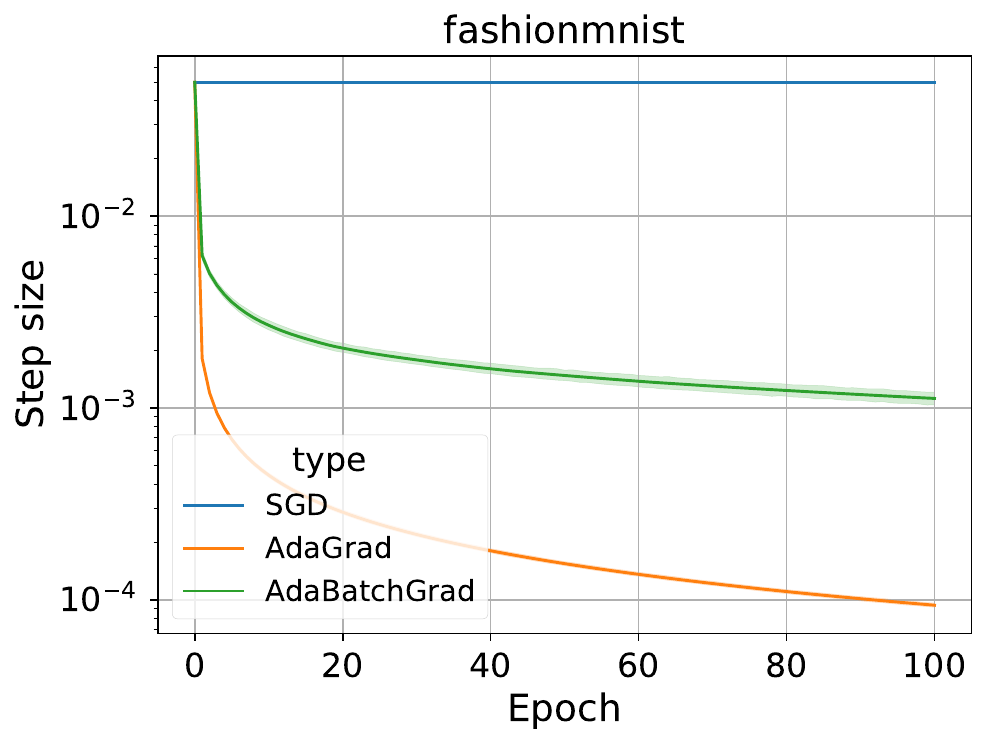}}{}
            \subf{\includegraphics[width=0.24\textwidth]{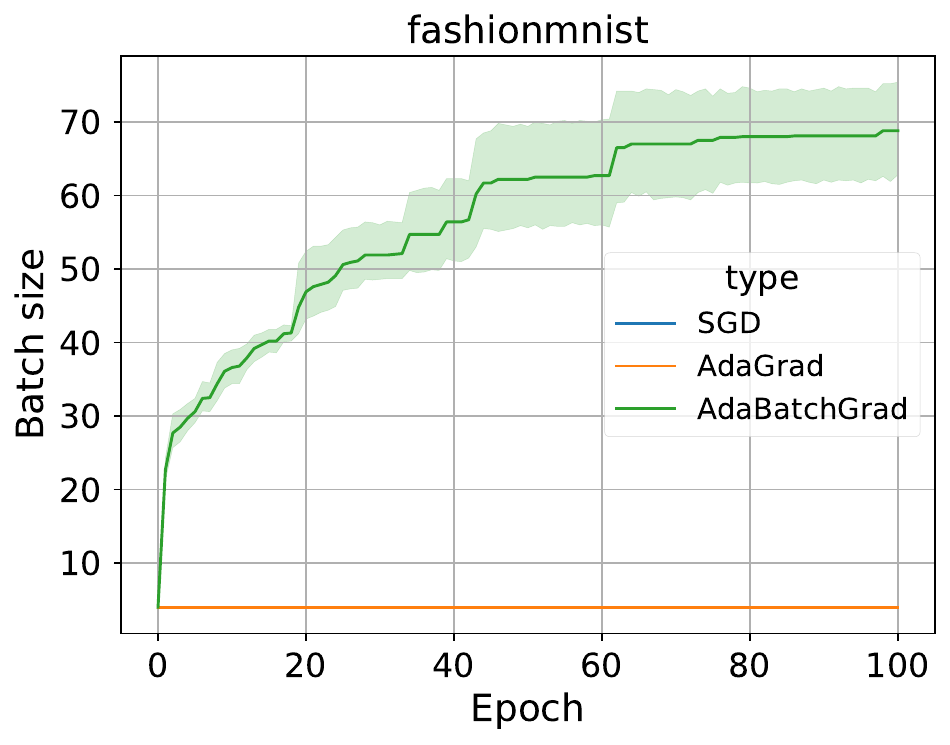}}{}
            \caption{\label{nn:task6}\AdaBatchGrad \  vs AdaGrad vs SGD for CNN on FashionMNIST. 
            We use both techniques and get an algorithm, that is adaptive both in step size and in batch size. This gives us more freedom in choosing the parameters of the algorithm and lets us find a tradeoff between step size and batch size.}
        \end{figure}

\section{Conclusion}
    In this paper, we present \AdaBatchGrad.
    It is an adaptive method, that uses AdaGrad step size adjustment strategy \eqref{eq:global step size} inside Adaptive Sampling Method (Algorithm \ref{alg:adaptive sampling}), that has adaptive batch size.
    The main issue of the Adaptive Sampling Method is that it uses approximated procedures to check, whether batch size needs to be increased.
    These tests can be inaccurate and have false results.
    Moreover, the Adaptive Sampling Method uses line search to choose step size.
    These two facts can cause divergence of the mentioned method.
    We show that the use of AdaGrad instead of line search prevents the method from diverging.
    Moreover, if on each iteration the chosen batch $S_k$ satisfies norm test \eqref{eq:norm test}, the method converges as $O(1/\e)$ regardless of noise intensity.

\bibliographystyle{plain}
\bibliography{bibliography,kamzolov_full}

\begin{thebibliography}{10}

\bibitem{abdukhakimov2023sania}
Farshed Abdukhakimov, Chulu Xiang, Dmitry Kamzolov, Robert Gower, and Martin Tak{\'a}{\v{c}}.
\newblock Sania: Polyak-type optimization framework leads to scale invariant stochastic algorithms.
\newblock {\em arXiv preprint arXiv:2312.17369}, 2023.

\bibitem{abdukhakimov2023stochastic}
Farshed Abdukhakimov, Chulu Xiang, Dmitry Kamzolov, and Martin Tak{\'a}{\v{c}}.
\newblock Stochastic gradient descent with preconditioned {P}olyak step-size.
\newblock {\em arXiv preprint arXiv:2310.02093}, 2023.

\bibitem{alfarra2020adaptive}
Motasem Alfarra, Slavomir Hanzely, Alyazeed Albasyoni, Bernard Ghanem, and Peter Richt{\'a}rik.
\newblock Adaptive learning of the optimal batch size of {SGD}.
\newblock {\em arXiv preprint arXiv:2005.01097}, 2020.

\bibitem{bollapragada2018adaptive}
Raghu Bollapragada, Richard Byrd, and Jorge Nocedal.
\newblock Adaptive sampling strategies for stochastic optimization.
\newblock {\em SIAM Journal on Optimization}, 28(4):3312--3343, 2018.

\bibitem{byrd2012sample}
Richard~H Byrd, Gillian~M Chin, Jorge Nocedal, and Yuchen Wu.
\newblock Sample size selection in optimization methods for machine learning.
\newblock {\em Mathematical Programming}, 134(1):127--155, 2012.

\bibitem{carmon2022making}
Yair Carmon and Oliver Hinder.
\newblock Making {SGD} parameter-free.
\newblock In Po-Ling Loh and Maxim Raginsky, editors, {\em Proceedings of Thirty Fifth Conference on Learning Theory}, volume 178 of {\em Proceedings of Machine Learning Research}, pages 2360--2389. PMLR, 02--05 Jul 2022.

\bibitem{cartis2018global}
Coralia Cartis and Katya Scheinberg.
\newblock Global convergence rate analysis of unconstrained optimization methods based on probabilistic models.
\newblock {\em Mathematical Programming}, 169:337--375, 2018.

\bibitem{CC01a}
Chih-Chung Chang and Chih-Jen Lin.
\newblock {LIBSVM}: A library for support vector machines.
\newblock {\em ACM Transactions on Intelligent Systems and Technology}, 2:27:1--27:27, 2011.
\newblock Software available at \url{http://www.csie.ntu.edu.tw/~cjlin/libsvm}.

\bibitem{chen2019convergence}
Xiangyi Chen, Sijia Liu, Ruoyu Sun, and Mingyi Hong.
\newblock On the convergence of a class of {A}dam-type algorithms for non-convex optimization.
\newblock In {\em 7th International Conference on Learning Representations, ICLR 2019}, 2019.

\bibitem{defazio2023learning}
Aaron Defazio and Konstantin Mishchenko.
\newblock Learning-rate-free learning by d-adaptation.
\newblock In Andreas Krause, Emma Brunskill, Kyunghyun Cho, Barbara Engelhardt, Sivan Sabato, and Jonathan Scarlett, editors, {\em Proceedings of the 40th International Conference on Machine Learning}, volume 202, pages 7449--7479. PMLR, 1 2023.

\bibitem{dinh2017sharp}
Laurent Dinh, Razvan Pascanu, Samy Bengio, and Yoshua Bengio.
\newblock Sharp minima can generalize for deep nets.
\newblock In {\em International Conference on Machine Learning}, pages 1019--1028. PMLR, 2017.

\bibitem{duchi2011adaptive}
John Duchi, Elad Hazan, and Yoram Singer.
\newblock Adaptive subgradient methods for online learning and stochastic optimization.
\newblock {\em Journal of Machine Learning Research}, 12(61):2121--2159, 2011.

\bibitem{dvinskikh2019adaptive}
Darina Dvinskikh, Aleksandr Ogaltsov, Alexander Gasnikov, Pavel Dvurechensky, Alexander Tyurin, and Vladimir Spokoiny.
\newblock Adaptive gradient descent for convex and non-convex stochastic optimization.
\newblock {\em arXiv preprint arXiv:1911.08380}, 2019.

\bibitem{gorbunov2020stochastic}
Eduard Gorbunov, Marina Danilova, and Alexander Gasnikov.
\newblock Stochastic optimization with heavy-tailed noise via accelerated gradient clipping.
\newblock {\em Advances in Neural Information Processing Systems}, 33:15042--15053, 2020.

\bibitem{gorbunov2020unified}
Eduard Gorbunov, Filip Hanzely, and Peter Richt{\'a}rik.
\newblock A unified theory of sgd: Variance reduction, sampling, quantization and coordinate descent.
\newblock In {\em International Conference on Artificial Intelligence and Statistics}, pages 680--690. PMLR, 2020.

\bibitem{gower2021stochastic}
Robert~M Gower, Aaron Defazio, and Michael Rabbat.
\newblock Stochastic {P}olyak stepsize with a moving target.
\newblock {\em arXiv preprint arXiv:2106.11851}, 2021.

\bibitem{gower2019sgd}
Robert~Mansel Gower, Nicolas Loizou, Xun Qian, Alibek Sailanbayev, Egor Shulgin, and Peter Richt{\'a}rik.
\newblock {SGD}: General analysis and improved rates.
\newblock In {\em International Conference on Machine Learning}, pages 5200--5209. PMLR, 2019.

\bibitem{hardt2016train}
Moritz Hardt, Ben Recht, and Yoram Singer.
\newblock Train faster, generalize better: Stability of stochastic gradient descent.
\newblock In {\em International Conference on Machine Learning}, pages 1225--1234. PMLR, 2016.

\bibitem{hashemi2014adaptive}
Fatemeh~S Hashemi, Soumyadip Ghosh, and Raghu Pasupathy.
\newblock On adaptive sampling rules for stochastic recursions.
\newblock In {\em Proceedings of the Winter Simulation Conference 2014}, pages 3959--3970. IEEE, 2014.

\bibitem{ilandarideva2023accelerated}
Sasila Ilandarideva, Anatoli Juditsky, Guanghui Lan, and Tianjiao Li.
\newblock Accelerated stochastic approximation with state-dependent noise.
\newblock {\em arXiv preprint arXiv:2307.01497}, 2023.

\bibitem{jiang2023adaptive}
Xiaowen Jiang and Sebastian~U Stich.
\newblock Adaptive {SGD} with {P}olyak stepsize and line-search: Robust convergence and variance reduction.
\newblock In {\em Thirty-seventh Conference on Neural Information Processing Systems}, 2023.

\bibitem{keskar2016large}
Nitish~Shirish Keskar, Dheevatsa Mudigere, Jorge Nocedal, Mikhail Smelyanskiy, and Ping Tak~Peter Tang.
\newblock On large-batch training for deep learning: Generalization gap and sharp minima.
\newblock {\em arXiv preprint arXiv:1609.04836}, 2016.

\bibitem{kingma2014adam}
Diederik Kingma and Jimmy Ba.
\newblock Adam: A method for stochastic optimization.
\newblock In {\em International Conference on Learning Representations (ICLR)}, San Diego, CA, USA, 2015.

\bibitem{lan2012optimal}
Guanghui Lan.
\newblock An optimal method for stochastic composite optimization.
\newblock {\em Mathematical Programming}, 133:365--397, 2012.

\bibitem{lecun2002efficient}
Yann LeCun, L{\'e}on Bottou, Genevieve~B Orr, and Klaus-Robert M{\"u}ller.
\newblock Efficient backprop.
\newblock In {\em Neural networks: Tricks of the trade}, pages 9--50. Springer, 2002.

\bibitem{li2022sp2}
Shuang Li, William~Joseph Swartworth, Martin Tak{\'a}{\v{c}}, Deanna Needell, and Robert~M. Gower.
\newblock {SP}2 : A second order stochastic {P}olyak method.
\newblock In {\em The Eleventh International Conference on Learning Representations}, 2023.

\bibitem{li2019convergence}
Xiaoyu Li and Francesco Orabona.
\newblock On the convergence of stochastic gradient descent with adaptive stepsizes.
\newblock In Kamalika Chaudhuri and Masashi Sugiyama, editors, {\em Proceedings of the Twenty-Second International Conference on Artificial Intelligence and Statistics}, volume~89, pages 983--992. PMLR, 4 2019.

\bibitem{loizou2021stochastic}
Nicolas Loizou, Sharan Vaswani, Issam~Hadj Laradji, and Simon Lacoste-Julien.
\newblock Stochastic {P}olyak step-size for {SGD}: An adaptive learning rate for fast convergence.
\newblock In Arindam Banerjee and Kenji Fukumizu, editors, {\em Proceedings of The 24th International Conference on Artificial Intelligence and Statistics}, volume 130, pages 1306--1314. PMLR, 10 2021.

\bibitem{nemirovski2009robust}
A~Nemirovski, A~Juditsky, G~Lan, and A~Shapiro.
\newblock Robust stochastic approximation approach to stochastic programming.
\newblock {\em SIAM Journal on Optimization}, 19:1574--1609, 2009.

\bibitem{nemirovski1983problem}
Arkadi~Semenovich Nemirovski and David~Borisovich Yudin.
\newblock {\em Problem Complexity and Method Efficiency in Optimization}.
\newblock A Wiley-Interscience publication. Wiley, 1983.

\bibitem{nesterov2015universal}
Yurii Nesterov.
\newblock Universal gradient methods for convex optimization problems.
\newblock {\em Mathematical Programming}, 152(1-2):381--404, 2015.

\bibitem{nesterov2018lectures}
Yurii Nesterov.
\newblock {\em Lectures on Convex Optimization}.
\newblock Springer Cham, 2 edition, 2018.

\bibitem{orabona2018scale}
Francesco Orabona and D{\'a}vid P{\'a}l.
\newblock Scale-free online learning.
\newblock {\em Theoretical Computer Science}, 716:50--69, 2018.

\bibitem{polyak1992acceleration}
Boris~T Polyak and Anatoli~B Juditsky.
\newblock Acceleration of stochastic approximation by averaging.
\newblock {\em SIAM Journal on Control and Optimization}, 30:838--855, 1992.

\bibitem{polyak1987introduction}
Boris~Teodorovich Polyak.
\newblock {\em Introduction to optimization}.
\newblock Optimization Software, Inc., Publications Division, 1987.

\bibitem{polyak1990new}
Boris~Teodorovich Polyak.
\newblock A new method of stochastic approximation type.
\newblock {\em Avtomatika i Telemekhanika}, 51:98--107, 1990.

\bibitem{rakhlin2013optimization}
Sasha Rakhlin and Karthik Sridharan.
\newblock Optimization, learning, and games with predictable sequences.
\newblock {\em Advances in Neural Information Processing Systems}, 26, 2013.

\bibitem{robbins1951stochastic}
Herbert Robbins and Sutton Monro.
\newblock A stochastic approximation method.
\newblock {\em The Annals of Mathematical Statistics}, 22:400--407, 1951.

\bibitem{shalev2007pegasos}
Shai Shalev-Shwartz, Yoram Singer, and Nathan Srebro.
\newblock Pegasos: Primal estimated sub-gradient solver for {SVM}.
\newblock In {\em Proceedings of the 24th international conference on Machine learning}, pages 807--814, 2007.

\bibitem{shallue2019measuring}
Christopher~J Shallue, Jaehoon Lee, Joseph Antognini, Jascha Sohl-Dickstein, Roy Frostig, and George~E Dahl.
\newblock Measuring the effects of data parallelism on neural network training.
\newblock {\em Journal of Machine Learning Research}, 20(112):1--49, 2019.

\bibitem{smith2018don}
Samuel~L Smith, Pieter-Jan Kindermans, Chris Ying, and Quoc~V Le.
\newblock Don't decay the learning rate, increase the batch size.
\newblock In {\em International Conference on Learning Representations}, 2018.

\bibitem{streeter2010less}
Matthew Streeter and H~Brendan McMahan.
\newblock Less regret via online conditioning.
\newblock {\em arXiv preprint arXiv:1002.4862}, 2010.

\bibitem{ward2020adagrad}
Rachel Ward, Xiaoxia Wu, and Leon Bottou.
\newblock Adagrad stepsizes: Sharp convergence over nonconvex landscapes.
\newblock {\em The Journal of Machine Learning Research}, 21(1):9047--9076, 2020.

\bibitem{xiao2017fashionmnist}
Han Xiao, Kashif Rasul, and Roland Vollgraf.
\newblock Fashion-{MNIST}: a novel image dataset for benchmarking machine learning algorithms, 2017.

\bibitem{xie2020constrained}
Yuchen Xie, Raghu Bollapragada, Richard Byrd, and Jorge Nocedal.
\newblock {Constrained and composite optimization via adaptive sampling methods}.
\newblock {\em IMA Journal of Numerical Analysis}, 05 2023.

\bibitem{zhang2019algorithmic}
Guodong Zhang, Lala Li, Zachary Nado, James Martens, Sushant Sachdeva, George Dahl, Chris Shallue, and Roger~B Grosse.
\newblock Which algorithmic choices matter at which batch sizes? insights from a noisy quadratic model.
\newblock {\em Advances in Neural Information Processing Systems}, 32, 2019.

\end{thebibliography}

\clearpage
\appendix

\section{Auxiliary information}
    In this section, we provide some additional information and lemmas, which are crucial for our theoretical results.
    
    First of all, we need to mention Jensen's inequality
    \begin{equation}\label{eq:Jensen}
        \varphi(\E x) \le \E \varphi(x),\ \text{if $\varphi$ is convex},
    \end{equation}
    and H\"older inequality
    \begin{equation}\label{eq:Holder}
        \E[|AB|] \le \left(\E [|A|^p]\right)^\frac{1}{p}\left(\E[|B|^q]\right)^\frac{1}{q},\ \frac{1}{p} + \frac{1}{q} = 1.
    \end{equation}

    Then, there are some technical lemmas from \cite{li2019convergence} and its modifications.
    \begin{lemma}[Lemma 3 in \cite{li2019convergence}]\label{lem:lemma 3}
        Let $f: \R^d \to \R$ be smooth \eqref{eq:smoothness} and its gradient approximations be unbiased \eqref{eq:unbiasedness}.
        Then, the iterates of SGD with step sizes $\eta_t$ satisfy the following inequality
        \begin{equation}\label{eq:lemma 3 result}
            \E \left[ \sum_{t= 1}^T \la \nabla f(\x_t), \mathbf{\eta}_t \nabla f(\x_t) \ra \right] \le f(\x_1) - f^{\ast} + \frac{L}{2} \E \left[ \sum_{t=1}^T \|\mathbf{\eta}_t  \nabla f_{S_t}(\x_t) \|^2 \right].
        \end{equation}
    \end{lemma}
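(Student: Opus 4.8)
The plan is to combine the $L$-smoothness descent inequality with the unbiasedness of the stochastic gradient, take conditional expectations, and then telescope the resulting one-step bound over $t = 1, \ldots, T$.

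First I would write the quadratic upper bound that follows from $L$-smoothness \eqref{eq:smoothness}: for consecutive iterates,
\begin{equation*}
    f(\x_{t+1}) \le f(\x_t) + \la \nabla f(\x_t), \x_{t+1} - \x_t \ra + \tfrac{L}{2} \|\x_{t+1} - \x_t\|^2.
\end{equation*}
Substituting the SGD step \eqref{eq:sgd equation}, i.e. $\x_{t+1} - \x_t = -\eta_t \nabla f_{S_t}(\x_t)$, gives
\begin{equation*}
    f(\x_{t+1}) \le f(\x_t) - \eta_t \la \nabla f(\x_t), \nabla f_{S_t}(\x_t) \ra + \tfrac{L}{2} \eta_t^2 \|\nabla f_{S_t}(\x_t)\|^2.
\end{equation*}

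The key observation is that $\eta_t$ from \eqref{eq:global step size} depends only on $\|\nabla f_{S_1}(\x_1)\|^2, \ldots, \|\nabla f_{S_{t-1}}(\x_{t-1})\|^2$ — by design the current stochastic gradient is excluded — so $\eta_t$ is measurable with respect to $S_1, \ldots, S_{t-1}$. Writing $\E_t[\cdot] \equiv \E[\cdot \mid S_1, \ldots, S_{t-1}]$, I can pull $\eta_t$ outside $\E_t$ and apply unbiasedness \eqref{eq:unbiasedness}, $\E_t[\nabla f_{S_t}(\x_t)] = \nabla f(\x_t)$, to obtain
\begin{equation*}
    \E_t \left[ \eta_t \la \nabla f(\x_t), \nabla f_{S_t}(\x_t) \ra \right] = \eta_t \|\nabla f(\x_t)\|^2 = \la \nabla f(\x_t), \eta_t \nabla f(\x_t) \ra.
\end{equation*}
Taking $\E_t$ of the smoothness inequality and rearranging then yields the one-step bound $\la \nabla f(\x_t), \eta_t \nabla f(\x_t) \ra \le f(\x_t) - \E_t[f(\x_{t+1})] + \tfrac{L}{2}\E_t[\|\eta_t \nabla f_{S_t}(\x_t)\|^2]$.

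Finally I would take full expectations, sum over $t = 1, \ldots, T$, and use the tower property together with the telescoping identity $\sum_{t=1}^T \E[f(\x_t) - f(\x_{t+1})] = f(\x_1) - \E[f(\x_{T+1})] \le f(\x_1) - f^{\ast}$, where the last inequality uses $f(\x_{T+1}) \ge f^{\ast}$. This reproduces \eqref{eq:lemma 3 result} exactly. The only genuinely delicate point is the measurability argument: since $\eta_t$ is itself random, the unbiasedness identity holds only conditionally on the past, and it is essential that the step-size rule \eqref{eq:global step size} omits the current gradient so that $\eta_t$ is $\sigma(S_1, \ldots, S_{t-1})$-measurable; everything else is routine telescoping.
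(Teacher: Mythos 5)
Your proof is correct and takes essentially the same route as the source: the paper states this lemma without proof, importing it from \cite{li2019convergence}, and the argument there is exactly your combination of the $L$-smoothness descent inequality, conditional unbiasedness $\E_t\left[\nabla f_{S_t}(\x_t)\right] = \nabla f(\x_t)$, and telescoping with $f(\x_{T+1}) \ge f^{\ast}$. You also correctly flag the one delicate point---that $\eta_t$ must be $\sigma(S_1, \ldots, S_{t-1})$-measurable so it can be pulled out of the conditional expectation---which is precisely why the step-size rule \eqref{eq:global step size} excludes the current stochastic gradient, as the paper itself emphasizes in its discussion of that rule.
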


    \begin{lemma}[Lemma 4 in \cite{li2019convergence}]\label{lem:upper bound on gradient norm squared}
        Let $f: \R^d \to \R$ be smooth \eqref{eq:smoothness},
        then
        \begin{equation}\label{eq:lemma 4 result}
            \|\nabla f(\w)\|^2 \le 2L (f(\w) - \min_y f(y)), \forall \w \in \R^d.
        \end{equation}
    \end{lemma}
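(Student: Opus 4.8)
The plan is to obtain \eqref{eq:lemma 4 result} from the classical ``descent lemma'' implied by $L$-smoothness \eqref{eq:smoothness}, combined with the trivial fact that $\min_y f(y)$ lower-bounds $f$ at every point. The underlying idea is that a single gradient step from $\w$ decreases $f$ by an amount proportional to $\|\nabla f(\w)\|^2$, and this guaranteed decrease cannot exceed the total suboptimality $f(\w) - \min_y f(y)$.

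First I would establish the quadratic upper bound: for all $\w, \w' \in \R^d$,
\begin{equation*}
    f(\w') \le f(\w) + \la \nabla f(\w), \w' - \w \ra + \frac{L}{2}\|\w' - \w\|^2.
\end{equation*}
This follows by writing $f(\w') - f(\w) = \int_0^1 \la \nabla f(\w + s(\w' - \w)), \w' - \w \ra\, ds$, subtracting off $\la \nabla f(\w), \w' - \w \ra$, and bounding the resulting integrand via \eqref{eq:smoothness} together with the Cauchy--Schwarz inequality.

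Next I would specialize to $\w' = \w - \frac{1}{L}\nabla f(\w)$, the point minimizing the right-hand side above over $\w'$. Substituting collapses the linear and quadratic contributions into a single coefficient $-\tfrac{1}{2L}$, giving $f(\w - \tfrac{1}{L}\nabla f(\w)) \le f(\w) - \tfrac{1}{2L}\|\nabla f(\w)\|^2$. Since $\min_y f(y) \le f(\w - \tfrac{1}{L}\nabla f(\w))$, chaining the two bounds yields $\min_y f(y) \le f(\w) - \tfrac{1}{2L}\|\nabla f(\w)\|^2$, and rearranging gives \eqref{eq:lemma 4 result}. I do not expect any genuine obstacle; the only step deserving a moment's care is the choice of test point, since $\w' = \w - \tfrac{1}{L}\nabla f(\w)$ is precisely the minimizer of the quadratic upper bound and is what produces the sharp constant $2L$. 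If $f$ merely has a finite lower bound rather than attaining its minimum, the identical argument applies with $\min_y f(y)$ read as the infimum.
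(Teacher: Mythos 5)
Your argument is correct and complete: the descent lemma $f(\w') \le f(\w) + \la \nabla f(\w), \w' - \w \ra + \frac{L}{2}\|\w' - \w\|^2$ evaluated at the test point $\w' = \w - \frac{1}{L}\nabla f(\w)$, chained with $\min_y f(y) \le f(\w')$, is exactly the standard proof of this fact, and it is the one given in the cited reference \cite{li2019convergence} (the paper itself states the lemma without reproving it). Your closing remark about replacing $\min_y f(y)$ by the infimum when the minimum is not attained is also apt, since only the lower bound, not its attainment, is used.
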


    \begin{lemma}[Modified Lemma 5 in \cite{li2019convergence}]\label{lem:technical lemma 5}
        If $x \ge 0$ and $x \le C \left(A + Bx^{1/2 + \tau}\right)$, then
        \begin{equation*}
            x \le \max \left\{ 2 AC; \left(2 C B\right)^\frac{1}{1/2 - \tau} \right\}
        \end{equation*}
    \end{lemma}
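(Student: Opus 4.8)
The plan is to prove the bound by a simple case analysis on which of the two summands, $A$ or $Bx^{1/2+\tau}$, dominates the right-hand side of the hypothesis. Throughout I assume $A,B,C>0$ and $0\le\tau<\frac12$, so that the exponent $\frac12+\tau\in[\frac12,1)$ and, crucially, $\frac12-\tau\in(0,\frac12]$ is strictly positive. If $x=0$ the conclusion holds trivially, so I may assume $x>0$ and divide by powers of $x$ freely.

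First I would treat the case $A\ge Bx^{1/2+\tau}$. Here the hypothesis gives $x\le C\bigl(A+Bx^{1/2+\tau}\bigr)\le C(A+A)=2AC$, so in particular $x\le\max\bigl\{2AC;(2CB)^{1/(1/2-\tau)}\bigr\}$. In the complementary case $A< Bx^{1/2+\tau}$, the hypothesis instead yields $x\le C\bigl(A+Bx^{1/2+\tau}\bigr)\le 2CB\,x^{1/2+\tau}$. Dividing both sides by $x^{1/2+\tau}>0$ gives $x^{\,1-(1/2+\tau)}=x^{1/2-\tau}\le 2CB$, and since $\frac12-\tau>0$ the map $t\mapsto t^{1/(1/2-\tau)}$ is increasing on $[0,\infty)$, so raising both sides to the power $\frac{1}{1/2-\tau}$ preserves the inequality and produces $x\le(2CB)^{1/(1/2-\tau)}$.

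Combining the two cases shows that $x$ is bounded above by the larger of the two candidate bounds, which is exactly the claimed $x\le\max\bigl\{2AC;(2CB)^{1/(1/2-\tau)}\bigr\}$. The argument involves no genuine obstacle; the only point requiring care is the direction of the inequality when inverting the exponent, and this is precisely what the standing assumption $\tau<\frac12$ guarantees by keeping $\frac12-\tau$ strictly positive. (Should one wish to avoid the sharp case split, an equivalent route is to note that $a+b\le 2\max\{a,b\}$ with $a=A$, $b=Bx^{1/2+\tau}$ and then bound each resulting term, but the two-case version above is the cleanest.)
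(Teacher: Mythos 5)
Your proof is correct and follows essentially the same argument as the paper's: a case split on whether $A \ge Bx^{1/2+\tau}$, giving $x \le 2AC$ in one case and, after dividing by $x^{1/2+\tau}$ and using $\tfrac12-\tau>0$, $x \le (2CB)^{1/(1/2-\tau)}$ in the other. The only difference is that you spell out the monotonicity justification for inverting the exponent, which the paper leaves implicit.
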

        \begin{proof}
            If $A \ge B x^{1/2 + \tau}$, then we have 
            \begin{equation*}
                x \le C \left(A + Bx^{1/2 + \tau}\right) \le 2AC.
            \end{equation*}
            If $A < B x^{1/2 + \tau}$, then 
            \begin{equation*}
                x \le C \left(A + Bx^{1/2 + \tau}\right) \le 2CBx^{1/2 + \tau} \Leftrightarrow x \le \left( 2CB \right)^\frac{1}{1/2 - \tau}.
            \end{equation*}
            Thus, we get the desired result.
        \end{proof}

    \begin{lemma}[Lemma 7 in \cite{li2019convergence}]\label{lem:(x+y)^p le x^p y^p}
        If $x, y \ge 0$ and $0 \le p \le 1$, then 
        \begin{equation}\label{eq:(x+y)^p <= x^p + y^p}
           (x + y)^p \le x^p + y^p.
        \end{equation}
    \end{lemma}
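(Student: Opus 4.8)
The plan is to reduce the claim to a one-variable inequality on the unit interval by exploiting the homogeneity of $t \mapsto t^p$. If $x = y = 0$ the inequality is immediate, so assume $s \equiv x + y > 0$. Dividing the desired bound $(x+y)^p \le x^p + y^p$ by $s^p$ and writing $a \equiv x/s$, $b \equiv y/s$, it suffices to prove that
\[
    a^p + b^p \ge 1 \qquad \text{whenever } a, b \ge 0 \text{ and } a + b = 1.
\]

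The key step is the pointwise bound $t^p \ge t$ for every $t \in [0, 1]$ and $0 \le p \le 1$. This holds because for $t \in (0, 1]$ we have $t^{p - 1} \ge 1$ (the exponent $p - 1$ is non-positive while the base is at most one), so $t^p = t \cdot t^{p - 1} \ge t$; the cases $t = 0$ and $p = 0$ are checked directly. Applying this to both $a$ and $b$ and using $a + b = 1$ gives $a^p + b^p \ge a + b = 1$, which is exactly the reduced inequality; multiplying back through by $s^p$ recovers the statement.

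I expect no genuine obstacle here: the whole content is the subadditivity (equivalently, concavity) of $t \mapsto t^p$ for $0 \le p \le 1$, and the only point requiring a moment of care is the endpoint behaviour at $t = 0$ and at $p \in \{0, 1\}$, where the inequality degenerates to an equality. An alternative route, should one prefer to avoid the normalization, is to fix $y \ge 0$ and set $g(x) \equiv x^p + y^p - (x + y)^p$; then $g(0) = 0$ and $g'(x) = p\,(x^{p-1} - (x+y)^{p-1}) \ge 0$, since $t \mapsto t^{p-1}$ is non-increasing for $p \le 1$, so $g$ is non-decreasing and hence non-negative on $[0, \infty)$.
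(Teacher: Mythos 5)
Your proof is correct. Note, though, that the paper you are being compared against does not actually prove this lemma: it is stated in the appendix verbatim as Lemma~7 of the cited reference (Li and Orabona), with the proof deferred there. The standard argument in that source is a one-line splitting: for $(x,y)\neq(0,0)$,
\begin{equation*}
    (x+y)^p = x\,(x+y)^{p-1} + y\,(x+y)^{p-1} \le x\cdot x^{p-1} + y\cdot y^{p-1} = x^p + y^p,
\end{equation*}
using that $t \mapsto t^{p-1}$ is non-increasing for $p \le 1$. Your normalization $a = x/s$, $b = y/s$ with the pointwise bound $t^p \ge t$ on $[0,1]$ is the same idea after dividing through by $(x+y)^{p}$ --- both arguments rest entirely on the monotonicity of $t^{p-1}$ --- so the two proofs are essentially identical in content, and your second (derivative) route is an equally valid variant. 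You were right to flag the degenerate cases: the only genuine care point is the $0^0$ convention when $p = 0$ or one argument vanishes, which you handle explicitly and which the one-line version quietly sidesteps by excluding $(0,0)$.
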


    \begin{lemma}[Modified Lemma 8 from \cite{li2019convergence}] \label{lem:modification of lemma 8}
        Let $f: \R^d \to \R$ be smooth \eqref{eq:smoothness}, $\eta_t$ satisfy \eqref{eq:global step size}, where $\alpha, \beta, \tau > 0$. 
        Then
        \begin{equation}\label{eq:lemma 8 result}
            \E \left[ \sum_{t = 1}^{T} \eta_t^2 \|\nabla f_{S_t}(\x_t)\|^2 \right] \le 2(1 + \omega^2) \E \left[ \sum_{t = 1}^T \eta_t^2 \| \nabla f(\x_t) \|^2 \right]
        \end{equation}
    \end{lemma}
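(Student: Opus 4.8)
The plan is to exploit the structural fact that the \AdaGrad\ step size \eqref{eq:global step size} is deliberately constructed so that $\eta_t$ depends only on the past batches $S_1, \dots, S_{t-1}$ and not on the current batch $S_t$. Writing $\E_t[\cdot] \equiv \E[\cdot \mid S_1, \dots, S_{t-1}]$, both $\eta_t$ and the iterate $\x_t$ are then measurable with respect to the conditioning, so $\eta_t^2$ behaves as a constant under $\E_t$ and may be pulled outside. This measurability is the single fact that makes the whole argument work, and it is precisely the reason the current stochastic gradient is excluded from the denominator in \eqref{eq:global step size}; no other property of $\alpha, \beta, \tau$ is needed.

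First I would split the mini-batch gradient around the true gradient and apply the elementary bound $\|u+v\|^2 \le 2\|u\|^2 + 2\|v\|^2$ to obtain
\begin{equation*}
    \|\nabla f_{S_t}(\x_t)\|^2 \le 2\|\nabla f_{S_t}(\x_t) - \nabla f(\x_t)\|^2 + 2\|\nabla f(\x_t)\|^2.
\end{equation*}
Taking $\E_t$ of both sides and invoking the norm test \eqref{eq:norm test}, the mini-batch error term is controlled by $\E_t[\|\nabla f_{S_t}(\x_t) - \nabla f(\x_t)\|^2] \le \omega^2 \|\nabla f(\x_t)\|^2$, which yields the conditional pointwise estimate
\begin{equation*}
    \E_t\left[\|\nabla f_{S_t}(\x_t)\|^2\right] \le 2(1 + \omega^2)\|\nabla f(\x_t)\|^2.
\end{equation*}

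Next, because $\eta_t^2$ is measurable under $\E_t$, multiplying through gives $\E_t[\eta_t^2 \|\nabla f_{S_t}(\x_t)\|^2] \le 2(1+\omega^2)\,\eta_t^2 \|\nabla f(\x_t)\|^2$. I would then sum over $t = 1, \dots, T$ and apply the tower property $\E[\cdot] = \E[\E_t[\cdot]]$ term by term to pass to the unconditional expectation, arriving at \eqref{eq:lemma 8 result} directly.

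The hard part is not computational but conceptual: the argument hinges entirely on justifying that $\eta_t$ may be treated as constant under $\E_t$. Had the step size included $\|\nabla f_{S_t}(\x_t)\|^2$ in its denominator (as in the vanilla \AdaGrad\ update), this pull-out step would fail, introducing a bias that breaks the clean factorization. The only other point to verify carefully is that the norm test is applied conditionally on the past, at the fixed point $\x = \x_t$, which is consistent with Definition \ref{def:norm test} since $\x_t$ is determined by $S_1, \dots, S_{t-1}$.
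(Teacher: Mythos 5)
Your proposal is correct and follows essentially the same route as the paper's proof: the decomposition of $\nabla f_{S_t}(\x_t)$ around $\nabla f(\x_t)$ with the bound $\|u+v\|^2 \le 2\|u\|^2 + 2\|v\|^2$, the norm test \eqref{eq:norm test} applied under $\E_t[\cdot] = \E[\cdot \mid S_1, \ldots, S_{t-1}]$, and the tower property to pass to the full expectation. Your explicit emphasis that $\eta_t$ and $\x_t$ are measurable with respect to the past batches --- which is what legitimizes pulling $\eta_t^2$ out of $\E_t$ --- is exactly the fact the paper's proof uses implicitly when it inserts $\E_t$ inside the product with $\eta_t^2$.
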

    \begin{proof}
        Denote $\E_t[\cdot] \equiv \E[\cdot|S_1, ..., S_{t-1}]$.
        \begin{align*}
            \E \left[ \sum_{t = 1}^T \eta_t^2 \|\nabla f_{S_t}(\x_t)\|^2 \right] 
            &\le \E \left[ \sum_{t = 1}^T \eta_t^2 \left( \|\nabla f_{S_t}(\x_t) - \nabla f(\x_t) \| + \|\nabla f(\x_t)\|  \right)^2 \right] \\
            &\le 2 \sum_{t=1}^T \E \left[ \eta_t^2 ( \|\nabla f_{S_t}(\x_t) - \nabla f(\x_t)  \|^2 + \| \nabla f(\x_t) \|^2) \right] \\
            &= 2 \sum_{t=1}^T \E \left[ \eta_t^2 \left( \E_t \left[\|\nabla f_{S_t}(\x_t) - \nabla f(\x_t)  \|^2 \right] + \E_t \left[\| \nabla f(\x_t) \|^2 \right]\right) \right].
        \end{align*}
        Then, from \eqref{eq:norm test} and the fact that $\E_t\left[ \|\nabla f(\x_t) \|^2 \right] = \|\nabla f(\x_t) \|^2$ we get
        \begin{align*}
            \E \left[ \sum_{t = 1}^T \eta_t^2 \|\nabla f_{S_t}(\x_t)\|^2 \right]
            &\overset{\eqref{eq:norm test}}{\le} 2(1 + \omega^2) \E \left[ \sum_{i=1}^T \eta_t^2 \|\nabla f(\w_t) \|^2 \right].
        \end{align*}
    \end{proof}

\section{Additional plots for FashionMNIST}\label{sec:additional plots for fashionmnist}
    In this section, we provide additional plots for FashionMNIST.
    We present accuracy, function loss, and gradient norm.
    
        \begin{figure}[H]
            \centering
            \subf{\includegraphics[width=0.49\textwidth]{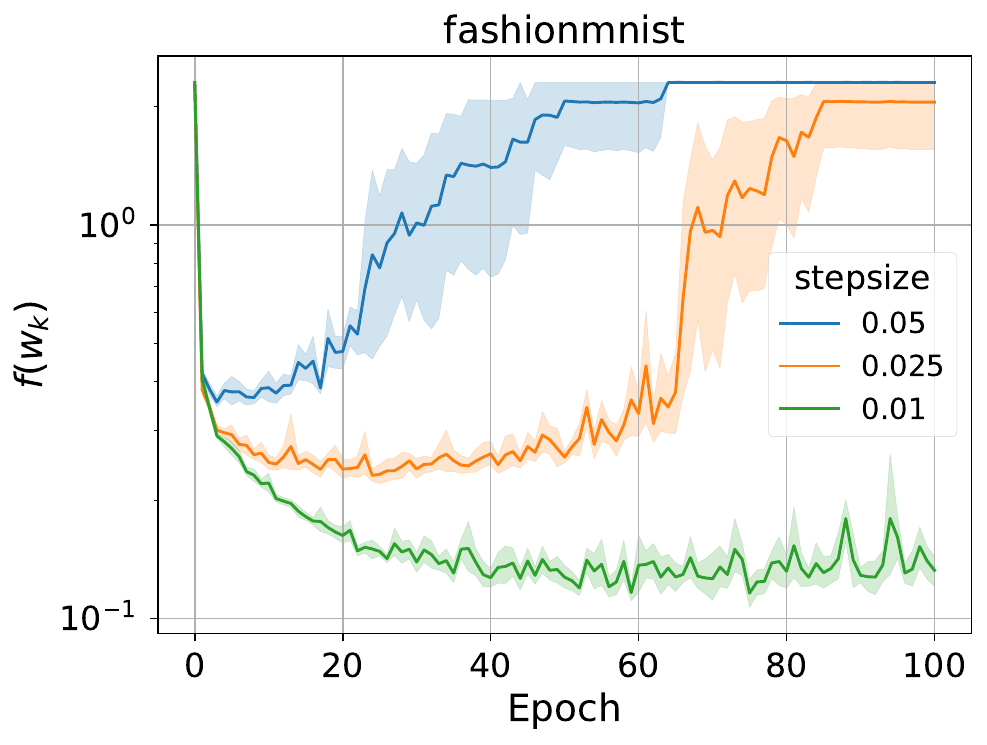}}{}
            \subf{\includegraphics[width=0.49\textwidth]{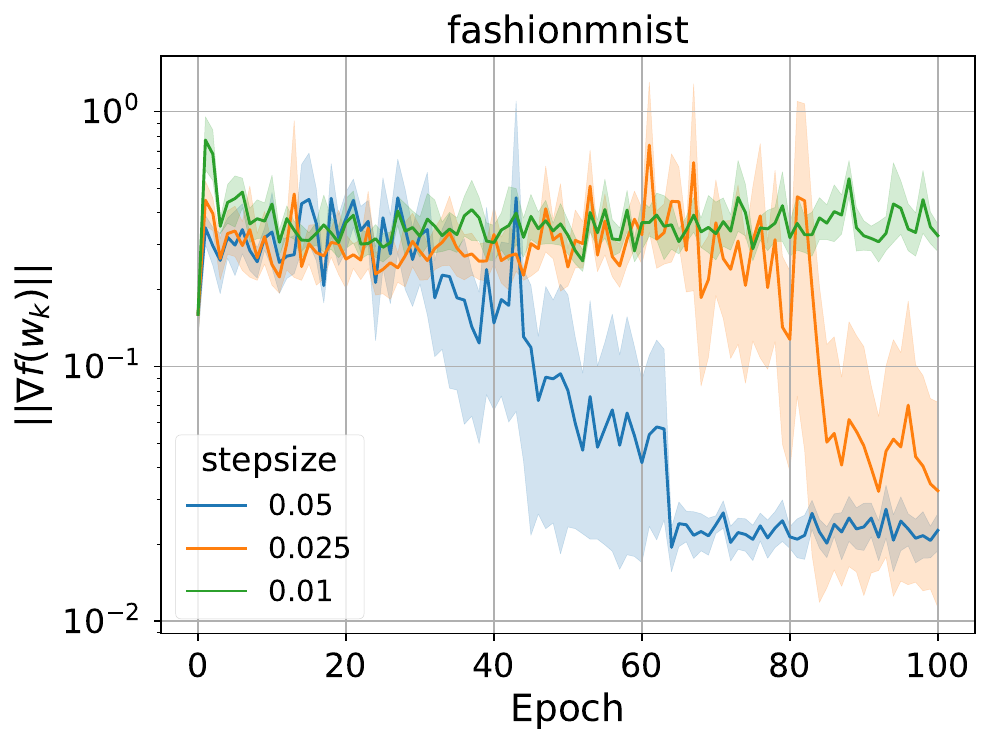}}{}
            \subf{\includegraphics[width=0.49\textwidth]{pics/fashionmnist/task1_train_accuracy.pdf}}{}
            \caption{SGD with different step sizes for CNN on FashionMNIST. Smaller step size allows algorithm converge to better solution.}
        \end{figure}

        \begin{figure}[H]
            \centering
            \subf{\includegraphics[width=0.5\textwidth]{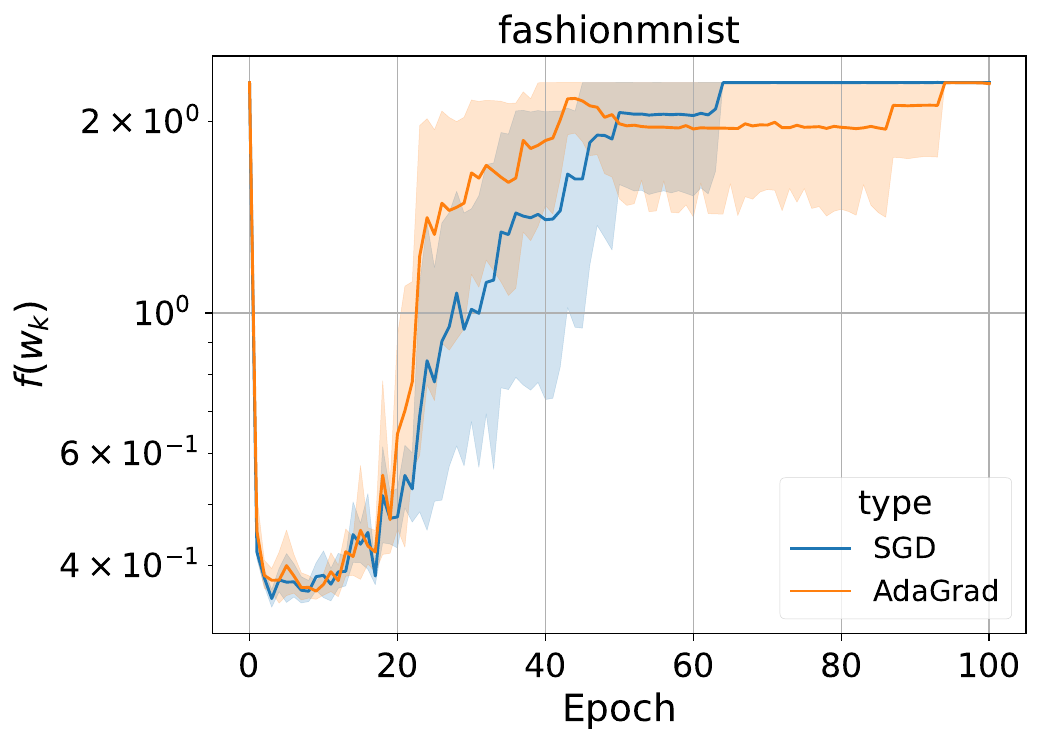}}{}
            \subf{\includegraphics[width=0.47\textwidth]{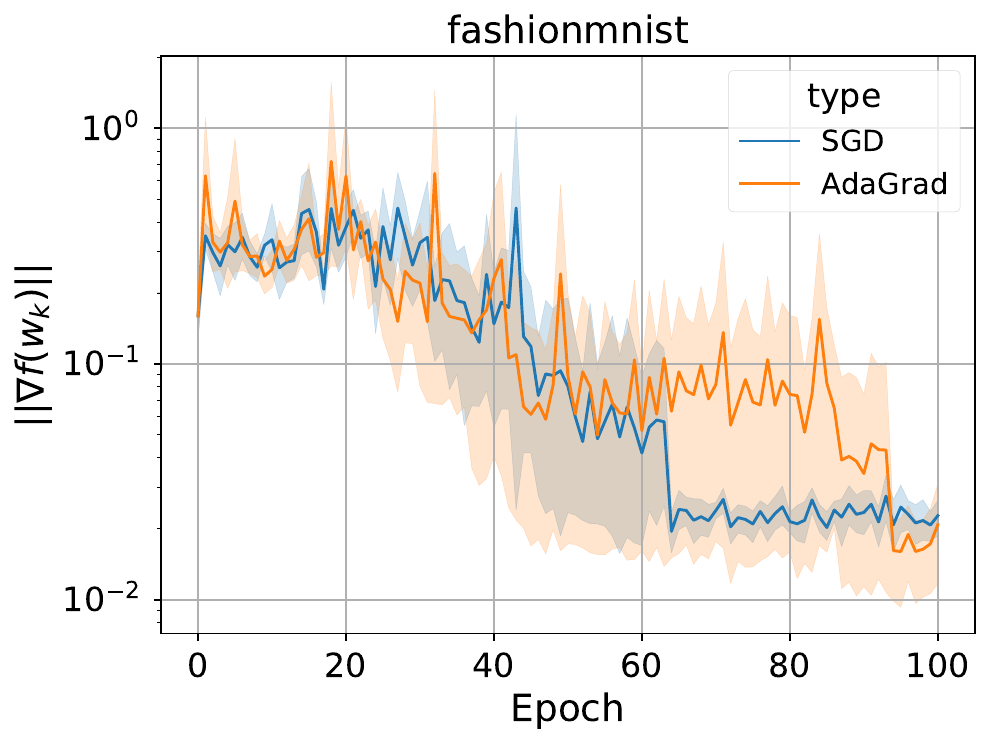}}{}
            \subf{\includegraphics[width=0.47\textwidth]{pics/fashionmnist/task2_train_accuracy.pdf}}{}
            \subf{\includegraphics[width=0.5\textwidth]{pics/fashionmnist/task2_step_size.pdf}}{}
            \caption{AdaGrad vs SGD with large $\beta$ for CNN on FashionMNIST. Here we choose $\beta$ so big that it majorizes the sum of gradient norms in the denominator of \eqref{eq:global step size}, and we have the same behavior as regular SGD. The step size is almost constant.}
        \end{figure}
        
        \begin{figure}[H]
            \centering
            \subf{\includegraphics[width=0.49\textwidth]{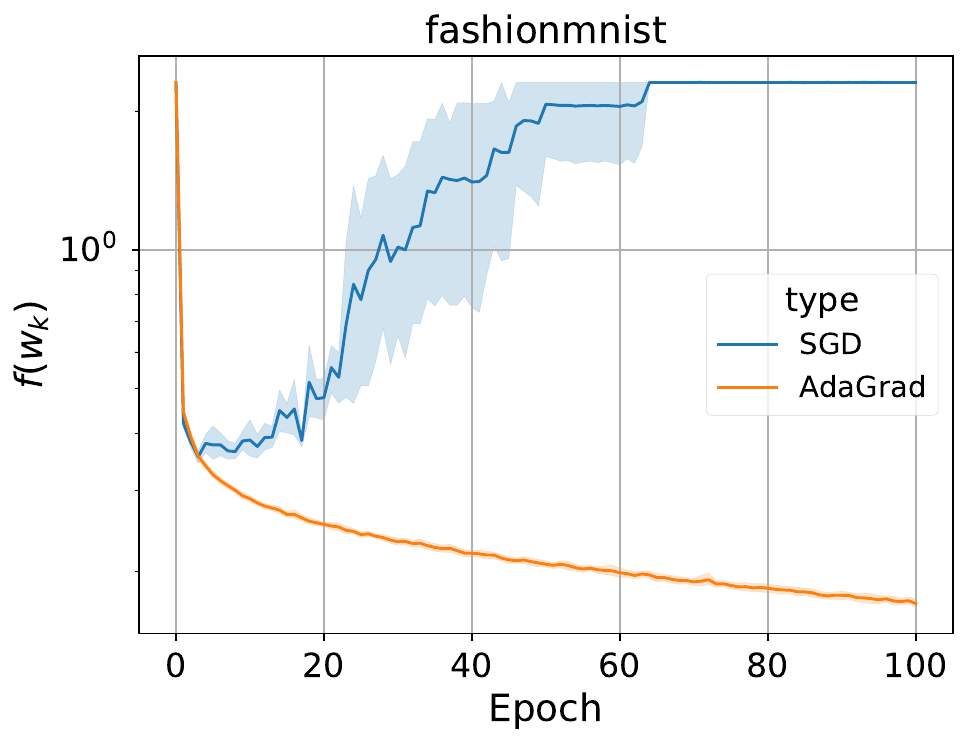}}{}
            \subf{\includegraphics[width=0.49\textwidth]{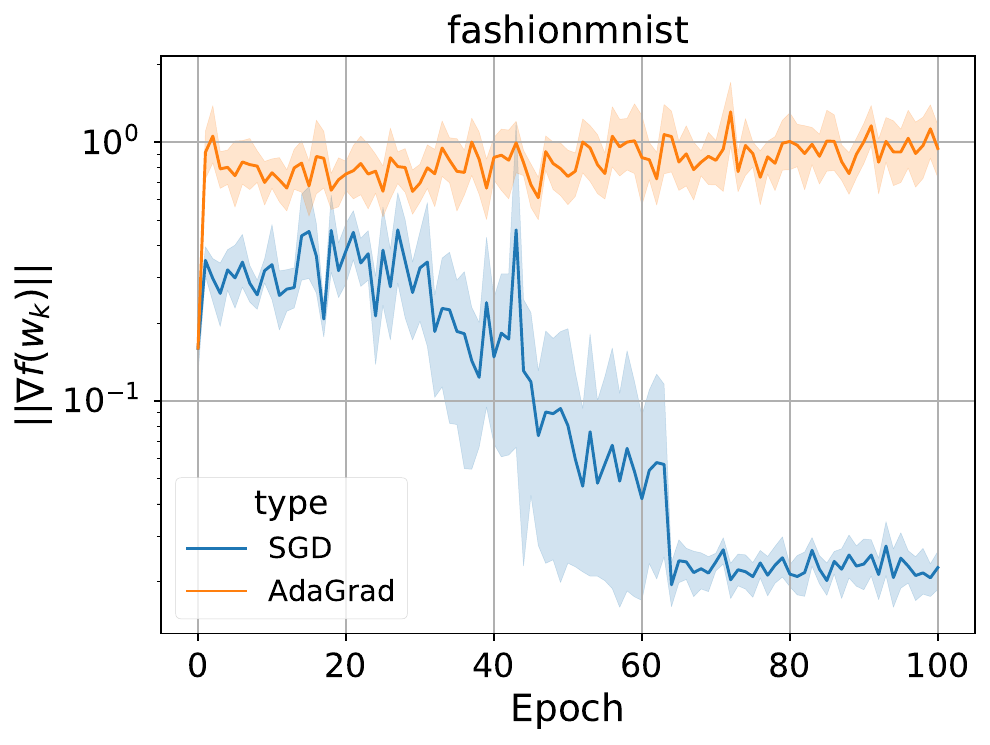}}{}
            \subf{\includegraphics[width=0.49\textwidth]{pics/fashionmnist/task3_train_accuracy.pdf}}{}
            \subf{\includegraphics[width=0.49\textwidth]{pics/fashionmnist/task3_step_size.pdf}}{}
            \caption{AdaGrad vs SGD with small $\beta$ for CNN on FashionMNIST. Here we choose little $\beta$ so that it allows the sum of gradient norms in the denominator of \eqref{eq:global step size} to change step size with time. As you can see, now step size gradually decreases, while approaching better accuracy.}
        \end{figure}

        \begin{figure}[H]
            \centering
            \subf{\includegraphics[width=0.49\textwidth]{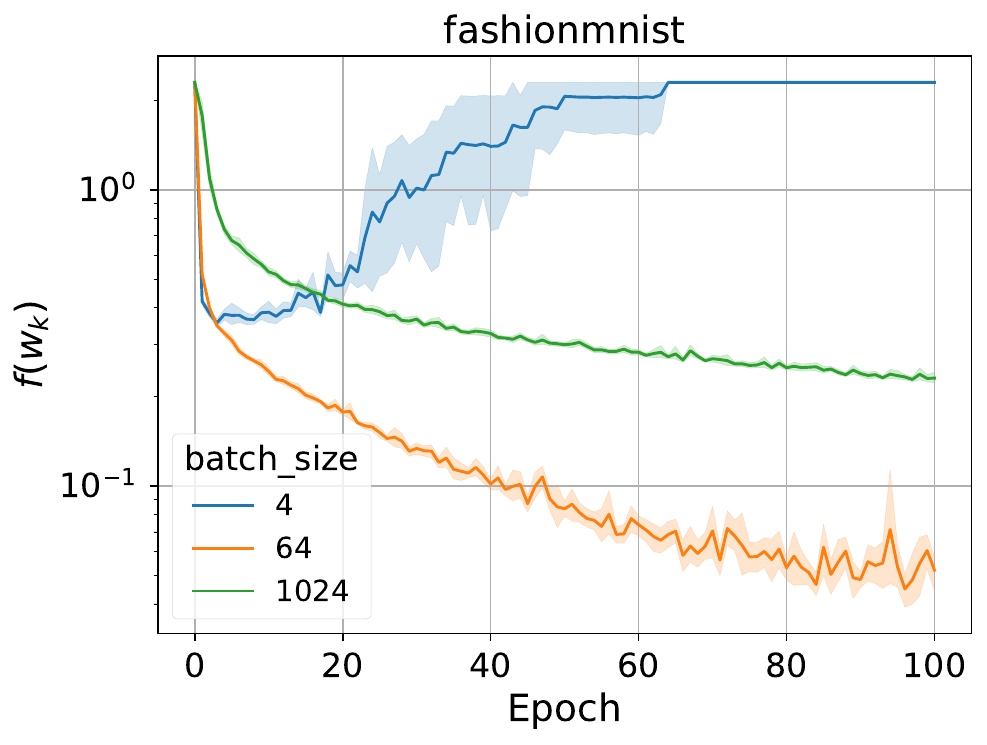}}{}
            \subf{\includegraphics[width=0.49\textwidth]{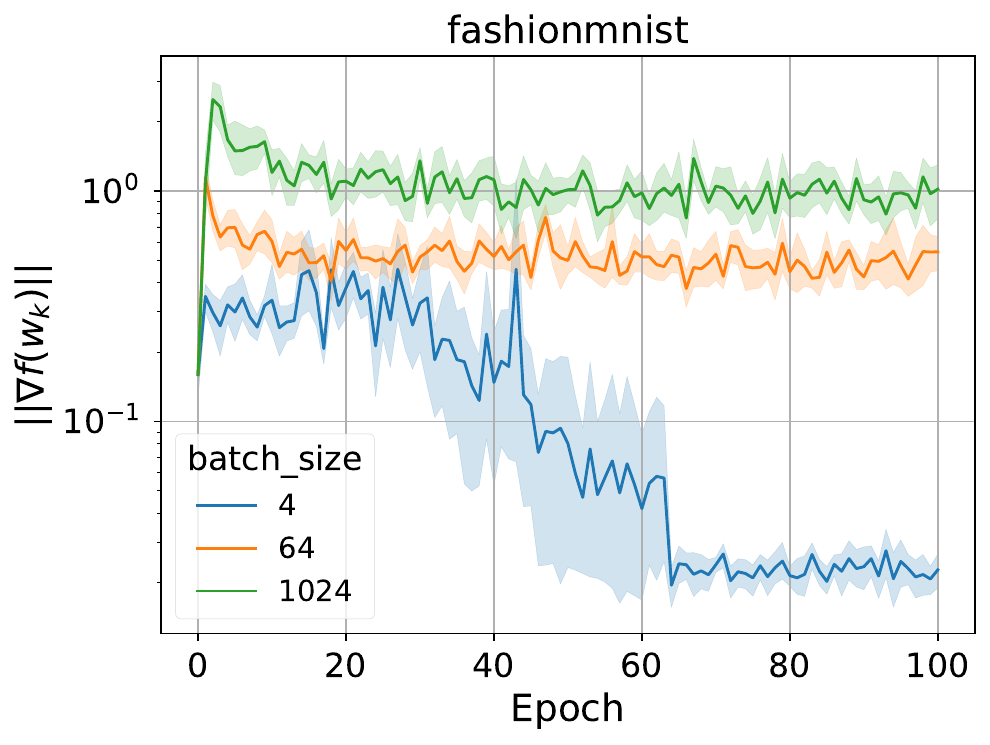}}{}
            \subf{\includegraphics[width=0.49\textwidth]{pics/fashionmnist/task4_train_accuracy.pdf}}{}
            \caption{SGD with different batch sizes for CNN on FashionMNIST.
            A larger batch size allows the algorithm to converge to a better confusion region.}
        \end{figure}

        \begin{figure}[H]
            \centering
            \subf{\includegraphics[width=0.49\textwidth]{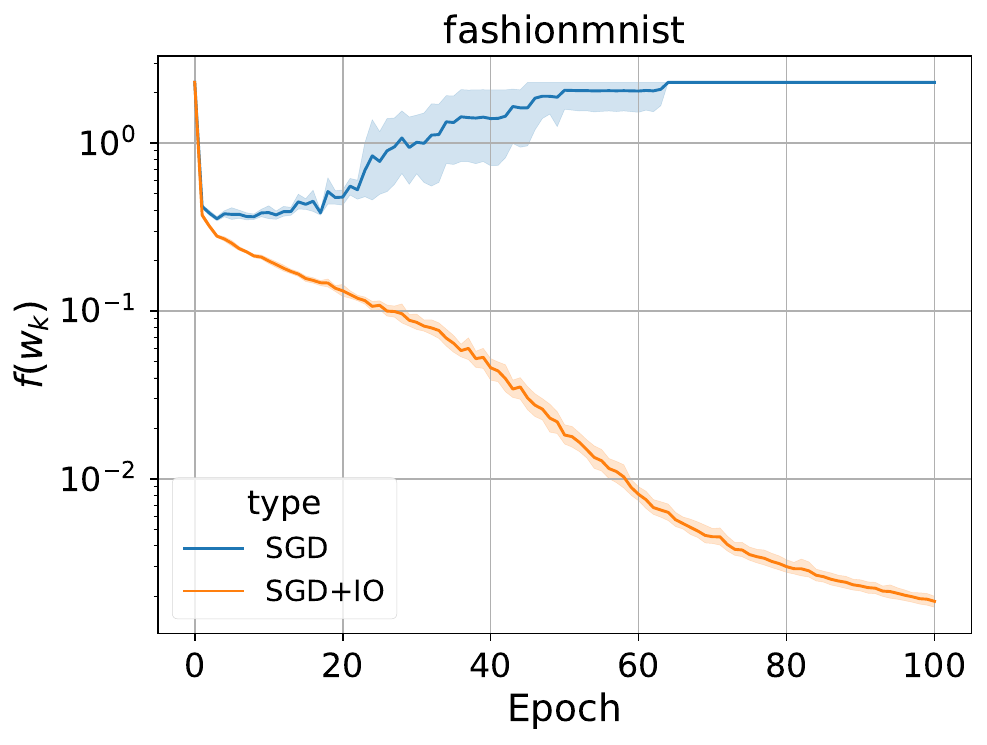}}{}
            \subf{\includegraphics[width=0.49\textwidth]{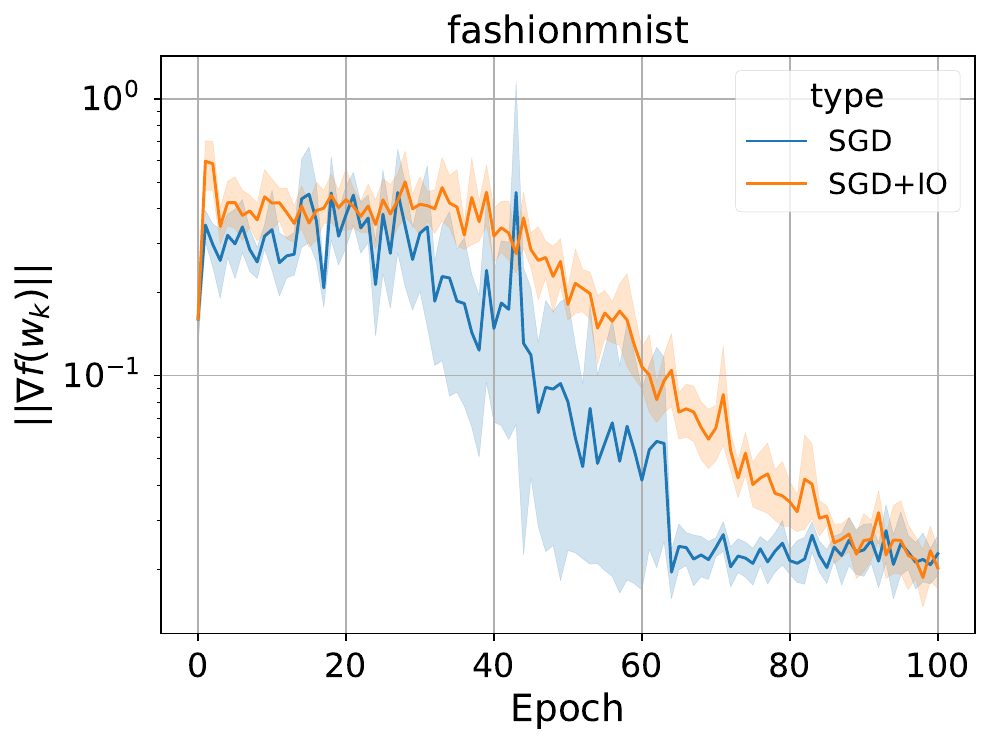}}{}
            \subf{\includegraphics[width=0.49\textwidth]{pics/fashionmnist/task5_train_accuracy.pdf}}{}
            \subf{\includegraphics[width=0.49\textwidth]{pics/fashionmnist/task5_batch_size.pdf}}{}
            \caption{SGD vs SGD + Inner test + Ortho test (IO) for CNN on FashionMNIST. Adaptivity in batch size makes the algorithm adjust batch size automatically as long as it converges to a minimum. Despite the fact that batch size grows up to a very small part of the whole dataset, it allows us to significantly improve the results.}
        \end{figure}

        \begin{figure}[H]
            \centering
            \subf{\includegraphics[width=0.49\textwidth]{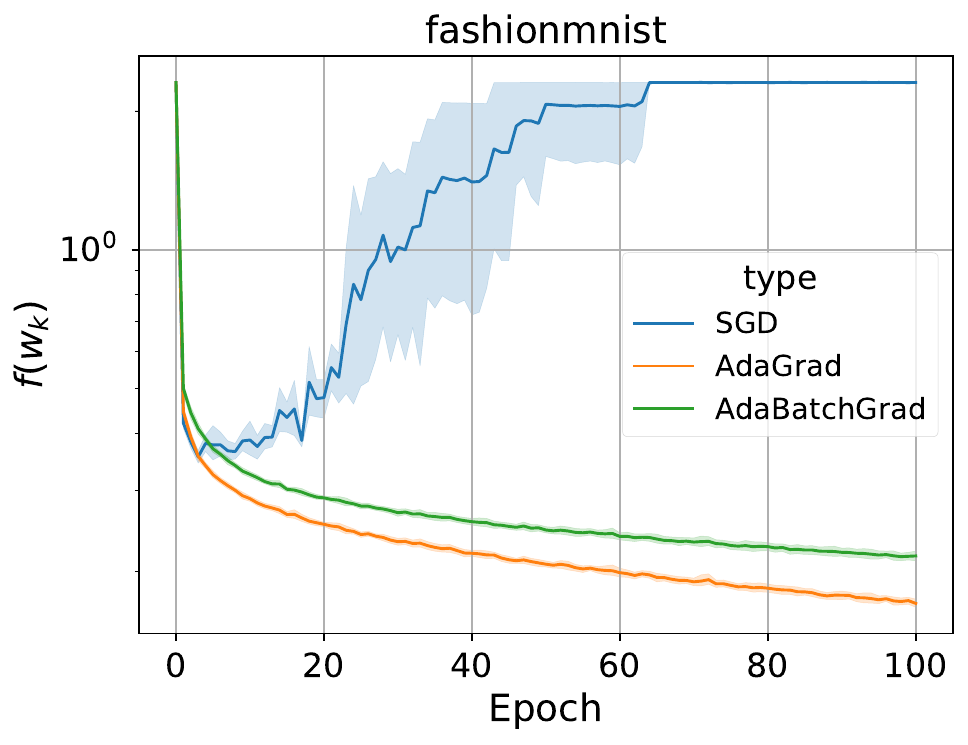}}{}
            \subf{\includegraphics[width=0.49\textwidth]{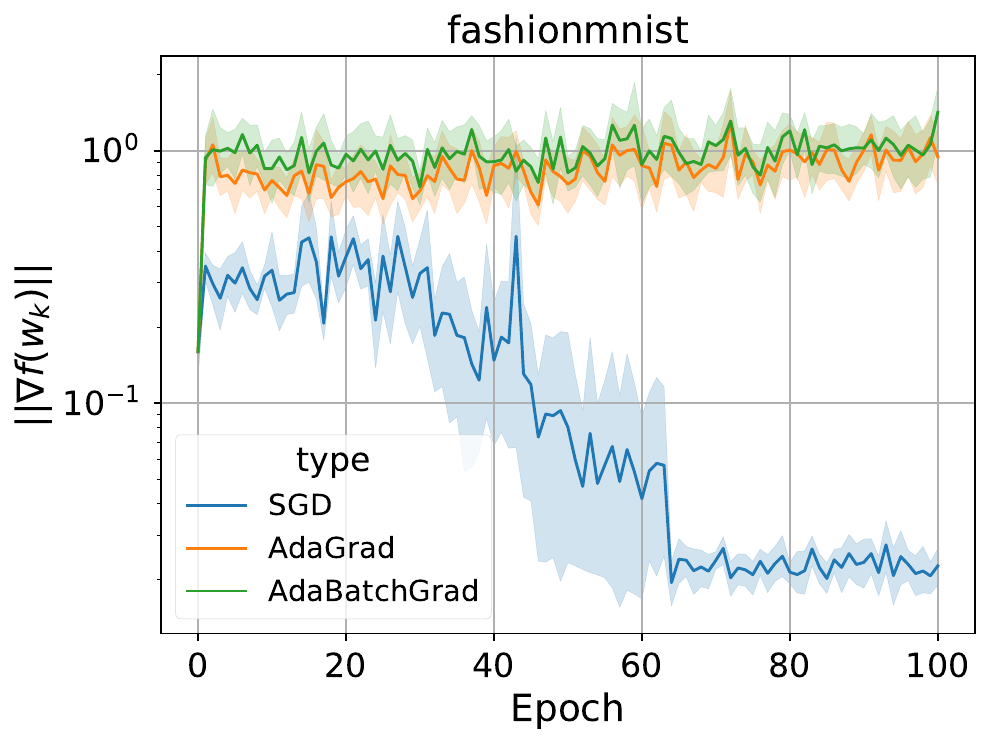}}{}
            \subf{\includegraphics[width=0.49\textwidth]{pics/fashionmnist/task6_train_accuracy.pdf}}{}
            \subf{\includegraphics[width=0.49\textwidth]{pics/fashionmnist/task6_batch_size.pdf}}{}
            \subf{\includegraphics[width=0.49\textwidth]{pics/fashionmnist/task6_step_size.pdf}}{}
            \caption{\AdaBatchGrad \  vs AdaGrad vs SGD for CNN on FashionMNIST. 
            We use both techniques and get an algorithm, that is adaptive both in step size and in batch size. This gives us more freedom in choosing the parameters of the algorithm and lets us find a tradeoff between step size and batch size.}
        \end{figure}

\end{document}